\newcommand{\ouralg}{EC-L2O\xspace}
\newcommand{\ouralgcal}{MLA-ROBD\xspace}
\newcommand{\ouralgpro}{EC-L2O\xspace}
\newcommand{\mx}[1]{\begin{bmatrix} #1 \end{bmatrix}}
\newcommand{\revise}[1]{{{#1}}}
\theoremstyle{plain}
\newtheorem{theorem}{Theorem}[section]
\newtheorem{lemma}[theorem]{Lemma}
\theoremstyle{definition}
\newtheorem{definition}{Definition}
\keywords{Learning to optimize, Online algorithm, Online convex optimization}
\begin{document}

\title{Expert-Calibrated Learning for Online Optimization with Switching Costs}\thanks{*Pengfei Li and Jianyi Yang contributed equally. This work was supported in part by the U.S. NSF under grants CNS-1551661 and CNS-2007115.}

\author{Pengfei Li}
\email{pli081@ucr.edu}
\affiliation{%
  \institution{University of California, Riverside}
  \streetaddress{900 University Ave.}
  \city{Riverside}
  \state{California}
  \postcode{92521}
  \country{United States}
}

\author{Jianyi Yang}
\email{jyang239@ucr.edu}
\affiliation{%
	\institution{University of California, Riverside}
	\streetaddress{900 University Ave.}
	\city{Riverside}
	\state{California}
	\postcode{92521}
    \country{United States}
}

\author{Shaolei Ren}
\email{sren@ece.ucr.edu}
\affiliation{%
	\institution{University of California, Riverside}
	\streetaddress{900 University Ave.}
	\city{Riverside}
	\state{California}
	\postcode{92521}
    \country{United States}
}

\renewcommand{\shortauthors}{Pengfei Li et al.}

\begin{abstract}
We study online convex optimization with switching costs, a practically important but also extremely challenging problem due to the lack of complete offline information.
By tapping into
the power of machine learning (ML) based optimizers, ML-augmented online algorithms
(also referred to as expert calibration in this paper)
have been emerging as state of the art, with provable worst-case performance guarantees.
Nonetheless,
by using the standard
practice of training an ML model
as a standalone optimizer
and plugging it
into an ML-augmented algorithm, the average cost performance can be highly unsatisfactory.
In order to address the ``\emph{how to learn}''  challenge, we propose
\ouralg (expert-calibrated learning to optimize), which
trains
an ML-based optimizer by explicitly taking into account the downstream expert calibrator.
To accomplish this, we propose a new differentiable
expert calibrator that generalizes regularized online balanced descent and
offers a provably better
competitive ratio than pure ML predictions when the prediction error is large.
For training, our loss function is a weighted sum of two different losses --- one
minimizing the average ML prediction error for better robustness,
 and the other one minimizing the post-calibration
 average cost.
 We also provide theoretical analysis for \ouralg, highlighting
that expert calibration can be even beneficial for the average cost performance
and that the
high-percentile tail ratio of the cost achieved by \ouralg to that of the offline optimal oracle
(i.e., tail cost ratio) can be bounded.
Finally, we test \ouralg by running simulations for  sustainable datacenter demand response.
Our results demonstrate that \ouralg can empirically achieve a lower average cost as well
as a lower competitive ratio than the existing baseline algorithms.
\end{abstract}

\maketitle

\section{Introduction}

Many real-world problems,
such as energy scheduling in smart grids,
resource management in clouds and rate adaptation in video streaming \cite{SOCO_DynamicRightSizing_Adam_Infocom_2011_LinWiermanAndrewThereska,SOCO_Memory_Adam_NIPS_2020_NEURIPS2020_ed46558a,SOCO_OBD_Niangjun_Adam_COLT_2018_DBLP:conf/colt/ChenGW18,SOCO_OnlineMetric_UntrustedPrediction_ICML_2020_DBLP:conf/icml/AntoniadisCE0S20,SOCO_Prediction_Error_RHIG_NaLi_Harvard_NIPS_2020_NEURIPS2020_a6e4f250}, can be formulated
as online convex optimization where an
agent makes actions
online based on
sequentially revealed information. The goal of the agent is
to
minimize the sum of convex costs over an episode
of multiple time steps. In addition, another
crucial concern is that changing
actions too abruptly is highly undesired in most practical applications. For example, frequently turning
on and off servers in data centers can decrease
the server lifespan and even create dangerous
situations such as
high inrush current  \cite{Oversubscription_BatteryCoordinated_Qingyuan_Facebook_MICRO_2020_mallacoordinated2020},
and a robot cannot arbitrarily change
its position due to velocity constraints in navigation tasks \cite{Robot_VelocityConstraint_IEEE_2008_4543969}.
Consequently, such action impedance has motivated
an emerging
area of online convex optimization with switching
costs, where the switching cost measures
the degree of action changes across two consecutive time steps and acts a regularizer
 to make the online actions smoother.

While both empirically and theoretically important, online convex optimization with switching costs is extremely challenging. The key reason is that
the optimal actions at different time steps are highly dependent on each other and hence require the complete offline future information, which is nonetheless lacking in practice \cite{SOCO_Prediction_Error_Niangjun_Sigmetrics_2016_10.1145/2964791.2901464,SOCO_CapacityScalingAdaptiveBalancedGradient_Gatech_MAMA_2021_Sigmetrics_2021_10.1145/3512798.3512808}.

To address the lack of offline future information
(i.e., future cost functions or  parameters),
the set of algorithms that make actions
only using online information have been quickly
expanding. For example, some prior studies have
considered
online gradient descent (OGD) \cite{OGD_zinkevich2003online,SOCO_Prediction_Error_Meta_ZhenhuaLiu_SIGMETRICS_2019_10.1145/3322205.3311087}, online balanced descent (OBD) \cite{SOCO_OBD_Niangjun_Adam_COLT_2018_DBLP:conf/colt/ChenGW18}, and regularized OBD (R-OBD) \cite{SOCO_OBD_R-OBD_Goel_Adam_NIPS_2019_NEURIPS2019_9f36407e}.
Additionally, some other studies have also incorporated machine learning (ML) prediction
of future cost parameters  into the algorithm design under various settings.
Notable examples include
 receding horizon control (RHC) \cite{SOCO_Prediction_Error_Meta_ZhenhuaLiu_SIGMETRICS_2019_10.1145/3322205.3311087}
committed horizon control (CHC) \cite{SOCO_Prediction_Error_Niangjun_Sigmetrics_2016_10.1145/2964791.2901464},
receding horizon gradient descent (RHGD) \cite{Receding_Horizon_GD_li2020online}
and
adaptive balanced capacity scaling (ABCS) \cite{SOCO_CapacityScalingAdaptiveBalancedGradient_Gatech_MAMA_2021_Sigmetrics_2021_10.1145/3512798.3512808}.
Typically, these algorithms are developed based on classic optimization frameworks and offer guaranteed performance robustness in terms of the competitive ratio, which measures the
worst-case ratio of the cost achieved by an online algorithm to the offline oracle's cost.
Nonetheless, despite the theoretical guarantee, a bounded worst-case competitive ratio
does not necessarily translate into decreased average cost in most typical cases.

By exploiting the abundant historical data available
in many practical applications (e.g., server management data center
and energy scheduling in smart grids), the power of ML can go much further beyond  simply predicting the future cost parameters. Indeed,  state-of-the-art \emph{learning-to-optimize} (L2O) techniques
can even substitute conventional optimizers
and directly predict online actions \cite{L2O_Combinatorial_Optimization_Survey_Yoshua_2021_BENGIO2021405,L2O_LearnWithoutGraident_ICML_2017_l2o_gradeint_descent_chen_ICML,L2O_NewDog_OldTrick_Google_ICLR_2019}. Thus, although it still remains
under-explored in the context of online convex optimization with switching costs,
the idea of using statistical ML to discover the otherwise hidden
mapping from online information
to actions is very natural and promising.

While
ML-based optimizers can often result in a low average cost in typical cases, its drawback is also significant --- lack of performance robustness.
Crucially, albeit rare, some input instances can be arbitrarily ``bad'' for
a pre-trained ML-based optimizer and empirically
result in a high competitive ratio. This
is a fundamental limitation of any statistical ML models, and can be attributed to several factors, including certain testing inputs drawn from a very different distribution than the training distribution, inadequate ML model capacity, ill-trained ML model weights, among others.

To provide worst-case performance guarantees and potentially leverage the advantage of an ML-based optimizer,  ML-augmented algorithm designs have been recently considered in the context of online convex optimization with switching costs \cite{SOCO_OnlineMetric_UntrustedPrediction_ICML_2020_DBLP:conf/icml/AntoniadisCE0S20,SOCO_OnlineOpt_UntrustedPredictions_Switching_Adam_arXiv_2022,SOCO_ML_ChasingConvexBodiesFunction_Adam_UnderSubmission_2022}.
These algorithms often take a simplified view
of the ML-based optimizer ---
the actions can come from any exogenous source,
and an ML-based optimizer is just one of the possible sources.
  Concretely, the exogenous actions
  (viewed as ML predictions) are
fed into another algorithm and modified
following an expert-designed rule.
We refer to this process
as expert \emph{calibration}.
 Thus, instead of using the original un-calibrated ML predictions,
the agent adopts new expert-calibrated actions.  Such expert calibration is inevitably crucial to achieve otherwise impossible performance robustness
in terms of the competitive ratio, thus addressing
the fundamental limitation of ML-based optimizers.
But, unfortunately, a loose (albeit finite) upper bound on the competitive ratio in and of itself
does not always lead to a satisfactory average cost performance.
A key reason is that the standard practice
is to train the ML-optimizer as a standalone optimizer
that can give good actions
 on its own in many typical cases, but the
 the already-good actions in these cases will still be subsequently altered by expert calibration
 and hence lead to an increased average cost.

In this paper, we study online convex optimization
with switching costs and
propose \ouralg (Expert-Calibrated Learning to Optimize), a novel expert-calibrated ML-based optimizer that minimizes the average cost while provably
improving performance robustness compared to pure ML predictions.
Concretely, based on a recurrent structure illustrated in Fig.~\ref{fig:illustration},
\ouralg integrates a \emph{differentiable} expert calibrator  as a new downstream implicit layer following
the ML-based optimizer.
\revise{The expert calibrator generalizes
state-of-the-art R-OBD \cite{SOCO_OBD_R-OBD_Goel_Adam_NIPS_2019_NEURIPS2019_9f36407e}
by adding a regularizer to keep
R-OBD's actions close to ML predictions.}
 At each time step,
the ML-based optimizer takes online information as its input and predicts an action,
which is then calibrated by the expert algorithm.
Importantly,
we
view the combination of the ML-based optimizer
and the expert calibrator as an integrated entity,
and holistically train the ML model to minimize the average \emph{post}-calibration cost.

Compared
to the standard practice of training
a standalone ML model independently to minimize
the average \emph{pre}-calibration cost,
the ML-based optimizer in \ouralg is fully
aware of the downstream expert calibrator,
thus effectively mitigating the undesired
average cost increase induced by expert calibration.
Moreover, the inclusion of a differentiable expert calibrator allows us to leverage backpropagation
to efficiently train the ML model, and also substantially improves
the performance robustness compared to a standalone ML-based optimizer.

We rigorously prove that when the ML prediction errors are large, expert-calibrated
ML predictions are guaranteed to improve the competitive ratio compared to
pre-calibration predictions.
\revise{On the other hand, when the predictions are of a good quality,
the expert-calibrated ML predictions can lower the
optimal competitive ratio achieved by standard R-OBD without predictions. Interestingly, with proper settings,
the expert-calibrated ML predictins can achieve a sublinear cost regret
compared to the ($L$-constrained) optimal oracle's actions.}
 We also provide the average cost bound,
highlighting that expert calibration can benefit ML predictions when the training-testing distribution discrepancy is large. Additionally, we
provide a bound on the
high-percentile tail ratio of the cost achieved by \ouralg to that of the offline optimal oracle
(i.e., tail cost ratio). Our analysis demonstrates that \ouralg
can achieve both a low average cost and a bounded tail cost ratio
by training the ML-based optimizer
in \ouralg over a weighted sum of two different losses --- one for the pre-calibration
ML predictions to have
a low average prediction error, and the other one for the post-calibration

Finally, we test \ouralg by running simulations for the application of sustainable datacenter demand response. Under a practical setting
with training-testing distributional discrepancy,
our results demonstrate that \ouralg can achieve a lower average cost than expert algorithms
as well as the pure ML-based optimizer.
More interestingly, \ouralg also achieves an empirically lower competitive ratio  than the existing expert algorithms, resulting in the best average cost vs. competitive ratio tradeoff.
This nontrivial result is due in great part to the
differentiable expert calibrator and the
loss function that we choose --- both the average cost and ML prediction errors
(which affect the competitive ratio) are taken into account in the training process.

To sum up, \ouralg is the first to address the ``\emph{how to learn}'' challenge
for online convex
 optimization with switching costs.
 By integrating a differentiable expert calibrator and training
 the ML-based optimizer on a carefully designed loss function,
 \ouralg minimizes the average cost, provably improves the performance robustness
 (compared to pure ML predictions), and offers probabilistic guarantee on the high-percentile tail cost ratio.

\section{Problem Formulation}

In general online optimization problems \cite{tutorial_online_learning_orabona2019modern}, actions are made online according to sequentially revealed information to minimize the sum of the costs for an episode with a length of $T$
steps. Moreover, smooth actions are desired in many practical
problems, such as dynamic server on/off provisioning
in data centers and robot movement.
Thus,
we consider that changing actions across two consecutive steps also induces an additional cost referred to as \emph{switching}
cost, which acts as a regularizer
and tends to make the online actions
smoother over time.

More concretely, at the beginning
of each step $t=1,\cdots, T$, the online information (a.k.a. context) $y_t\in\mathcal{Y}_t\subseteq \mathcal{R}^q$ is revealed to the agent, which
then decides an action $x_t$ from an action set $\mathcal{X}\subseteq\mathcal{R}^d$.
 With action $x_t$, the agent incurs a hitting (or operational) cost $f(x_t,y_t)$, which is parameterized by the context $y_t$ and convex with respect to $x_t$,
plus an additional switching cost $c(x_t,x_{t-1})$ measuring how large the action change is.
The goal of the agent is to minimize the sum of the hitting  costs and the switching costs over an episode of $T$ steps as follows:
\begin{equation}\label{eqn:obj}
	\min_{x_1,\cdots x_T} \sum_{t=1}^Tf(x_t,y_t)+c(x_t,x_{t-1}),
\end{equation}
where the initial action $x_0$ is provided
 as an additional input to the agent prior
to its first online action.
The complete offline information is thus denoted as $\bm{s}=\left(x_0,\bm{y} \right)\in\mathcal{S}$ with
 $\bm{y}=\left[y_1,\cdots,y_T \right]$
 and $\mathcal{S}=\mathcal{X}\times\prod_{t=1}^T \mathcal{Y}_t$.
 While we can
 formulate
it as an equivalent hard constraint,
our inclusion of the switching
cost as a soft regularizer
 is well consistent with the existing literature \cite{SOCO_Memory_Adam_NIPS_2020_NEURIPS2020_ed46558a,SOCO_NonConvex_Adam_Sigmetrics_2020_10.1145/3379484,SOCO_OBD_Niangjun_Adam_COLT_2018_DBLP:conf/colt/ChenGW18}.

The key challenge of solving
Eqn.~\eqref{eqn:obj} comes
from the action entanglement due to the switching
cost: the complete contextual information
$\bm{s}=\left(x_0,\bm{y} \right)$ is required
to make optimal actions, but it is lacking in advance and only sequentially revealed to the agent online.

\textbf{Assumptions.}
We assume that the hitting cost function $f(x_t,y_t)$ is non-negative and $m$-strongly convex in $x_t$, which has also been
considered in prior studies \cite{SOCO_OBD_LQR_Abstract_Goel_Adam_Caltech_2019_10.1145/3374888.3374892,SOCO_OBD_R-OBD_Goel_Adam_NIPS_2019_NEURIPS2019_9f36407e}.
In addition, the
 switching cost $c(x_t,x_{t-1})$ is measured
 in terms of the squared Mahalanobis distance with respect to a symmetric and positive-definite  matrix $Q\in\mathcal{R}^{d\times d}$, i.e. $c(x_t,x_{t-1})=\left(x_t-x_{t-1}\right) ^\top Q \left(  x_t-x_{t-1}\right) $  \cite{mahalanobis_distance_de2000mahalanobis}.  Here, we assume that the smallest eigenvalue of $Q$ is $\frac{\alpha}{2}>0$ and the largest eigenvalue of $Q$ is $\frac{\beta}{2}$, which means $\frac{\alpha}{2}\|x_t-x_{t-1}\|^2\leq c(x_t,x_{t-1})\leq \frac{\beta}{2}\|x_t-x_{t-1}\|^2$.
 The interpretation is that
  the switching cost quantifies
  the impedance of action movements in a linearly
  transformed space. For example,
 a diagonal matrix
 $Q$ with different non-negative diagonal elements
 can model the practical consideration
 that
   the change of a multi-dimensional action along certain
   dimensions can incur a larger cost than others
   (e.g., it might be easier
   for a flying drone to move horizontally than vertically).
 In the special case of $Q$ being an identity matrix, the switching cost reduces to the quadratic cost considered in \cite{SOCO_OBD_LQR_Abstract_Goel_Adam_Caltech_2019_10.1145/3374888.3374892}.

\textbf{Performance metrics.}
For an online algorithm $\pi$, we denote its total
cost, including hitting and switching costs,
for a problem instance with context $\bm{s}=(x_0,\bm{y})$
 as $\mathrm{cost}(\pi,\bm{s})=\sum_{t=1}^Tf(x_t,y_t)+c(x_t,x_{t-1})$ where $x_t$, $t=1,\cdots, T$,
 are the actions produced by the algorithm $\pi$.
Likewise, by denoting $\pi^*$ as
the offline optimal oracle that has access
to
the complete information $\bm{s}=(x_0,\bm{y})$
in advance and selects actions $x_t^*$, $t=1,\cdots, T$, we write the offline optimal cost
as $\mathrm{cost}(\pi^*,\bm{s})=\sum_{t=1}^Tf(x^*_t,y_t)+c(x^*_t,x^*_{t-1})$  with $x^*_{0}=x_0$.

The contextual information
$\bm{s}=(x_0,\bm{y})\in\mathcal{S}$ is drawn from an exogenous joint distribution $\mathbb{P}$.
To evaluate the performance of
$\pi$, we focus on two most important
metrics --- average cost and competitive ratio ---
defined as follows.

\begin{definition}[Average cost]\label{definition:average_cost}
For contextual information
$\bm{s}=(x_0,\bm{y})\sim\mathbb{P}$, the average
cost of an algorithm $\pi$ over the joint distribution
$\mathbb{P}$ is defined as
\begin{equation}
	\mathrm{AVG}(\pi)=\mathbb{E} \left[\mathrm{cost}(\pi,\bm{s}) \right].
\end{equation}
%with expectation taken over the joint distribution
%$\mathbb{P}$.
\end{definition}

\begin{definition}[Competitive ratio]\label{definition:average_cr}
The competitive ratio of an algorithm $\pi$
is defined as
\begin{equation}\label{eqn:cr}
	\mathrm{CR}(\pi)=\sup_{\bm{s}\in\mathcal{S}}\frac{\mathrm{cost}(\pi,\bm{s})}{\mathrm{cost}(\pi^*,\bm{s})}.
	\end{equation}
\end{definition}

The average cost measures the expected performance
in typical cases that an algorithm can attain given an environment distribution. On the other hand, the competitive
ratio measures the worst-case performance of an algorithm relative
to the offline optimal cost
for any feasible problem instance that might
be presented by the environment. While the average cost is important in practice, the conservative metric of competitive ratio quantifies the level of robustness of an algorithm. Importantly,
the two metrics are
different from each other: a lower average cost does not necessarily mean a lower competitive ratio, and vice versa.

Similar to the worst-case competitive ratio,
we also consider the high-percentile tail ratio of the cost achieved by an online algorithm to that of the offline optimal oracle (Section~\ref{sec:tail_cost_ratio}).
This metric, simply referred to as the tail cost ratio, provides a probabilistic view of the performance robustness of an algorithm.

\section{A Simple ML-Based Optimizer}

As a warm-up,  we present %in this section
a simple
 ML-based optimizer that is trained
as a standalone model to produce
good actions on its own. We
emphasize that, while the ML-based
optimizer can result in a low average cost,  it has significant limitations
in terms of the worst-case performance robustness.

\subsection{Learning a Standalone Optimizer}\label{sec:pure_ML}

Due to the agent's lack of complete contextual information
$\bm{s}=(x_0,\bm{y})$
in advance, the crux of
online convex optimization with switching costs
is how to map the online information
to an appropriate action so as to minimize the cost. Fortunately, most practical applications (e.g., server management data center
and energy scheduling in smart grids) have abundant historical data available. This can be exploited
by state-of-the-art ML techniques (e.g.,
L2O \cite{L2O_Combinatorial_Optimization_Survey_Yoshua_2021_BENGIO2021405,L2O_LearnWithoutGraident_ICML_2017_l2o_gradeint_descent_chen_ICML})
and hence lead to the natural but still under-explored idea of using statistical ML to discover the otherwise difficult
mapping from online contextual information
to actions.

More concretely, following the state-of-the-art L2O techniques,
we can first train a standalone ML-based optimizer $h_{W}$ parameterized by the model weight $W$, over a training dataset of historical and/or
synthetic problem instances. For example,
because of the universal approximation capability, we can employ a deep neural network (DNN) with recurrent structures
as the underlying ML model $h_W$, where the recurrence accounts for the online optimization process.
With online contextual
information $y_t$ and the previous action $\tilde{x}_{t-1}$ as input,
the ML-based optimizer with output $\tilde{x}_t$ can be trained
to imitate the action $x^{\pi}_t$ of an expert (online) algorithm $\pi$ with good average or worst-case cost performance. Towards this end, the imitation loss
defined in terms of the distance between
the expert action and learnt action can be used
as the loss function in the training process.
 Alternatively,   we can also directly
use the total cost over an entire episode  as the loss function
to supervise offline training of the ML-based optimizer \cite{L2O_Combinatorial_Optimization_Survey_Yoshua_2021_BENGIO2021405}.

Given an unseen testing problem instance for
 inference,
at each step,   the pre-trained ML-based optimizer takes online contextual
information and the previous action as its input, and predicts the current action
as its output. Without causing ambiguity, we simply
use ML \emph{predictions} to represent
the actions produced by the ML-based optimizer.

\subsection{Limitations}\label{sec:ML_limitation}

With a large training dataset, the standalone ML-based optimizer can
exploit the power of statistical learning and
achieve a low cost on average, especially when the testing
input distribution is well consistent with the training distribution (i.e., in-distribution testing).
But, regardless of in-distribution or out-of-distribution testing, an ML-based optimizer can perform very poorly for some ``bad'' problem instances and empirically result in a high competitive ratio. This is a fundamental limitation of any statistical ML models, be they advanced DNNs or simple linear models.

The lack of performance robustness in these rare cases
can be attributed
to several factors, including
certain testing inputs drawn from a very different
distribution than the training distribution,
inadequate ML model capacity, ill-trained ML
model weights, among others.
For example, even though the ML-based optimizer
can perfectly imitate an expert's action or obtain optimal actions for
all the \emph{training} problem instances, there is no
guarantee that it can perform equally well for
all \emph{testing} problem instances
due to the ML model capacity, inevitable generalization error as well as other factors.

While distributionally robust learning can alleviate the problem of testing distributional shifts relative to the training distribution
\cite{Bandit_RobustDistributionalContextual_AISTATS_2020_kirschner2020distributionally,DRO_LabelShiftDRO_ICLR_2021_zhang2021coping}, it
still targets the average case (albeit over a wider distribution of testing problem instances) without addressing the worst-case performance, let alone
its significantly increased training complexity.

To further quantify the lack of performance robustness, we define $\rho$-accurate prediction for the ML-based optimizer $h_W$ as follows.

\begin{definition}[$\rho$-accurate Prediction]\label{def:predacc}
Assume that for an input  instance $\bm{s}=(x_0,\bm{y})$, the offline-optimal oracle $\pi^*$ gives $x_1^*,\cdots x_T^*$, and the offline optimal cost is %expressed as
$\mathrm{cost}(\pi^*,\bm{s})=\sum_{t=1}^Tf(x^*_t,y_t)+c(x^*_t,x^*_{t-1})$. The predicted actions from the ML-based optimizer $h_W$, denoted as $\tilde{x}_1,\cdots \tilde{x}_T$, are said to be $\rho$-accurate when the following is satisfied:
\begin{equation}\label{eqn:rho_accurate}
\sum_{t=1}^T ||\tilde{x}_t-x_t^* ||^2 \leq \rho\cdot \mathrm{cost}(\pi^*,\bm{s}),
\end{equation}
where $\rho\geq0$ is referred to as the prediction error, and $||\cdot||$ is the $l_2$-norm.
\end{definition}

The prediction error $\rho$
is defined similarly as in the existing literature \cite{SOCO_OnlineMetric_UntrustedPrediction_ICML_2020_DBLP:conf/icml/AntoniadisCE0S20,SOCO_OnlineOpt_UntrustedPredictions_Switching_Adam_arXiv_2022}
and normalized with respect to the offline optimal oracle's total cost. It is also scale-invariant --- when both the sides of Eqn.~\eqref{eqn:rho_accurate} are scaled by the same factor, the prediction error remains unchanged.
 By definition,
 a smaller $\rho$ implies a better prediction,
 and
 the ML predictions are perfect
 when $\rho = 0$.

In the following lemma, given $\rho$-accurate predictions, we provide a lower bound on the
ratio of the cost achieved by the ML-based optimizer $h_W$
to that of the offline oracle. Note
that, with a slight abuse
of notion, we also refer to this ratio
as the \emph{competitive ratio} denoted by
$CR_{\rho}(h_W)=\sup_{\bm{s}\in\{\bm{s}\in\mathcal{S}| \tilde{x}_1,\cdots \tilde{x}_T \text{ are }
\rho\text{-}\text{accurate}\}}\frac{\mathrm{cost}(h_W,\bm{s})}{\mathrm{cost}(\pi^*,\bm{s})}$, where the subscript $\rho\geq0$
emphasizes that the competitive ratio
is restricted over all the
instances whose corresponding ML predictions
are $\rho$-accurate.

\begin{lemma}[Competitive ratio lower bound
for $\rho$-accurate predictions]\label{thm:ml_eta}
Assume that the hitting cost function $f$ is $m$-strongly convex in terms of the action
and $\alpha$ is twice the smallest
 eigenvalue of the matrix $Q$ in the switching cost (i.e., $\frac{\alpha}{2}$ is the smallest
 eigenvalue of $Q$).
For all $\rho$-accurate predictions, the competitive ratio of
the ML-based optimizer $h_W$ satisfies
$CR_{\rho}(h_W)\geq 1 + \frac{m + 2\alpha}{2} \rho$.
\end{lemma}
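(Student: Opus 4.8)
The plan is to prove this lower bound on the (restricted) competitive ratio by construction. Since $CR_\rho(h_W)$ is a supremum of the cost ratio over all instances whose predictions are $\rho$-accurate, it suffices to exhibit one such instance together with a set of $\rho$-accurate predictions for which the ratio is at least $1 + \frac{m+2\alpha}{2}\rho$. Consistent with the paper's view that predictions may come from any exogenous source, I would treat the predicted trajectory as a free (but $\rho$-accurate) choice and design the hitting costs so that the offline optimal trajectory, the prediction deviation, and the prediction error are all explicitly controllable.

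Concretely, I would take a quadratic hitting cost $f(x,y_t) = \frac{m}{2}\|x - v\|^2 + b$ for a fixed point $v$ and a constant offset $b > 0$ (so that $f$ is $m$-strongly convex, non-negative, and minimized at $v$), set $x_0 = v$, and arrange the parameters so that $y_t$ places the minimizer at $v$ for every $t$. Then the constant trajectory $x_t^* = v$ simultaneously minimizes every hitting term and makes every switching term $c(x_t^*, x_{t-1}^*) = 0$ vanish, so it is globally offline optimal, with $\mathrm{cost}(\pi^*,\bm{s}) = Tb > 0$. For the predictions, I would keep $\tilde{x}_t = v$ at all steps except one non-final step $\tau \in \{1, \ldots, T-1\}$, where I set $\tilde{x}_\tau = v + \delta$ with $\|\delta\|^2 = \rho\,\mathrm{cost}(\pi^*,\bm{s})$. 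This makes the predictions exactly $\rho$-accurate, since $\sum_t \|\tilde{x}_t - x_t^*\|^2 = \|\delta\|^2 = \rho\,\mathrm{cost}(\pi^*,\bm{s})$.

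It then remains to account for the extra cost incurred by this single deviation. The extra hitting cost is $f(v+\delta, y_\tau) - f(v, y_\tau) = \frac{m}{2}\|\delta\|^2$, which also follows from $m$-strong convexity together with the vanishing gradient $\nabla f(v, y_\tau) = 0$. Because $\tau$ is not the final step, the deviation perturbs exactly two switching terms, $c(v+\delta, v)$ and $c(v, v+\delta)$, contributing $2\,\delta^\top Q \delta \geq \alpha\|\delta\|^2$ by the eigenvalue bound $\delta^\top Q\delta \geq \frac{\alpha}{2}\|\delta\|^2$; all other switching terms remain zero. Summing, the total extra cost is at least $\frac{m}{2}\|\delta\|^2 + \alpha\|\delta\|^2 = \frac{m+2\alpha}{2}\,\rho\,\mathrm{cost}(\pi^*,\bm{s})$, so $\frac{\mathrm{cost}(h_W,\bm{s})}{\mathrm{cost}(\pi^*,\bm{s})} \geq 1 + \frac{m+2\alpha}{2}\rho$, and taking the supremum over $\rho$-accurate instances yields the claim.

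The main thing to get right is the interpretation of the statement relative to the fixed model $h_W$: strictly speaking the predictions are outputs of $h_W$, so the argument is cleanest under the paper's adversarial framing in which the $\rho$-accurate predictions are treated as exogenous actions realizable by the instance set, and I would state this explicitly. Two smaller points also need care: confirming the constant trajectory really is the offline optimum (immediate here, because hitting and switching costs are minimized simultaneously at $v$), and noting that the factor $2\alpha$ requires the deviation to hit a non-final step so that two switching terms are affected — a deviation at step $T$ would perturb only one term and give the weaker constant $\frac{m+\alpha}{2}$, so $T \geq 2$ and $\tau \leq T-1$ are needed.
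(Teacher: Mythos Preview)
Your proposal is correct and follows essentially the same approach as the paper: both construct a $\rho$-accurate prediction sequence that deviates from the offline optimal at a single non-final step and lower-bound the resulting extra cost via the $(m+2\alpha)$-strong convexity contributed by one hitting term plus two adjacent switching terms. The only cosmetic difference is that the paper perturbs at $t=1$ for a generic instance (invoking first-order optimality of $x_1^*$ for $p(x)=f(x,y_1)+c(x,x_0)+c(x,x_2^*)$), whereas you build a specific quadratic instance with a constant optimal trajectory; the mechanism is identical, and your caveat about needing a non-final step to pick up two switching terms is precisely what makes the paper's choice of $t=1$ work as well.
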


Lemma~\ref{thm:ml_eta} is proved in Appendix~\ref{sec:proof_ml_eta}.
It provides a competitive ratio
lower bound for an ML-based optimizer with $\rho$-accurate predictions, showing
that the competitive ratio grows at least linearly
with respect to the ML prediction error $\rho$.
The slope of increase depends on
convexity parameter $m$ of the hitting cost function
and the smallest
 eigenvalue of the matrix $Q$ in the switching cost. Nonetheless, the actual competitive ratio can be arbitrarily bad for $\rho$-accurate predictions. For example, if the hitting cost function is non-smooth
 with an unbounded  second-order derivative,
 then the competitive ratio can also be unbounded.
Thus, even when the prediction error $\rho$
is empirically bounded in practice, the actual competitive ratio of the pure ML-based optimizer $h_W$
can still be arbitrarily high,
highlighting the lack of performance robustness.

\section{\ouralg: Expert-Calibrated ML-Based Optimizer}

In this section, we first show that simply combining
a standalone ML-based optimizer with an expert algorithm can even
result in an increased average cost compared to using the ML-based optimizer alone. Then,
we present \ouralg, a novel expert-calibrated
ML-based optimizer  that minimizes the average cost while improving performance robustness.

\subsection{A Two-Stage Approach and Drawbacks}\label{sec:calibration_limitation_simple}

By using L2O techniques \cite{L2O_Combinatorial_Optimization_Survey_Yoshua_2021_BENGIO2021405,L2O_LearningToOptimize_Berkeley_ICLR_2017,L2O_LearnWithoutGraident_ICML_2017_l2o_gradeint_descent_chen_ICML}  (discussed in Section~\ref{sec:pure_ML}),
a standalone ML-based optimizer
$h_W$
can predict online actions with a low
average cost,
but lacks performance robustness in terms of the competitive ratio.
On the other hand, expert algorithms can take
the ML predictions as input and then produce
new calibrated online actions with good competitive ratios.
This is also referred to as ML-augmented algorithm design \cite{SOCO_OnlineMetric_UntrustedPrediction_ICML_2020_DBLP:conf/icml/AntoniadisCE0S20,SOCO_OnlineOpt_UntrustedPredictions_Switching_Adam_arXiv_2022}.

Consequently, to perform well both in average cases
and in the worst case, it seems very straightforward to
combine states of the art from both L2O and ML-augmented algorithm design
using a \emph{two}-stage approach:
\emph{first}, we train an ML-based optimizer $h_W$
that has good
average performance on its own;
\emph{second}, we add
another expert-designed algorithm, denoted by $R$, to calibrate the ML predictions for performance robustness.
 By doing so, we have a new expert-calibrated
optimizer $R\oplus h_W$, where $\oplus$ denotes the simple composition of
the expert calibrator $R$ with
the independently trained ML-optimizer $h_W$.
The new optimizer
$R\oplus h_W$ still takes the same online input
information as the original ML-based optimizer $h_W$, but produces calibrated actions on top of the original ML predictions.

While the expert-calibrated optimizer $R\oplus h_W$ reduces the
worst-case competitive ratio than pure ML predictions,
its average cost can still be high.
\revise{The key reason can be attributed to
 the ML-based optimizer's obliviousness
of the downstream expert calibrator during its training process.
Specifically,
 the ML-based optimizer $h_W$
is trained as a \emph{standalone} model to have
good average performance on its own, but the
 good ML predictions
 for many typical input instances
 are  subsequently modified
by the expert algorithm (i.e., calibration)
that may not perform well on average.
 We will further explain this point by our performance analysis in Section~\ref{sec:analysis_average}.
}

In summary, despite the improved worst-case robustness,
the new optimizer $R\oplus h$ constructed using a naive two-stage approach
can have an unsatisfactory average cost performance.

\subsection{Overview of \ouralg}

To address the average cost drawback
of the simple two-stage approach, we propose \ouralg, which trains the ML-based optimizer $h_W$ by explicitly
 taking into
account the downstream expert calibrator.
The high-level intuition is that if only expert-calibrated ML predictions are used
as the agent's actions (for performance robustness), it would naturally make more sense
to train the ML-based optimizer in such a way that
the post-calibration cost is minimized.

To distinguish
from the naive calibrated optimizer $R\oplus h_W$,
we use $R\circ h_W$ to
denote our expert-calibrated optimizer, where
$\circ$ denotes the function composition.
Next,
we present the overview of inference/testing and training
of \ouralg, which is also illustrated in Fig.~\ref{fig:illustration}.

\begin{figure*}[!t]	
\centering
	\includegraphics[width=0.8\textwidth]{./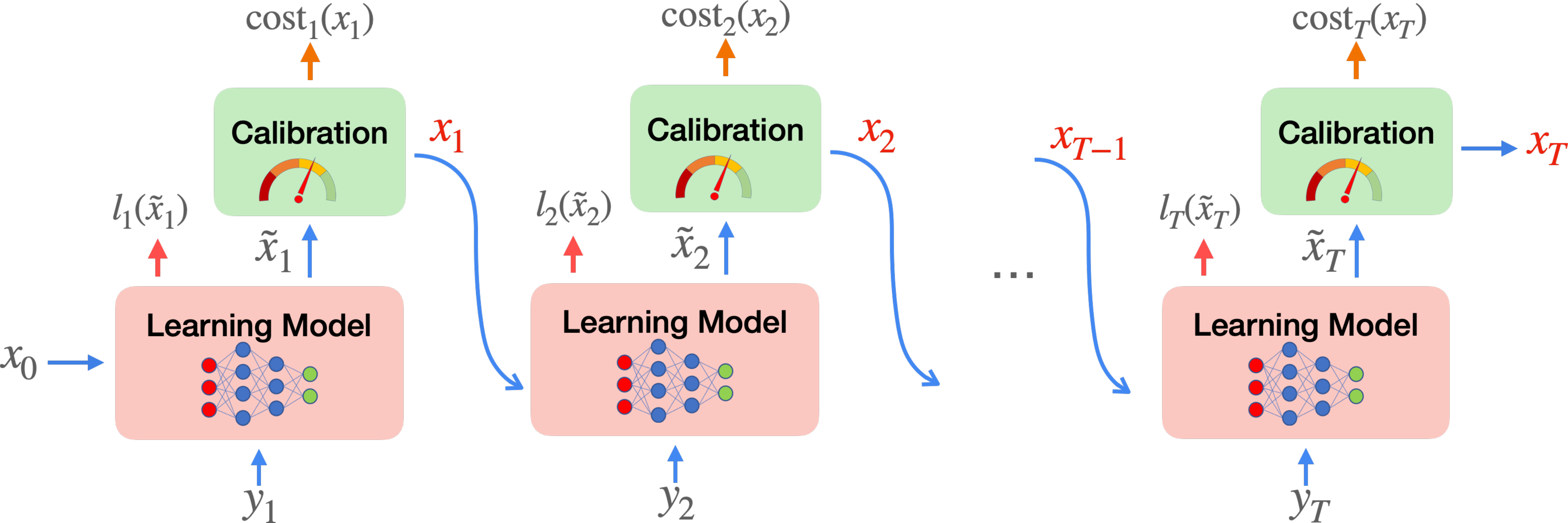}
	\vspace{-0.3cm}
	\caption{Illustration of \ouralgpro. The ML-based optimizer $h_W$ is trained to minimize a weighted sum of two losses in view of the downstream expert calibrator $R_{\lambda}$. The expert calibrated actions $x_1,\cdots,x_T$ are actually used by the online agent and directly determines the cost.}\label{fig:illustration}
\end{figure*}

\subsubsection{Online testing} At time step $t$,
\ouralg
takes the available online contextual information $y_t$ as well as the previous action $x_{t-1}$,
and outputs the current action $x_t$. More specifically, given $y_t$ and $x_{t-1}$,
the ML-based optimizer $h$ predicts the action $\tilde{x}_t=h_W(y_t, x_{t-1})$, which is then calibrated as the actual action $x_t$ by
the expert algorithm $R$.
Due to the presence of the expert calibrator $R$, the optimizer  $R\circ h_W$
is more robust than simply using the ML predictions alone.
This process follows a
recurrent structure as illustrated in Fig.~\ref{fig:illustration}, where
the base optimizers $R\circ h_W$ are concatenated one
after another.

\subsubsection{Offline training} The ML-based optimizer $h_W$ in \ouralg is constructed using an ML model (e.g.,
neural network)
with trainable weights $W$.
But, unlike in $R\oplus h_W$ constructed using the naive two-stage approach in which the ML-based optimizer $h_W$
is trained as a standalone model to produce good online actions on its own,
we train $h_W$ by considering the downstream expert calibration such
that the average cost of the calibrated optimizer $R\circ h_W$ is minimized.

By analogy to a multi-layer neural network, we can view the added expert calibrator as another
implicit ``layer'' following the ML predictions \cite{L2O_DifferentiableConvexOptimization_Brandon_NEURIPS2019_9ce3c52f,L2O_DifferentiableOptimization_Brandon_ICML_2017_amos2017optnet}.
Unlike in a standard neural network where
each layer has simple matrix computation and activation, the expert calibration layer itself
is an optimization algorithm.
Thus,
the training process of $R\circ h_W$ in \ouralg is much more challenging than that of a standalone ML-based optimizer used in the simple two-stage approach (i.e., $R\oplus h_W$).

The choices of the expert calibrator $R$
and loss function for training $R\circ h_W$ are critical
to achieve our goal of reducing
the average cost with good performance robustness.
Next, we present the details of our expert calibrator and loss function.

\subsection{Differentiable Expert Calibrator}\label{sec:calibrator_main}

We now design the expert calibrator that has good performance robustness and meanwhile is {differentiable} with respect to the ML predictions.

\subsubsection{The necessity of being differentiable}
When training an ML model,
 gradient-based backpropagation is arguably the state-of-the-art training method \cite{DNN_Book_Goodfellow-et-al-2016}.
Also, Section~\ref{sec:calibration_limitation_simple} has already highlighted the drawback of training $h_W$ as a standalone model to minimize the pre-calibration cost without being
aware of the downstream expert calibrator $R$.
Therefore, when optimizing the weight $W$ of $h_W$ to minimize the post-calibration cost, we need to back propagate the gradient of $R$
with respect to the ML prediction output of $h_W$ \cite{L2O_DifferentiableConvexOptimization_Brandon_NEURIPS2019_9ce3c52f,L2O_DifferentiableOptimization_Brandon_ICML_2017_amos2017optnet}.
Without the gradient, one may instead want to
 use some black-box gradient estimators like zero-order methods \cite{zero-order_opt_signalprocessing_ML_liu2020primer}.
 Nonetheless, zero-order methods are
 typically computationally expensive due to the many samples needed to estimate the gradient, especially in our recurrent structure where the base optimizers  are dependent on each other through time as illustrated in Fig.~\ref{fig:illustration}.
  Alternatively,
 one might want to pre-train a DNN to approximate the expert calibrator and then calculate the DNN's gradients as substitutes of the expert calibrator's gradients. Nonetheless, this alternative has its own limitation as well:
many samples are required to pre-train the DNN to approximate the expert calibrator and, more critically,
the gradient estimation error can  still be large  because a practical DNN only has finite generalization capability \cite{ML_Book_foundation_ML_mohri2018foundations}.
For these reasons, we would like our expert calibrator to be differentiable with respect to the ML predictions,
in order to apply backpropagation and efficiently train $h_W$ while being aware of
the downstream expert calibrator.

\subsubsection{Expert calibrator}\label{sec:expert_calibrator}
 Our design of the expert calibrator is highly relevant to the emerging
ML-augmented
algorithm designs
in the context of
online convex optimization with switching costs
 \cite{SOCO_OnlineMetric_UntrustedPrediction_ICML_2020_DBLP:conf/icml/AntoniadisCE0S20,SOCO_OnlineOpt_UntrustedPredictions_Switching_Adam_arXiv_2022}.
Commonly, the goal of ML-augmented algorithms is  to  achieve good
competitive ratios
in two cases:
 a bounded competitive ratio
even when the ML predictions are bad (i.e.,
robustness), while being close
to the ML predictions
when they already perform optimally (i.e., consistency).
Readers are referred to \cite{SOCO_OnlineOpt_UntrustedPredictions_Switching_Adam_arXiv_2022} for formal definitions of robustness and consistency.
Nonetheless,
 the existing algorithms \cite{SOCO_OnlineMetric_UntrustedPrediction_ICML_2020_DBLP:conf/icml/AntoniadisCE0S20,SOCO_OnlineOpt_UntrustedPredictions_Switching_Adam_arXiv_2022,SOCO_ML_ChasingConvexBodiesFunction_Adam_UnderSubmission_2022}
 view the ML-based optimizer
as an exogenous black box without addressing how the ML models are trained.
Moreover, they
 focus on switching costs in a metric space and hence do
not apply to our setting where the switching cost is defined in terms of the
squared Mahalanobis distance.
For these reasons, we propose a new expert calibrator
that
generalizes the state-of-the-art
expert algorithm --- Regularized OBD (R-OBD) which
matches the lower bound of any online
 algorithm for our problem setting \cite{SOCO_OBD_R-OBD_Goel_Adam_NIPS_2019_NEURIPS2019_9f36407e} --- by incorporating ML predictions.

The key idea of R-OBD is to regularize
the online actions by encouraging them
to stay close
to the actions that minimize the current hitting cost at each time step. Neither future contextual
information nor offline optimal actions are available in R-OBD. Thus, it uses
the minimizer of the current hitting cost as an estimate of the offline optimal action to regularize
the online actions, but clearly the minimizer
of the current hitting cost can only provide
limited information about the offline optimal since it ignores the switching cost.
Thus, in addition to the minimizer of the current hitting cost, we view the ML predictions
as another, albeit noisy, estimate of the offline optimal action.
Our expert calibrator, called \ouralgcal
(ML-Augmented R-OBD) and described in Algorithm~\ref{alg:RP-OBD},
generalizes R-OBD by introducing
the ML predictions as another regularizer.

\begin{algorithm}[!t]
\caption{Machine Learning Augmented R-OBD (\ouralgcal)}
\begin{algorithmic}[1]\label{alg:RP-OBD}
\REQUIRE $0<\lambda_1\leq1$, $\lambda_2\geq0$, $\lambda_3\geq0$, the initialized action $x_0$.
\STATE for $t=1,\cdots, T$
\STATE \quad Receive the context $y_t$.
\STATE \quad $v_t \leftarrow \arg\min_{x\in\mathcal{X}} f(x, y_t)$ \texttt{//Minimizer of the current hitting cost}
\STATE \quad $\tilde{x}_t \leftarrow h_W(x_{t-1}, y_t)$  \texttt{//ML prediction}
\STATE \quad $x_t \leftarrow \arg \min_{x\in\mathcal{X}} f(x, y_t) + \lambda_1 c(x, x_{t-1}) + \lambda_2 c(x, v_t) + \lambda_3 c(x, \tilde{x}_t)$ \texttt{//Calibrating the ML prediction}
\end{algorithmic}
\end{algorithm}

In \ouralgcal, at each time step $t$, the online action $x_t$ is chosen to minimize the weighted sum of the hitting cost, switching cost, the gap towards the current hitting cost's minimizer $v_t$, and the newly added gap towards the ML prediction $\tilde{x}_t$.  \revise{Compared to R-OBD, \ouralgcal introduces
a new regularizer $\lambda_3 c(x, \tilde{x}_t)$ in Line~5 of Algorithm~\ref{alg:RP-OBD} that keeps the online actions close to the ML predictions --- the greater $\lambda_3\geq0$, the stronger regularization.}
Thus, \ouralgcal is more general than R-OBD, with tunable weights $\lambda_1$, $\lambda_2$ and $\lambda_3$  to balance the objectives. We also use $R_{\lambda}$ to represent \ouralgcal to highlight that it is parameterized by $\lambda=(\lambda_1,\lambda_2,\lambda_3)$.

 More precisely, when $\lambda_2>0$ and $\lambda_3=0$,
\ouralgcal reduces to the standard R-OBD;
when  $\lambda_1=1$ and $\lambda_2=\lambda_3=0$, \ouralgcal becomes the simple greedy algorithm which minimizes the current total cost (including the hitting and switching costs) at each step;
and when $\lambda_2=0$ but $\lambda_1=\lambda_3=1$,
\ouralgcal will become the Follow the Prediction (FtP) algorithm studied in \cite{SOCO_OnlineMetric_UntrustedPrediction_ICML_2020_DBLP:conf/icml/AntoniadisCE0S20}.

\subsubsection{Competitive ratio} Next, to show the advantage of \ouralgcal in terms of the competitive ratio compared to pure ML predictions, we provide a competitive ratio upper bound in the following theorem,
whose proof can be found in Appendix~\ref{appendix:proof_cr_robd}.

\begin{theorem}\label{thm:cr_robd}
Assume that the hitting cost function $f$ is $m$-strongly convex, and
that the switching cost function $c$ is the Mahalanobis distance by matrix $Q$ with the minimum
and maximum eigenvalues being $\frac{\alpha}{2}$ and $\frac{\beta}{2}$, respectively. If the ML
predictions are $\rho$-accurate (Definition~\ref{def:predacc}),
\ouralgcal has a competitive ratio upper bound of $\max \left(\frac{m+ \lambda_2 \beta }{m\lambda_1}, 1+\frac{\beta^2}{\alpha} \cdot \frac{\lambda_1}{(\lambda_2+\lambda_3)\beta + m}\right) + \frac{\lambda_3 \beta}{2\lambda_1}\rho$. Moreover, by optimally setting $\lambda_2=\frac{m\lambda_1}{2\beta}\left(\sqrt{\left(1+\frac{\beta}{m}\theta\right)^2+\frac{4\beta^2}{\alpha m}}+1-\frac{2}{\lambda_1}-\frac{\beta}{m}\theta\right)$ with $\theta=\frac{\lambda_3}{\lambda_1}$ being the trust parameter, the competitive ratio upper bound of \ouralgcal becomes
$1 + \frac{1}{2}
\left[\sqrt{(1+\frac{ \beta }{m}\theta)^2 + \frac{4 \beta^2}{m \alpha}} - \left(1 + \frac{\beta}{m}\theta)\right)\right] + \frac{\beta}{2} \theta\cdot\rho$.
\end{theorem}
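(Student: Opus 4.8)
The plan is to prove the bound by a Lyapunov/potential-function argument that generalizes the analysis of R-OBD \cite{SOCO_OBD_R-OBD_Goel_Adam_NIPS_2019_NEURIPS2019_9f36407e} to our Mahalanobis switching cost and to the extra ML-prediction regularizer $\lambda_3 c(x,\tilde{x}_t)$. First I would record the first-order optimality condition for the update in Line~5 of Algorithm~\ref{alg:RP-OBD}: since $x_t$ minimizes the objective $g_t(x)=f(x,y_t)+\lambda_1 c(x,x_{t-1})+\lambda_2 c(x,v_t)+\lambda_3 c(x,\tilde{x}_t)$, which is $(m+(\lambda_1+\lambda_2+\lambda_3)\alpha)$-strongly convex because each $c(x,\cdot)$ has Hessian $2Q\succeq\alpha I$, we have (for interior minimizers) $\nabla f(x_t,y_t)+2\lambda_1 Q(x_t-x_{t-1})+2\lambda_2 Q(x_t-v_t)+2\lambda_3 Q(x_t-\tilde{x}_t)=0$. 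I would then compare $x_t$ against the offline optimum $x_t^*$ through the strong-convexity inequality $g_t(x_t)+\tfrac{m+(\lambda_1+\lambda_2+\lambda_3)\alpha}{2}\|x_t-x_t^*\|^2\le g_t(x_t^*)$, combined with the fact that $v_t$ minimizes $f(\cdot,y_t)$, so that $f(x_t,y_t)\ge f(v_t,y_t)+\tfrac{m}{2}\|x_t-v_t\|^2$.

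The core step is a per-round dissipation inequality. Defining a potential $\Phi_t$ proportional to the Mahalanobis distance $c(x_t,x_t^*)$ between the online and offline actions, I would show that for a suitable coefficient $\eta$,
$$f(x_t,y_t)+c(x_t,x_{t-1})+\Phi_t-\Phi_{t-1}\le \eta\big(f(x_t^*,y_t)+c(x_t^*,x_{t-1}^*)\big)+\frac{\lambda_3\beta}{2\lambda_1}\|\tilde{x}_t-x_t^*\|^2,$$
where the last term collects the contribution of the prediction regularizer and is controlled by $c(x_t^*,\tilde{x}_t)\le\tfrac{\beta}{2}\|x_t^*-\tilde{x}_t\|^2$. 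The eigenvalue bounds $\tfrac{\alpha}{2}\|\cdot\|^2\le c(\cdot)\le\tfrac{\beta}{2}\|\cdot\|^2$ are used throughout to pass between $\|\cdot\|^2$ and $c(\cdot)$, since $Q$ is not scalar and need not commute with the identity. The coefficient $\eta$ that makes this inequality hold is exactly the claimed $\max\big(\tfrac{m+\lambda_2\beta}{m\lambda_1},\,1+\tfrac{\beta^2}{\alpha}\cdot\tfrac{\lambda_1}{(\lambda_2+\lambda_3)\beta+m}\big)$: the first argument governs the regime where the hitting-cost comparison is binding, while the second governs the regime where the switching-cost dissipation through the potential is binding, so the worse of the two must be taken.

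Summing the per-round inequality over $t=1,\dots,T$ telescopes the potential; using $x_0^*=x_0$ so that $\Phi_0=0$, and $\Phi_T\ge 0$, removes the boundary terms and yields $\mathrm{cost}(\ouralgcal,\bm{s})\le\eta\cdot\mathrm{cost}(\pi^*,\bm{s})+\tfrac{\lambda_3\beta}{2\lambda_1}\sum_{t=1}^T\|\tilde{x}_t-x_t^*\|^2$. Invoking $\rho$-accuracy (Definition~\ref{def:predacc}) to bound $\sum_t\|\tilde{x}_t-x_t^*\|^2\le\rho\cdot\mathrm{cost}(\pi^*,\bm{s})$ and dividing by $\mathrm{cost}(\pi^*,\bm{s})$ gives the first bound $\eta+\tfrac{\lambda_3\beta}{2\lambda_1}\rho$. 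For the optimized statement, I would minimize $\eta$ over $\lambda_2$: the first argument of the $\max$ increases in $\lambda_2$ and the second decreases, so the minimizer equalizes them; setting the two arguments equal yields a quadratic in $\lambda_2$ whose positive root is the stated value (written via $\theta=\lambda_3/\lambda_1$). Substituting back into either argument and simplifying gives the closed form $1+\tfrac{1}{2}\big[\sqrt{(1+\tfrac{\beta}{m}\theta)^2+\tfrac{4\beta^2}{m\alpha}}-(1+\tfrac{\beta}{m}\theta)\big]$, and the additive term follows from $\tfrac{\lambda_3\beta}{2\lambda_1}\rho=\tfrac{\beta}{2}\theta\rho$.

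The hard part will be establishing the per-round dissipation inequality with the exact coefficient $\eta$. This requires carefully combining the optimality condition, the strong convexity of $g_t$, and the two eigenvalue bounds on $Q$ simultaneously, and in particular correctly isolating the two competing regimes that produce the $\max$ — each regime pins down one of the two arguments, and choosing the potential's scaling so that both are simultaneously achievable (rather than one dominating with slack) is the delicate technical bookkeeping. The optimization over $\lambda_2$ and the final simplification are then routine algebra.
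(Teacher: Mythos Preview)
Your proposal is correct and follows essentially the same route as the paper: first-order optimality for the \ouralgcal update, a potential-function (Lyapunov) argument yielding a per-round inequality that telescopes, application of $\rho$-accuracy, and finally equating the two branches of the $\max$ to optimize over $\lambda_2$. The only refinement worth noting is that the paper's potential is not purely proportional to $c(x_t,x_t^*)$ but is the mixed form $\phi(x_t,x_t^*)=(\lambda_1+\lambda_2+\lambda_3)c(x_t,x_t^*)+\tfrac{m}{2}\|x_t-x_t^*\|^2$, and the per-round bound is obtained not from the $(m+(\lambda_1+\lambda_2+\lambda_3)\alpha)$-strong convexity of $g_t$ but from $m$-strong convexity of $f$ together with the exact three-point identity $c(x,y)-c(x,z)-c(z,y)=2(y-z)^\top Q(z-x)$ for the Mahalanobis distance --- using the exact quadratic structure of $c$ rather than its $\alpha$-strong-convexity lower bound is what produces the $(\lambda_2+\lambda_3)\beta+m$ denominator in the second branch of the $\max$.
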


Theorem~\ref{thm:cr_robd} provides a dimension-free upper bound of the competitive ratio achieved by \ouralgcal. Interestingly, although
we have three weights $\lambda_1,\lambda_2$
and $\lambda_3$, the optimal upper bound only depends on $\theta=\frac{\lambda_3}{\lambda_1}$, which we refer to as the trust parameter describing how much we trust the ML predictions.

In particular,
when we completely distrust and ignore
the ML predictions (i.e.,
 $\theta=\frac{\lambda_3}{\lambda_1}=0$), we can recover the competitive ratio of standard R-OBD as $\frac{1}{2} (\sqrt{1 + \frac{4 \beta^2}{\alpha m} }  + 1)$ by setting $\lambda_2=\frac{m\lambda_1}{2\beta}\left(\sqrt{1+\frac{4\beta^2}{\alpha m}}+1-\frac{2}{\lambda_1}\right)$, which  also matches the lower bound of the competitive ratio for any online algorithm \cite{SOCO_OBD_R-OBD_Goel_Adam_NIPS_2019_NEURIPS2019_9f36407e}. When we have full trust
on the ML predictions (i.e., $\theta=\frac{\lambda_3}{\lambda_1}\to\infty$),
the upper bound also increases to infinity unless the ML predictions are perfect with $\rho=0$.
This is consistent with our discussion of
the limitations of purely using ML predictions
in Section~\ref{sec:ML_limitation}.

When we partially trust the ML predictions (i.e., $0<\theta<\infty$), we can see that the upper bound
increases with the prediction error $\rho$. In particular,
when the ML predictions are extremely bad (i.e., $\rho\to\infty$), the competitive ratio of \ouralgcal can also become unbounded.
\revise{The non-ideal result is not a bug of \ouralgcal or our proof technique, but rather due to the fundamental limit
for the problem we study --- as shown in Appendix~B of \cite{SOCO_OnlineOpt_UntrustedPredictions_Switching_Adam_arXiv_2022}, no
ML-augmented online algorithms could simultaneously achieve a consistency (i.e., competitive ratio for $\rho=0$) less than the optimal competitive ratio of R-OBD as well as a bounded robustness (i.e., competitive ratio for $\rho=\infty$).
In our case, \ouralgcal can benefit from good ML predictions and achieve a smaller competitive ratio than R-OBD when $\rho\to0$, and thus its unbounded competitive ratio for $\rho=\infty$ is anticipated.
Nonetheless, it is an interesting future problem
to consider the other alternative, i.e.,
designing a new expert calibrator that achieves a bounded competitive
ratio when $\rho=\infty$ without being able to
exploit the benefit of good ML predictions when $\rho\to0$.
}

More importantly,
by properly setting the trust parameter $\theta=\frac{\lambda_3}{\lambda_1}$, the competitive ratio upper bound of \ouralgcal
can still be much smaller than the competitive ratio \emph{lower} bound of purely using ML predictions (Lemma~\ref{thm:ml_eta}) when the ML prediction error $\rho$ is sufficiently large. This means that when the ML predictions are of a low quality,
\ouralgcal is guaranteed to be able to calibrate the predictions for significantly improving the competitive ratio.

To further illustrate this point,
 we show in Fig.~\ref{fig:cr_bound} the competitive ratio lower bound  of the pure ML-based optimizer $h$,  the competitive ratio
 upper bounds of \ouralgcal, and that of R-OBD under various prediction errors $\rho$.  We can find that when the prediction error $\rho$ is small enough, the pure ML predictions are the best. But, this is certainly too opportunistic for
a practical ML-based optimizer, whose actual testing performance cannot
always be very good (i.e., $\rho$
is sufficiently small) due to the fundamental limitations discussed in Section~\ref{sec:ML_limitation}.
On the other hand, when the prediction error $\rho$ is large, expert calibration via \ouralgcal is guaranteed to be better than the un-calibrated ML predictions. Additionally, introducing ML predictions with large errors in \ouralgcal can increase the competitive ratio upper bound compared to the  pure R-OBD expert algorithm that is independent of the prediction error.
Nonetheless, by lowering the trust parameter,
even ML predictions with errors up to a certain threshold
are still beneficial to R-OBD and lead to a lower competitive ratio upper bound than R-OBD.

 \begin{wrapfigure}[11]{r}{0.4\textwidth}
\vspace{-0.2cm}
\includegraphics[width={0.39\textwidth}]{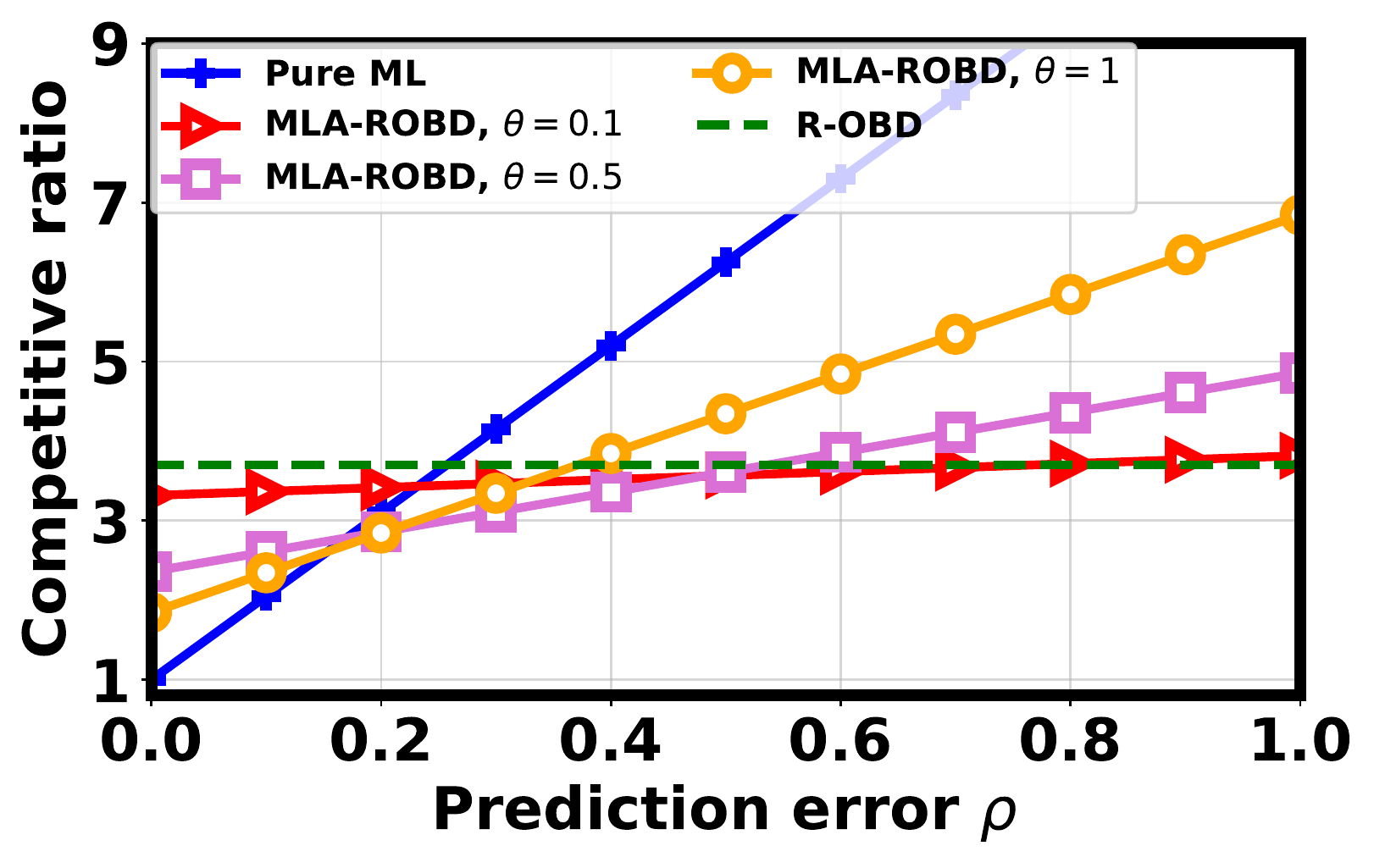}
\vspace{-0.5cm}
	\caption{Illustration of competitive ratios of different algorithms.}
\label{fig:cr_bound}
\end{wrapfigure}
In summary,
 when the prediction error is not too large,
 the competitive ratio upper bound of \ouralgcal can be even smaller than that of R-OBD;
 when the prediction error is large enough,
 \ouralgcal will be worse than R-OBD in terms
 of the competitive ratio upper bound due to
 the introduction of erroneous ML predictions,
 but,  it can still be guaranteed to be much better than purely using ML predictions by setting a proper trust parameter. %Fortunately, in most typical cases, the ML predictions
 Based on these insights, we shall design our loss function for training the ML model $h_W$
 in our expert-calibrated optimizer $R_{\lambda}\circ h_W$.
 
\revise{
\subsubsection{Regret}
While we focus on the competitive ratio analysis,
another metric commonly considered in the literature
 is the regret, which measures the difference between the cost achieved by an online algorithm and that by an oracle \cite{SOCO_OBD_R-OBD_Goel_Adam_NIPS_2019_NEURIPS2019_9f36407e}. Crucially, a desired result
 is that the regret grows sublinearly in time $T$, i.e., the time-averaged cost difference between an online algorithm and an oracle approaches zero as $T\to\infty$.
 Without ML predictions, the standard
 R-OBD can
 simultaneously achieve a sublinear regret while
 providing a dimension-free and bounded competitive ratio \cite{SOCO_OBD_R-OBD_Goel_Adam_NIPS_2019_NEURIPS2019_9f36407e}.
 Thus, it is interesting to see
 if \ouralgcal, which incorporates ML predictions into R-OBD, can also achieve the same.
To this end,
 we consider the $L$-constrained regret which, defined as follows, generalizes the classic static regret.

\begin{definition}[$L$-constrained Regret] For any problem instance with context $\bm{s}=(x_0,\bm{y})$,
suppose that an online algorithm %$\pi$
makes actions $x_{1:T}=(x_1,x_2,\cdots,x_T)$
and has a total cost of 
$cost(x_{1:T})=\sum_{t = 1}^T f(x_t, y_t) + c(x_t, x_{t-1})$.
 The
 $L$-constrained regret
is defined
as
\begin{equation}
\label{eqn:L_constrained_regret}
Regret(x_{1:T},x_{1:T}^L)=
cost(x_{1:T})- cost(x_{1:T}^L),
\end{equation}
where $x_{1:T}^L$
denotes the actions made by the
$L$-constrained optimal oracle  that
solves $x_{1:T}^L=\arg\min_{x_{1:T}} \sum_{t = 1}^T f(x_t, y_t) + c(x_t, x_{t-1})$ subject to $\sum_{t=1}^T c(x_t, x_{t-1}) \leq L$
and $cost(x_{1:T}^L)$ is the corresponding total cost.
\end{definition}

In the $L$-constrained regret,
the oracle's total switching cost is essentially constrained by $L$.
When $L=0$, the $L$-constrained regret reduces to the classic static regret
where the oracle is only allowed to make one static action;
when $L$ increases, the oracle has more flexibility, making
the $L$-constrained regret closer to the fully dynamic regret \cite{SOCO_OBD_R-OBD_Goel_Adam_NIPS_2019_NEURIPS2019_9f36407e}. As the regret
compares an online algorithm against an $L$-constrained oracle,
we also introduce $(L, L_\rho)$-constrained prediction to quantify the ML prediction quality as follows.

\begin{definition}[$(L, L_\rho)$-constrained Prediction]\label{def:predacc_L}
The ML predictions, denoted as $\tilde{x}_1,\cdots \tilde{x}_T$, are said to be $(L, L_\rho)$-constrained if 
the following is satisfied:
\begin{equation}\label{eqn:rho_L_accurate}
\sum_{t=1}^T \|\tilde{x}_t-x_t^L \|^2 \leq L_\rho,
\end{equation}
where $\|\cdot\|$ is the $l_2$-norm, and $x_{1}^L,\cdots,x_T^L$ are the actions made by the $L$-constrained optimal oracle.
\end{definition}

Note that, unlike $\rho$-accurate prediction  (Definition~\ref{def:predacc}) that
is scale-invariant and normalized with respect to the unconstrained oracle's
optimal cost, the notion of $(L, L_\rho)$-constrained prediction provides a different characterization of ML prediction quality by
quantifying the absolute squared errors
between ML predictions and the $L$-constrained oracle's actions. Next,
we provide an upper bound on the $L$-constrained regret
in Theorem~\ref{thm:regret}, whose proof can be found in
Appendix~\ref{appendix:proof_regret_mla}.

\begin{theorem}
\label{thm:regret}
Assume that the hitting cost function $f$ is $m$-strongly convex, and
that the switching cost function $c$ is the Mahalanobis distance by matrix $Q$ with the minimum and maximum eigenvalues being $\frac{\alpha}{2}>0$ and $\frac{\beta}{2}$, respectively.
Additionally, assume that $\| Q x \|$ is bounded by $G < \infty$, the size/diameter of the feasible action set $\mathcal{X}$ is $\omega=\sup_{x,x'\in\mathcal{X}}\|x-x'\|$,
and $\lambda_1 \geq 1 - \frac{m}{4\beta}$.
For any problem instance and $(L,L_{\rho})$-constrained ML predictions,
 if 
 $\lambda_3 < \frac{\alpha}{\beta}$ and the total
 switching cost satisfies $\sum_{t=1}^T c(x_t, x_{t-1}) \leq L$,
 the $L$-constrained regret of \ouralgcal satisfies $Regret(x_{1:T},x_{1:T}^L)\leq\frac{\alpha}{\alpha - \lambda_3 \beta} \big( (\lambda_1 + \frac{m}{2\beta} )G \sqrt{\frac{2TL}{\alpha}} + \lambda_2 \frac{\beta T \omega^2}{2} + \frac{\lambda_3 \beta}{2} L_\rho \big)$; otherwise, 
 we have $Regret(x_{1:T},x_{1:T}^L)\leq(\lambda_1 + \frac{m}{2\beta} )G \sqrt{\frac{2TL}{\alpha}} + (\lambda_2 +\lambda_3)  \frac{\beta T \omega^2}{2} $.
 Moreover, by setting
  $\lambda_2 = \eta_2 (T, L, \omega, G)$
  and $\lambda_3 = \eta_3 (T, L, \omega, G)$ such that $\lim_{T \rightarrow \infty} \left(\eta_2 (T, L, \omega, G) + \eta_3 (T, L, \omega, G) \right) \frac{\omega^2}{G}\sqrt{\frac{T}{L}} < \infty$, we have $Regret(x_{1:T},x_{1:T}^L)=O(G\sqrt{TL})$.
\end{theorem}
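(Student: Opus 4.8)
The plan is to bound the $L$-constrained regret by a per-step analysis that couples the first-order optimality condition defining each calibrated action $x_t$ in Line~5 of Algorithm~\ref{alg:RP-OBD} with the $m$-strong convexity of $f$ and the quadratic (three-point) structure of the Mahalanobis switching cost, and then to sum over $t$ while telescoping a potential built from the $Q$-distance between the algorithm's action and the $L$-constrained oracle's action $x_t^L$. Writing the switching cost as $c(a,b)=(a-b)^\top Q (a-b)$ and $\|a-b\|_Q^2 = c(a,b)$, I first decompose $Regret(x_{1:T},x_{1:T}^L)=\sum_{t=1}^T \big( f(x_t,y_t)-f(x_t^L,y_t) \big) + \sum_{t=1}^T \big( c(x_t,x_{t-1})-c(x_t^L,x_{t-1}^L) \big)$ and control each summand.

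Since $x_t$ minimizes the strongly convex objective over $\mathcal{X}$, the variational inequality evaluated at the feasible comparator $x_t^L$ gives $\langle \nabla f(x_t,y_t), x_t-x_t^L\rangle \le -2\lambda_1 (x_t-x_{t-1})^\top Q(x_t-x_t^L) -2\lambda_2 (x_t-v_t)^\top Q(x_t-x_t^L) -2\lambda_3 (x_t-\tilde{x}_t)^\top Q(x_t-x_t^L)$. Feeding this into the strong-convexity inequality $f(x_t,y_t)-f(x_t^L,y_t)\le \langle \nabla f(x_t,y_t), x_t-x_t^L\rangle -\frac{m}{2}\|x_t-x_t^L\|^2$ and applying the identity $2(x_t-u)^\top Q(x_t-x_t^L)=c(x_t,u)+c(x_t,x_t^L)-c(u,x_t^L)$ for $u\in\{x_{t-1},v_t,\tilde{x}_t\}$ converts every inner product into switching-cost-type terms. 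Collecting them, the $-\lambda_1 c(x_t,x_{t-1})$ piece combines with the $+c(x_t,x_{t-1})$ of the regret to leave $(1-\lambda_1)c(x_t,x_{t-1})$, the negative terms $-\lambda_2 c(x_t,v_t)-\lambda_3 c(x_t,\tilde{x}_t)$ can be discarded, and the comparison terms $+\lambda_1 c(x_{t-1},x_t^L)-(\lambda_1+\lambda_2+\lambda_3)c(x_t,x_t^L)$ emerge, together with positive residuals $\lambda_2 c(v_t,x_t^L)+\lambda_3 c(\tilde{x}_t,x_t^L)$.

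The telescoping is driven by the potential $\Phi_t=\lambda_1 c(x_t,x_t^L)$, with $\Phi_0=0$ because $x_0=x_0^L$. To line up $c(x_{t-1},x_t^L)$ against $\Phi_{t-1}=\lambda_1 c(x_{t-1},x_{t-1}^L)$, I expand $\|x_{t-1}-x_t^L\|_Q^2=\|x_{t-1}-x_{t-1}^L\|_Q^2+2(x_{t-1}-x_{t-1}^L)^\top Q(x_{t-1}^L-x_t^L)+\|x_{t-1}^L-x_t^L\|_Q^2$; the first piece telescopes into $\Phi_{t-1}-\Phi_t$, the last is the oracle's per-step switching cost (which cancels against $-c(x_t^L,x_{t-1}^L)$ since $\lambda_1\le 1$), and the cross term is bounded by moving $Q$ onto the feasible difference, $(x_{t-1}-x_{t-1}^L)^\top Q(x_{t-1}^L-x_t^L)\le \|Q(x_{t-1}-x_{t-1}^L)\|\cdot\|x_{t-1}^L-x_t^L\|\le G\|x_{t-1}^L-x_t^L\|$. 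Summing the oracle-movement norms by Cauchy--Schwarz and the eigenvalue bound gives $\sum_t \|x_{t-1}^L-x_t^L\|\le \sqrt{T\sum_t\|x_{t-1}^L-x_t^L\|^2}\le \sqrt{2TL/\alpha}$, using $\frac{\alpha}{2}\|x_{t-1}^L-x_t^L\|^2\le c(x_t^L,x_{t-1}^L)$ and $\sum_t c(x_t^L,x_{t-1}^L)\le L$, which yields the $G\sqrt{2TL/\alpha}$ factor. The two residual regularizer terms are then handled by the problem geometry: $\lambda_2 c(v_t,x_t^L)\le \lambda_2\frac{\beta}{2}\omega^2$ per step sums to $\lambda_2\frac{\beta T\omega^2}{2}$, and $\sum_t \lambda_3 c(\tilde{x}_t,x_t^L)\le \frac{\lambda_3\beta}{2}\sum_t\|\tilde{x}_t-x_t^L\|^2\le \frac{\lambda_3\beta}{2}L_\rho$ by Definition~\ref{def:predacc_L}.

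The hard part will be the careful absorption of the \emph{self-referential} switching-cost terms, which accounts for the exact coefficient $\lambda_1+\frac{m}{2\beta}$, the two cases, and the prefactor $\frac{\alpha}{\alpha-\lambda_3\beta}$. Unlike the clean telescoping pieces, the leftover $(1-\lambda_1)c(x_t,x_{t-1})$ and the cross-term contributions re-introduce the algorithm's own distances $\|x_t-x_t^L\|$ and switching costs $c(x_t,x_{t-1})$, so they must be dominated rather than discarded. Concretely, the assumption $\lambda_1\ge 1-\frac{m}{4\beta}$ is what lets $(1-\lambda_1)c(x_t,x_{t-1})$, after expanding $c(x_t,x_{t-1})\le \frac{\beta}{2}\|x_t-x_{t-1}\|^2$ and splitting $x_t-x_{t-1}$ by the triangle inequality into $\|x_t-x_t^L\|$, the oracle movement, and $\|x_{t-1}-x_{t-1}^L\|$, be absorbed into the available slack $-\frac{m}{2}\|x_t-x_t^L\|^2$ (with the oracle-movement remainder folding into the $G\sqrt{2TL/\alpha}$ term, contributing the extra $\frac{m}{2\beta}$ to its coefficient); and in the regime $\lambda_3<\frac{\alpha}{\beta}$ together with $\sum_t c(x_t,x_{t-1})\le L$, moving a residual $\lambda_3\beta$-weighted switching-cost term to the left-hand side leaves the positive coefficient $\alpha-\lambda_3\beta$ and hence, after dividing through, the multiplicative factor $\frac{\alpha}{\alpha-\lambda_3\beta}$; when that refinement is unavailable, bounding the residual crudely produces the looser second bound with the extra $\lambda_3\frac{\beta T\omega^2}{2}$. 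For the asymptotic claim I would plug $\lambda_2=\eta_2$ and $\lambda_3=\eta_3$ into the first bound: the stated scaling forces $\lambda_2\frac{\beta T\omega^2}{2}=O(G\sqrt{TL})$ and, since $L_\rho\le T\omega^2$, also $\frac{\lambda_3\beta}{2}L_\rho=O(G\sqrt{TL})$, while $\frac{\alpha}{\alpha-\lambda_3\beta}\to 1$, leaving the dominant $(\lambda_1+\frac{m}{2\beta})G\sqrt{2TL/\alpha}=O(G\sqrt{TL})$ and thus sublinear-in-$T$ regret whenever $L$ is sublinear.
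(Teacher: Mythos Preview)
Your overall architecture (first-order optimality of Line~5, $m$-strong convexity of $f$, three-point identities for $c$, and summing/telescoping) matches the paper's. However, the two places you flag as ``the hard part'' are precisely where your plan diverges from the paper and, as stated, does not go through.

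\textbf{The prefactor $\frac{\alpha}{\alpha-\lambda_3\beta}$.} This does \emph{not} arise from ``moving a $\lambda_3\beta$-weighted switching-cost term to the left''. In the paper, the mechanism is a self-reference on the \emph{regret itself}: one first proves (as a separate lemma) that the $T$-step cost $cost(x_{1:T})$ is $\alpha_1$-strongly convex in the full trajectory $x_{1:T}$, and combines this with the KKT conditions of the $L$-constrained oracle to obtain $\sum_t\|x_t-x_t^L\|^2\le \tfrac{2}{\alpha_1}\,Regret(x_{1:T},x_{1:T}^L)$ whenever the algorithm's trajectory is also $L$-feasible. Feeding this into $\sum_t 2\lambda_3(x_t-x_t^L)^\top Q(\tilde x_t-x_t)\le \tfrac{\lambda_3\beta}{2}\sum_t(\|x_t-x_t^L\|^2+\|\tilde x_t-x_t^L\|^2)$ produces a $\tfrac{\lambda_3\beta}{\alpha}\,Regret$ term on the right, and \emph{that} is what is moved left to yield $(1-\tfrac{\lambda_3\beta}{\alpha})\,Regret\le[\cdots]$. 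Your outline has no analogue of this strong-convexity-of-the-whole-cost/KKT step, and without it there is no route to the stated constant; nothing in your per-step bookkeeping generates a term proportional to the regret on the right-hand side.

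\textbf{The coefficient $\lambda_1+\tfrac{m}{2\beta}$ and the absorption of $(1-\lambda_1)c(x_t,x_{t-1})$.} The triangle-inequality splitting of $\|x_t-x_{t-1}\|$ you sketch inflates constants (e.g.\ $\|a+b+c\|^2\le 3(\|a\|^2+\|b\|^2+\|c\|^2)$) and leaves a $\|x_{t-1}-x_{t-1}^L\|^2$ summand at the wrong time index; after summing over $t$ the total exceeds the available $-\tfrac{m}{2}\sum_t\|x_t-x_t^L\|^2$ slack. The paper instead introduces an auxiliary parameter $q$: it rewrites the $\lambda_1$-block as $(\lambda_1+q)\big(c(x_t^L,x_{t-1})-c(x_t^L,x_t)\big)-\lambda_1 c(x_t,x_{t-1})$ minus a residual, shows $q\big(c(x_t^L,x_{t-1})-c(x_t^L,x_t)\big)+\tfrac{m}{2}\|x_t-x_t^L\|^2\ge (q-\tfrac{\beta q^2}{m})\,c(x_t,x_{t-1})$ via AM--GM, and sets $q=\tfrac{m}{2\beta}$ so that $q-\tfrac{\beta q^2}{m}=\tfrac{m}{4\beta}$. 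This is exactly what makes the hypothesis $\lambda_1\ge 1-\tfrac{m}{4\beta}$ tight and puts the factor $(\lambda_1+\tfrac{m}{2\beta})$ in front of the telescoped sum.

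A smaller issue: you bound $\|Q(x_{t-1}-x_{t-1}^L)\|\le G$, but the hypothesis is $\|Qx\|\le G$ for $x\in\mathcal X$, not for differences of points in $\mathcal X$. The paper avoids this by expanding $\sum_t\big(c(x_t^L,x_{t-1})-c(x_t^L,x_t)\big)$ bilinearly into inner products of the form $x_t^\top Q(x_t^L-x_{t+1}^L)$ before applying Cauchy--Schwarz and $\|Qx_t\|\le G$.
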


In Theorem~\ref{thm:regret}, we see
that the $L$-constrained regret upper bound is increasing
in both of the regularization weights
 $\lambda_2\geq0$ and $\lambda_3\geq0$.
That is, by keeping the actions closer to  the minimizer $v_t$ of the pure hitting cost and/or ML predictions $\tilde{x}_t$, the worst-case regret bound also increases. On the other hand,
by Theorem~\ref{thm:cr_robd},
 regularization by properly setting
 $\lambda_2>0$ and $\lambda_3>0$ is necessary to improve the competitive ratio
 performance and exploit the benefit of good ML predictions.
 Thus,
Theorem~\ref{thm:regret} does not necessarily guarantee a sublinear regret
when we aim at achieving the optimal competitive ratio. In fact,
even without ML predictions, achieving
both the optimal competitive ratio and a sublinear regret remains an open question \cite{SOCO_OBD_R-OBD_Goel_Adam_NIPS_2019_NEURIPS2019_9f36407e}. %that
 Nonetheless, Theorem~\ref{thm:regret} does show that
by choosing sufficiently small $\lambda_2\geq0$ and $\lambda_3\geq0$,
\ouralgcal can achieve a sublinear regret $O(G\sqrt{TL})$.
Together with
the result in Theorem~\ref{thm:cr_robd},
\ouralgcal also simultaneously achieves
 a dimension-free and conditionally-constant competitive ratio (conditioned on a finite prediction error $\rho$).
Note finally that by setting $\lambda_3=0$, our result on the $L$-constrained regret reduces to the one for  standard R-OBD \cite{SOCO_OBD_R-OBD_Goel_Adam_NIPS_2019_NEURIPS2019_9f36407e}.

}

\subsection{Loss Function for Training}

In Section~\ref{sec:calibrator_main}, we have described our differentiable expert calibrator
\ouralgcal and shown its key advantage --- improving performance robustness by lowering the worst-case competitive ratio of pure ML predictions. Besides the worst-case performance, another crucial goal is to achieve a low \emph{average} cost, for which
training the ML-based optimizer $h_W$ by explicitly considering the downstream calibrator $R_{\lambda}$ is essential (Section~\ref{sec:calibration_limitation_simple}).

As illustrated in Fig.~\ref{fig:illustration},
at each time step $t=1,\cdots,T$,
the calibrated action $x_t$ and its un-calibrated ML prediction  $\tilde{x}_t$ can be
expressed as
 \begin{equation}\label{eqn:calibratedopt}
 x_t=R_{\lambda}(y_t,x_{t-1},\tilde{x}_t) \;\;\text{ and  }\;\; \tilde{x}_t=h_W(y_t,x_{t-1}),
 \end{equation}
 where $R_{\lambda}$ is the expert calibrator (i.e., \ouralgcal) and $h_W$ is the ML-based optimizer.

Naturally,  our loss function for training
should explicitly consider the expert-calibrated
action $x_t$ to minimize the average cost, because
it is $x_t$, rather than the un-calibrated ML prediction $\tilde{x}_t$, that is actually being used by the online agent and directly determines the cost. Thus, our loss function includes
$\mathrm{cost}(R_{\lambda}\circ h_W,\bm{s})$
to supervise the training process.
On the other hand, if we only minimize
$\mathrm{cost}(R_{\lambda}\circ h_W,\bm{s})$ as our loss function, the ML-based optimzier $h_W$
is solely focusing on optimizing its un-calibrated prediction $\tilde{x}_t$ such that the post-calibration prediction $x_t$ achieves a low average cost. In other words, the un-calibrated
prediction $\tilde{x}_t$ could be very \emph{bad}
if we were to directly use $\tilde{x}_t$ as the agent's action.
By Theorem~\ref{thm:cr_robd}, the competitive ratio upper bound achieved by \ouralgcal
linearly increases with respect to the ML prediction error $\rho$. This means that, if the
pure ML predictions
$\tilde{x}_t$ are of a very low quality and have a
large prediction error, then the resulting competitive ratio of the expert-calibrated optimizer
$R_{\lambda}\circ h_W$ can also be very large, compromising the worst-case performance robustness.

To address this issue and make the calibrator more useful, we define another loss for the ML-based optimizer as follows
\begin{equation}\label{eqn:loss_new}
l(h_W,\bm{s})=\mathrm{relu}\left(\frac{\sum_{t=1}^T\|\tilde{x}_t-x^*_t\|^2}{\mathrm{cost}(\pi^*,\bm{s})}-\bar{\rho}\right),
\end{equation}
where $\mathrm{relu}(x)=\max(x,0)$, $\tilde{x}_t$ is the output of $h_W$,
$x^*_t$ is the offline optimal oracle's action
and $\bar{\rho}$ is a threshold of prediction error to determine whether a prediction is good enough.
The added loss essentially regularizes the ML-based optimizer by encouraging it to predict actions with low prediction errors.

By balancing the two losses via a hyperparameter
$\mu\in[0,1]$, our loss function for training the ML-based optimizer $h_W$ on a dataset $\mathcal{D}$
is given as follows:
\begin{equation}\label{eqn:unrolling}
	L_{\mathcal{D},R_{\lambda},\mu}(h_W) =\mu \frac{1}{|\mathcal{D}|}\sum_{\bm{s}\in \mathcal{D}}l(h_W,\bm{s}) +(1-\mu)\frac{1}{|\mathcal{D}|}\sum_{\bm{s}\in \mathcal{D}}\mathrm{cost}(R_{\lambda}\circ h_W,\bm{s}).
\end{equation}

For the training loss function in Eqn.~\eqref{eqn:unrolling}, by setting $\mu=1$, we recover the pure ML-based optimizer that can have a high cost when the testing input instances are far from the training instances. If $\mu=0$, we simply optimize the average post-calibration cost while ignoring the ML prediction error.
Most typically, we set $\mu\in(0,1)$ to restrain the ML prediction error for meaningful expert calibration, and also to reduce the average cost.

The training dataset can be constructed based on historical problem instances and also possibly enlarged via data augmentation. As we directly optimize the ML model weights to minimize the sum of hitting and switching costs, we do not need labels (i.e., the offline oracle's or an expert algorithm's actions).
Finally, note that we can also use held-out
validation dataset to tune the hyperparameters
(e.g., $\mu$, $\theta=\frac{\lambda_3}{\lambda_1}$, and learning rate) to achieve the most desired balance between the average cost and competitive ratio.

\subsection{Differentiating the Expert Calibrator}

We are now ready to derive the gradients of our differentiable expert calibrator \ouralgcal shown in Algorithm~\ref{alg:RP-OBD}. This is a crucial but non-trivial step for
two reasons: first, unlike an explicit layer with simple matrix computation along with an activation function  in a standard neural network,
the expert-calibrator \ouralgcal is essentially
an \emph{implicit} layer that executes an expert algorithm on its own \cite{DNN_ImplicitLayers_Zico_Website,L2O_DifferentiableConvexOptimization_Brandon_NEURIPS2019_9ce3c52f,L2O_DifferentiableOptimization_Brandon_ICML_2017_amos2017optnet};
and second, due to the recurrent structure for sequential decision making,
our backpropagation needs to be performed recurrently through time.

To efficiently train the ML-based optimizer $h_W$ in \ouralg,
we can apply various gradient-based algorithms such as
SGD and Adam. These algorithms all require
 backpropagation and hence gradients of the objective in Eqn.~\eqref{eqn:unrolling}.
  Thus, we need to differentiate $l(h_W,\bm{s})$ and $\mathrm{cost}(R_{\lambda}\circ h_W,\bm{s})$ with respect to the weight $W$ in the ML-based optimizer
  $h_W$.
  Next, we focus on
   the gradient of  $\mathrm{cost}(R_{\lambda}\circ h_W,\bm{s})$, while a similar method can be used to differentiate $l(h_W,\bm{s})$.

 By the expression of cost in Eqn.~\eqref{eqn:obj}, the gradient of $\mathrm{cost}(R_{\lambda}\circ h_W,\bm{s})$ given an input sequence $\bm{s}$ is written as
$$
\nabla_{W}\mathrm{cost}(R_{\lambda}\circ h_W,\bm{s})=\sum_{t=1}^T \nabla_{x_t}\left(f(x_t(W),y_t)+c(x_t(W)-x_{t-1}(W))\right)\nabla_{W}R_{\lambda}(y_t,x_{t-1}(W),\tilde{x}_t(W))
$$ where the expert-calibrated action $x_t$ and
the un-calibrated ML prediction $\tilde{x}_t$ are written as $x_t(W)$ and $\tilde{x}_t(W)$, respectively, to emphasize their dependence on the ML model weight $W$. Then, the gradient of the expert-calibrated action $x_t(W)$ with respect to $W$ is further written as
\begin{equation}\nonumber
\begin{split}
&\nabla_{W}R_{\lambda}(y_t,x_{t-1}(W),\tilde{x}_t(W))\\
=&\nabla_{x_{t-1}}R_{\lambda}(y_t,x_{t-1}(W),\tilde{x}_t(W))\nabla_{W}x_{t-1}(W)+\nabla_{\tilde{x}_{t}}R_{\lambda}(y_t,x_{t-1}(W),\tilde{x}_t(W))\nabla_{W}\tilde{x}_t(W),
\end{split}
\end{equation}
where $\nabla_{W}x_{t-1}(W)=\nabla_{W}R_{\lambda}(y_{t-1},x_{t-2}(W),\tilde{x}_{t-1}(W))$ is the gradient of the expert-calibrated action with respect to $W$ at step $t-1$ and $\nabla_{W}\tilde{x}_t(W)=\nabla_{W}h_W(y_{t-1},x_{t-1}(W))$ is the gradient of the un-calibrated ML prediction.

Now, it remains to differentiate the implicit layer of the expert calibrator. We cannot directly differentiate the calibrator.
But, since \ouralgcal solves a convex optimization problem at each time step, we instead differentiate it by its optimum condition shown as follows
\begin{equation}
\nabla_{x_t} f(x_t, y_t) + \lambda_1 \nabla_{x_t} c(x_t, x_{t-1}) + \lambda_2 \nabla_{x_t} c(x_t, v_t) + \lambda_3 \nabla_{x_t} c(x_t, \tilde{x}_t) = 0.
\end{equation}
By taking the gradients for both sides and solving the obtained equation, we can derive the gradients of $R_{\lambda}$ with respect to $\tilde{x}_{t}$ and $x_{t-1}$ as
\begin{equation}\label{eqn:gradientml}
\nabla_{\tilde{x}_t}R_{\lambda}(y_t,x_{t-1},\tilde{x}_t)=-\lambda_3 Z_t^{-1} \nabla_{\tilde{x}_t,x_t}c(x_t,\tilde{x}_t),
\end{equation}
%and
\begin{equation}\label{eqn:gradientprior}
\nabla_{x_{t-1}}R_{\lambda}(y_t,x_{t-1},\tilde{x}_t)=-\lambda_1 Z_t^{-1} \nabla_{x_{t-1},x_t}c(x_t,x_{t-1}),
\end{equation}
where $Z_t=\nabla_{x_t,x_t} \left[f(x_t, y_t) + \lambda_1  c(x_t, x_{t-1}) + \lambda_2  c(x_t, v_t) + \lambda_3c(x_t, \tilde{x}_t)\right]$. Note that for our switching cost $c(x_t,x_{t-1})=\left(x_t-x_{t-1}\right) ^\top Q \left(  x_t-x_{t-1}\right)$
defined in terms of
the Mahalanobis distance with respect to  $Q\in\mathcal{R}^{d\times d}$, we have
 $\nabla_{\tilde{x}_t,x_t}c(x_t,\tilde{x}_{t})=\nabla_{x_{t-1},x_t}c(x_t,x_{t-1})=-2Q$ and $\nabla_{x_t,x_t}c(x_t,x_{t-1})=\nabla_{x_t,x_t}c(x_t, v_t)=\nabla_{x_t,x_t}c(x_t, \tilde{x}_t)=2Q$.
The details of deriving the gradients in Eqn.~\eqref{eqn:gradientml} and Eqn.~\eqref{eqn:gradientprior} are given in Appendix~\ref{sec:gradientproof}.
\section{Performance Analysis}

To conclude the design of \ouralg, we now analyze its performance in terms of
 the average cost as well as the tail cost ratio.

\subsection{Average Cost}\label{sec:analysis_average}
Average cost is a crucial metric and measures the performance of \ouralg in many typical cases.
Next, we analyze the average cost of \ouralg by the generalization theory of statistical learning \cite{ML_Book_foundation_ML_mohri2018foundations}. First, we define the optimal ML model weight $W^*$ in $h_W$ as follows.
\begin{definition}
The optimal model weight $W^*$ in the ML-based
optimizer $h_W$ of \ouralg is defined as
the minimizer of the expected training loss below:
\begin{equation}\label{eqn:expectedloss}
	W^*=\arg\min_{W\in\mathcal{W}}\mu \mathbb{E}\left[l(h_W,\bm{s})\right] +(1-\mu)\mathbb{E}\left[\mathrm{cost}(R_{\lambda}\circ h_W,\bm{s})\right],
\end{equation}
where $\mathcal{W}$ is the weight space depending
on the model $h_W$ (e.g., DNN with a certain  architecture) and the expectation is taken over the  environment distribution $\mathbb{P}$ of the input instance $s$.
\end{definition}

The optimal ML model, i.e., $h_{W^*}$, minimizes the
weighted sum of the expected average cost of the expert-calibrated optimizer and the expectation of the ML prediction error $\rho$ (Definition~\ref{def:predacc}) conditioned on $\rho\geq \bar{\rho}$.
By setting $0<\mu<1$,   $R_{\lambda}\circ h_{W^*}$ is essentially an online optimizer that minimizes the average cost while constraining the \emph{average} prediction error $\rho$ conditioned on $\rho\geq \bar{\rho}$.

Next, we denote the optimal weight $\hat{W}^*$ that minimizes the empirical loss function in Eqn.~\eqref{eqn:unrolling} as
$
\hat{W}^*=\arg\min_{W\in\mathcal{W}}L_{\mathcal{D},R_{\lambda},\mu}(h_W)
$. Assuming that $\hat{W}\in\mathcal{W}$ is the model weight after training on the loss function
in Eqn.~\eqref{eqn:unrolling}, the training error on the training dataset $\mathcal{D}$ is denoted as
\begin{equation}\label{eqn:training_error}
\mathcal{E}_{\mathcal{D}}=L_{\mathcal{D},R_{\lambda},\mu}(h_{\hat{W}})-L_{\mathcal{D},R_{\lambda},\mu}(h_{\hat{W}^*}).
\end{equation}
Now,
we bound the expected average cost in the next theorem, whose proof is in Appendix \ref{sec:proofavgcost}.
\begin{theorem}\label{thm:avgcostclb}
Assume that the dataset $\mathcal{D}$ is drawn from a distribution $\mathbb{P}'$ and the environment distribution is $\mathbb{P}$. If  $R_{\lambda}\circ{h}_{\hat{W}}$ is trained on the loss function in Eqn.~\eqref{eqn:unrolling}, then with probability at least $1-\delta, \delta\in(0,1)$, we have
	\begin{equation}\label{eqn:generalization_bound}
	\mathrm{AVG}(R_{\lambda}\circ{h}_{\hat{W}})\leq \mathrm{AVG}(R_{\lambda}\circ h_{W^*})+\frac{\mu}{1-\mu}\mathbb{E}\left[l(h_{W^*},\bm{s})\right]+\frac{1}{1-\mu}\mathcal{E}_{\mathcal{D}}+O \left( \sqrt{\frac{\log(1/\delta)}{|\mathcal{D}|}}+D(\mathbb{P},\mathbb{P}')\right),
	\end{equation}
where $\mu$ is the trade-off parameter in Eqn.~\eqref{eqn:expectedloss}, $\mathcal{E}_{\mathcal{D}}$ is the training error in Eqn.~\eqref{eqn:training_error}, $|\mathcal{D}|$ is the number of instances in the training dataset, and
$D(\mathbb{P},\mathbb{P}')$ measures the distributional discrepancy between the distribution $\mathbb{P}'$ from which
the training instances are drawn from and the environment distribution $\mathbb{P}$.
\end{theorem}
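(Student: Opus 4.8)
The plan is to carry out a standard decomposition from statistical learning theory, using the non-negativity of the auxiliary loss $l$ to turn the population training objective into a bound on the average cost, and then bridging empirical and population quantities through a uniform-convergence term and a distribution-discrepancy term.

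First I would introduce shorthand for the population objective over the environment distribution, $L_{\mathbb{P},R_{\lambda},\mu}(h_W)=\mu\,\mathbb{E}_{\mathbb{P}}[l(h_W,\bm{s})]+(1-\mu)\,\mathbb{E}_{\mathbb{P}}[\mathrm{cost}(R_{\lambda}\circ h_W,\bm{s})]$, and the analogous $L_{\mathbb{P}',R_{\lambda},\mu}(h_W)$ over the data-generating distribution. Since $l(h_W,\bm{s})=\mathrm{relu}(\cdots)\geq 0$, dropping its term yields the key one-line inequality $(1-\mu)\,\mathrm{AVG}(R_{\lambda}\circ h_{\hat{W}})\leq L_{\mathbb{P},R_{\lambda},\mu}(h_{\hat{W}})$, so it suffices to upper bound $L_{\mathbb{P},R_{\lambda},\mu}(h_{\hat{W}})$ and divide by $(1-\mu)$.

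Next I would build the following chain. By the distribution-discrepancy assumption, $L_{\mathbb{P},R_{\lambda},\mu}(h_{\hat{W}})\leq L_{\mathbb{P}',R_{\lambda},\mu}(h_{\hat{W}})+O\!\left(D(\mathbb{P},\mathbb{P}')\right)$. By uniform convergence over the hypothesis class (via Rademacher complexity or a Hoeffding-type bound, using that both $\mathrm{cost}(R_{\lambda}\circ h_W,\cdot)$ and $l(h_W,\cdot)$ are bounded on $\mathcal{S}$), with probability at least $1-\delta$ we have $\sup_{W}\left|L_{\mathbb{P}',R_{\lambda},\mu}(h_W)-L_{\mathcal{D},R_{\lambda},\mu}(h_W)\right|=O\!\left(\sqrt{\log(1/\delta)/|\mathcal{D}|}\right)$, giving $L_{\mathbb{P}',R_{\lambda},\mu}(h_{\hat{W}})\leq L_{\mathcal{D},R_{\lambda},\mu}(h_{\hat{W}})+O\!\left(\sqrt{\log(1/\delta)/|\mathcal{D}|}\right)$. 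The definition of the training error in Eqn.~\eqref{eqn:training_error} gives $L_{\mathcal{D},R_{\lambda},\mu}(h_{\hat{W}})=L_{\mathcal{D},R_{\lambda},\mu}(h_{\hat{W}^*})+\mathcal{E}_{\mathcal{D}}$, and the empirical optimality of $\hat{W}^*$ gives $L_{\mathcal{D},R_{\lambda},\mu}(h_{\hat{W}^*})\leq L_{\mathcal{D},R_{\lambda},\mu}(h_{W^*})$. Applying the same uniform-convergence and discrepancy bounds in the reverse direction yields $L_{\mathcal{D},R_{\lambda},\mu}(h_{W^*})\leq L_{\mathbb{P},R_{\lambda},\mu}(h_{W^*})+O\!\left(\sqrt{\log(1/\delta)/|\mathcal{D}|}+D(\mathbb{P},\mathbb{P}')\right)$. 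Chaining these and expanding $L_{\mathbb{P},R_{\lambda},\mu}(h_{W^*})=\mu\,\mathbb{E}[l(h_{W^*},\bm{s})]+(1-\mu)\,\mathrm{AVG}(R_{\lambda}\circ h_{W^*})$, then dividing by $(1-\mu)$, produces exactly the stated bound, with the factors $\frac{\mu}{1-\mu}$ and $\frac{1}{1-\mu}$ arising from this final division.

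The main obstacle is the uniform-convergence step, which requires controlling the complexity of the \emph{composed} class $\{R_{\lambda}\circ h_W\}$ rather than of $\{h_W\}$ alone: because $R_{\lambda}$ is an implicit argmin layer (Algorithm~\ref{alg:RP-OBD}), I would need its per-instance cost and the auxiliary loss $l$ to be bounded over $\mathcal{S}$ and to depend on $W$ in a sufficiently regular (e.g.\ Lipschitz) manner so that a standard covering-number argument applies. Establishing this regularity---equivalently, that the solution of the strongly convex calibration problem varies smoothly with the ML prediction, which the sensitivity formulas in Eqn.~\eqref{eqn:gradientml} and Eqn.~\eqref{eqn:gradientprior} already indicate---together with making $D(\mathbb{P},\mathbb{P}')$ precise (e.g.\ as an integral-probability metric compatible with the loss class, so it controls the loss gap in both directions) is where the real work lies; the remaining algebra is routine.
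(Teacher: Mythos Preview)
Your decomposition matches the paper's exactly: drop the nonnegative $l$-term to get $(1-\mu)\,\mathrm{AVG}(R_\lambda\circ h_{\hat W})\le L_{\mathbb{P},R_\lambda,\mu}(h_{\hat W})$, chain through the empirical loss via the training-error definition and the empirical optimality of $\hat W^*$, then return to the population loss at $W^*$ and expand. The only difference is the source of the $O\!\big(\sqrt{\log(1/\delta)/|\mathcal D|}\big)$ term. You invoke uniform convergence over the composed class $\{R_\lambda\circ h_W:W\in\mathcal W\}$, which---as you correctly flag---requires controlling its complexity. The paper instead takes $D$ to be the Wasserstein distance, uses the triangle inequality $D(\mathbb P,\mathbb P_{\mathcal D})\le D(\mathbb P',\mathbb P_{\mathcal D})+D(\mathbb P,\mathbb P')$ together with a concentration bound $D(\mathbb P',\mathbb P_{\mathcal D})=O\!\big(\sqrt{\log(1/\delta)/|\mathcal D|}\big)$, and then controls $|\mathbb E_{\mathbb P}[L(h_W,\cdot)]-L_{\mathcal D}(h_W)|$ by $O\!\big(D(\mathbb P,\mathbb P_{\mathcal D})\big)$ via Lipschitzness of the per-instance loss in $\bm s$. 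Because the Wasserstein distance does not depend on $W$, this bound is automatically uniform over the hypothesis class and sidesteps the covering-number obstacle you identify; in exchange one needs the per-instance loss to be Lipschitz in $\bm s$ rather than merely bounded. Both routes are standard and yield the stated bound.
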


Theorem~\ref{thm:avgcostclb} quantifies the average cost gap between the expert-calibrated learning model $R_{\lambda}\circ{h}_{\hat{W}}$ and the optimal optimizer $R_{\lambda}\circ h_{W^*}$.
We can see that the  gap is affected by the training-induced  error
$\mathcal{E}_{\mathcal{D}}$
 and the generalization error induced by the distribution discrepancy between the training empirical distribution and the environment distribution $\mathbb{P}$.
Also,
 the average cost bound has an additional  term $\frac{\mu}{1-\mu}\mathbb{E}\left[l(h_{W^*},\bm{s})\right]$ which is caused by the first term in the training loss function in Eqn.~\eqref{eqn:unrolling}.
 When $\mu=0$, although the additional term disappears and $R_{\lambda}\circ h_{\hat{W}}$ purely minimizes the average cost,
 the downside is that the
  pre-calibration ML prediction error could be empirically very high, which thus leads to a high post-calibration competitive ratio upper bound (Theorem~\ref{thm:cr_robd}).
 On the other hand,
  when $\mu=1$, the average cost is neglected during the training process and hence can be unbounded, which also implies
  that minimizing the average prediction error
  (and hence {average} \emph{ratio} of the
  expert-calibrated optimizer's cost to the offline optimal cost) does not necessarily lead to a lower average cost.
In the bound in Eqn.~\eqref{eqn:generalization_bound},
 we also  see the term  $D(\mathbb{P},\mathbb{P}')$ due to the training-testing distributional discrepancy. When $D(\mathbb{P},\mathbb{P}')$ increases,
 although the average cost upper bound becomes larger, %we note that
 the benefit of expert calibration can be more significant compared to purely using ML predictions without calibration.
 This will be empirically validated in the next section and also can be explained as follows.
 By viewing \ouralg as a two-layer model (i.e.,
 the ML-based optimizer layer followed by the expert calibration layer), we see that
 the first layer is trainable,  but the second layer is an expert algorithm that is specially designed
 for robustness and hence less vulnerable to the distributional shift.
 Thus, compared to
 a pure ML-based optimizer
 trained over the training distribution,
 \ouralg with an additional expert calibration layer will be affected less by
 distributional shifts.

\revise{Finally, we explain the necessity of holistically training the expert-calibrated optimizer $R_{\lambda}\circ h_{\hat{W}}$
 from the perspective of the generalization bound.
 For the sake of discussion, we assume $\mu=0$ in the loss function to focus
 on the average cost.
 If we
follow the  two-stage approach
(Section~\ref{sec:calibration_limitation_simple}) and train the ML-based optimizer as a standalone model with the weight $\tilde{W}$,
  we can derive a new  generalization
bound for the simple
optimizer $R\oplus h_{\tilde{W}}$.
The new generalization bound
takes the same form of Eqn.~\eqref{eqn:generalization_bound}
except for changing ML model weight from $\hat{W}$ to $\tilde{W}$.
 Specifically, the error term $\mathcal{E}_{\mathcal{D}}$ contained in the new
 bound
 becomes
$L_{\mathcal{D},R_{\lambda},\mu}(h_{\tilde{W}})-L_{\mathcal{D},R_{\lambda},\mu}(h_{\hat{W}^*})$,
where the ML model weight $\tilde{W}$
is trained separately to minimize
the empirical pre-calibration loss while
the term $L_{\mathcal{D},R_{\lambda},\mu}(h_{\tilde{W}})$ is evaluated based
on the empirical post-calibration  loss. On the other hand,
the training  error
$\mathcal{E}_{\mathcal{D}}$ defined in
Eqn.~\eqref{eqn:training_error}
contained in
  the generalization bound
  for $R_{\lambda}\circ{h}_{\hat{W}}$
  in Eqn.~\eqref{eqn:generalization_bound}
 can be made sufficiently small by using state-of-the-art training algorithms \cite{DNN_Book_Goodfellow-et-al-2016}.
Thus,
by simply concatenating
an ML-based optimizer $h_{\tilde{W}}$
with an expert calibrator $R$,
 the new optimizer
 $R\oplus h_{\tilde{W}}$
 will likely
 have a larger generalization bound
 than $R_{\lambda}\circ{h}_{\hat{W}}$
 due to the increased training error term.}

\subsection{Tail Cost Ratio}\label{sec:tail_cost_ratio}

For $\rho$-accurate ML predictions, the
competitive ratio of \ouralg by using
\ouralgcal as the expert calibrator follows
Theorem~\ref{thm:cr_robd}.
While \ouralg can lower the competitive
ratio of R-OBD with good ML predictions,
the worst-case competitive
ratio keeps increasing as the ML prediction error $\rho$ increases.
This result comes from
the fundamental limit for our problem setting
as shown in \cite{SOCO_OnlineOpt_UntrustedPredictions_Switching_Adam_arXiv_2022}.
 Thus, instead of considering the pessimistic worst-case competitive ratio of \ouralg, we resort to the high-percentile tail ratio of the cost achieved by \ouralg to that of the offline optimal oracle.
We simply refer to this ratio as the tail cost ratio, which
can provide a probabilistic view of the performance robustness of the algorithm.

\begin{theorem}\label{thm:tail_ratio}
Assume that the lower bound of the offline optimal cost is $\nu=\inf_{\bm{s}\in\mathcal{S}}\mathrm{cost}(\pi^*,\bm{s})$ and the size of the action set $\mathcal{X}$ is $\omega=\sup_{x,x'\in\mathcal{X}}\|x-x'\|$. Given the same assumptions as in Theorem~\ref{thm:avgcostclb}, with probability at least $1-\delta, \delta\in(0,1)$ regarding the randomness of input sequence $\bm{s}$,  we have
\begin{equation}\label{eqn:tail_cost_ratio_theorem}
\frac{\mathrm{cost}(R_{\lambda}\circ h_{\hat{W}},\bm{s})}{\mathrm{cost}(\pi^*,\bm{s})}\leq 1 + \frac{1}{2}
\left[\sqrt{(1+\frac{ \beta }{m}\theta)^2 + \frac{4 \beta^2}{m \alpha}} - \left(1 + \frac{\beta}{m}\theta)\right)\right] + \frac{\beta}{2} \theta\cdot\rho_{\mathrm{tail}},
\end{equation}
where $\alpha$, $\beta$, $m$ and $\theta$ are explained in Theorem \ref{thm:cr_robd}, and the tail ML prediction error
$\rho_{\mathrm{tail}}=\bar{\rho}+\mathbb{E}\left[l(h_{W^*},\bm{s})\right]+\frac{1-\mu}{\mu}\mathbb{E}\left[\mathrm{cost}(R_{\lambda}\circ h_{W^*},\bm{s})\right]+\frac{1}{\mu}\mathcal{E}_{\mathcal{D}}
    +O \left( \sqrt{\frac{\log(2/\delta)}{|\mathcal{D}|}}\right)+O\left(D(\mathbb{P},\mathbb{P}')\right)+O\left(\frac{\omega^2 \sqrt{T}}{\nu}\|\Gamma\|\sqrt{\frac{1}{2}\log(\frac{4}{\delta})}\right)$,
    in which $\mu$, $\mathcal{D}$, $\mathcal{E}_{\mathcal{D}}$ and $D(\mathbb{P},\mathbb{P}')$ are given in Theorem~\ref{thm:avgcostclb}, $\bar{\rho}$ is the prediction error threshold parameter in Eqn.~\eqref{eqn:loss_new},
   \revise{$\Gamma$ is the mixing matrix   with respect to
a partition
   of the random sequence $\chi =\left\{\|\tilde{x}_t-x_t^*\|^2/\mathrm{cost}(\pi^*,\bm{s}), t\in[T]\right\}$ \cite{Markov_concentration_paulin2015concentration}, and $\|\Gamma\|$ denotes the Frobenius norm.}
    \end{theorem}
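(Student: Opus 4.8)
The plan is to reduce this high-probability statement to the deterministic per-instance competitive-ratio bound of Theorem~\ref{thm:cr_robd} and then control the realized prediction error of a random instance. The key observation is that the right-hand side of Theorem~\ref{thm:cr_robd}, viewed as a function of the prediction error $\rho$, is affine and non-decreasing (since $\beta,\theta\geq0$). Therefore, writing $\rho(\bm{s})=\frac{\sum_{t=1}^T\|\tilde{x}_t-x_t^*\|^2}{\mathrm{cost}(\pi^*,\bm{s})}$ for the realized error of the trained model $h_{\hat{W}}$, it suffices to prove that $\rho(\bm{s})\leq\rho_{\mathrm{tail}}$ holds with probability at least $1-\delta$; the claimed bound in Eqn.~\eqref{eqn:tail_cost_ratio_theorem} then follows immediately by applying Theorem~\ref{thm:cr_robd} instance-wise (each instance's predictions are $\rho(\bm{s})$-accurate by definition) and substituting the monotone upper bound on $\rho(\bm{s})$.

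To bound $\rho(\bm{s})$ I would decompose it as $\rho(\bm{s})=\mathbb{E}[\rho(\bm{s})]+(\rho(\bm{s})-\mathbb{E}[\rho(\bm{s})])$ and treat the mean and the fluctuation separately. For the mean, the elementary inequality $x\leq\bar{\rho}+\mathrm{relu}(x-\bar{\rho})$ together with the definition of $l(h_W,\bm{s})$ in Eqn.~\eqref{eqn:loss_new} gives $\rho(\bm{s})\leq\bar{\rho}+l(h_{\hat{W}},\bm{s})$ pointwise, hence $\mathbb{E}[\rho(\bm{s})]\leq\bar{\rho}+\mathbb{E}[l(h_{\hat{W}},\bm{s})]$. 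I would then reuse the generalization argument behind Theorem~\ref{thm:avgcostclb}: with high probability the expected weighted loss of $h_{\hat{W}}$ is dominated by that of the reference model $h_{W^*}$ plus the training error $\mathcal{E}_{\mathcal{D}}$, the sampling term $O(\sqrt{\log(1/\delta)/|\mathcal{D}|})$, and the distributional-discrepancy term $O(D(\mathbb{P},\mathbb{P}'))$. Since both $l$ and $\mathrm{cost}$ are non-negative, keeping only the $\mu$-weighted loss term and dividing by $\mu$ yields $\mathbb{E}[l(h_{\hat{W}},\bm{s})]\leq\mathbb{E}[l(h_{W^*},\bm{s})]+\frac{1-\mu}{\mu}\mathbb{E}[\mathrm{cost}(R_{\lambda}\circ h_{W^*},\bm{s})]+\frac{1}{\mu}\mathcal{E}_{\mathcal{D}}+O(\cdots)$, which reproduces exactly the non-concentration terms of $\rho_{\mathrm{tail}}$.

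For the fluctuation $\rho(\bm{s})-\mathbb{E}[\rho(\bm{s})]$, I would view $\rho(\bm{s})=\sum_{t=1}^T\chi_t$ as a function of the temporally correlated context sequence and apply a concentration inequality for dependent variables driven by the mixing matrix $\Gamma$ of the partition of $\chi$ \cite{Markov_concentration_paulin2015concentration}. The bounded-difference input is supplied by the diameter assumption $\|\tilde{x}_t-x_t^*\|\leq\omega$ and the cost lower bound $\mathrm{cost}(\pi^*,\bm{s})\geq\nu$, giving $\chi_t\leq\omega^2/\nu$; the $\sqrt{T}$ scaling of the per-coordinate differences then produces the deviation term $O\!\left(\frac{\omega^2\sqrt{T}}{\nu}\|\Gamma\|\sqrt{\tfrac{1}{2}\log(4/\delta)}\right)$. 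I expect this concentration step to be the main obstacle: because the contexts are not i.i.d., a plain Hoeffding/Azuma argument does not apply, and the global normalization by $\mathrm{cost}(\pi^*,\bm{s})$ couples every summand $\chi_t$, so care is needed to certify the bounded-difference property and to justify the mixing-matrix form of Paulin's bound. Finally I would allocate the failure probability $\delta$ between the generalization event and the concentration event via a union bound (which accounts for the $\log(2/\delta)$ and $\log(4/\delta)$ factors), chain the resulting inequalities to conclude $\rho(\bm{s})\leq\rho_{\mathrm{tail}}$ with probability at least $1-\delta$, and invoke the monotonicity of the Theorem~\ref{thm:cr_robd} bound to finish.
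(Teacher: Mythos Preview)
Your proposal is correct and follows essentially the same approach as the paper's proof: reduce to Theorem~\ref{thm:cr_robd} via a high-probability bound on the realized prediction error, obtain the mean part from the generalization argument behind Theorem~\ref{thm:avgcostclb} by dropping the nonnegative cost term and dividing through by $\mu$, and obtain the fluctuation part from Paulin's McDiarmid-type inequality for dependent sequences using the bounded-difference constant $\omega^2/\nu$. The only cosmetic difference is that the paper applies the concentration step to $l(h_{\hat{W}},\bm{s})$ and then invokes $\rho(\bm{s})\leq\bar{\rho}+l(h_{\hat{W}},\bm{s})$ at the end, whereas you first pass to $\rho(\bm{s})$ and concentrate that directly; since $l$ is $1$-Lipschitz in $\rho$ both choices yield the same bounded-difference constant and the same final bound.
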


The proof of Theorem~\ref{thm:tail_ratio} is available in Appendix \ref{sec:proof_tailratio}. The key insight
is that for the hyperparameter $\mu\in(0,1]$ that
balances the ML prediction error and the average cost in the training loss function in Eqn.~\eqref{eqn:unrolling}, the tail ML prediction error $\rho_{\mathrm{tail}}$ is bounded, which, by Theorem~\ref{thm:cr_robd},
also leads to a bounded tail cost ratio.
More precisely, as $\mu$ increases within
$(0,1]$, more emphasis is placed on the ML prediction error in the training loss function
in Eqn.~\eqref{eqn:unrolling}. As a result,
the tail ML prediction error also decreases,
resulting in a reduced tail cost ratio.
In the extreme case of $\mu=0$,  the ML prediction error is completely neglected in
the loss function in Eqn.~\eqref{eqn:unrolling}
during the training process. Thus, as intuitively expected, the tail cost ratio can be extremely large and unbounded.

\revise{In Theorem~\ref{thm:tail_ratio}, the ML prediction error $\rho_{\mathrm{tail}}$ is a crucial term affected by the size of action set $\omega=\sup_{x,x'\in\mathcal{X}}\|x-x'\|$, the lower bound of the offline optimal cost $\nu=\inf_{\bm{s}\in\mathcal{S}}\mathrm{cost}(\pi^*,\bm{s})$, and the
Frobenius norm of the mixing matrix $\Gamma$  due to  the McDiarmid's inequality for dependent variables \cite{Markov_concentration_paulin2015concentration}.
Among them, the mixing matrix $\Gamma$ is an upper diagonal matrix,
whose dimension is the length of the corresponding partition of $\chi$. The mixing matrix relies on the distribution of $\chi$ and its norm affects the
degree of concentration of the sequence $\chi$.
Specifically, a sequence with independent variables has a partition with the identity matrix as the mixing matrix, and
a uniformly ergodic Markov chain has a partition with the mixing matrix  $\Gamma_{i,j}\leq\epsilon^{\mathrm{relu}(j-i-1)}\mathds{1}(j\geq i)$ where $\epsilon\in [0,1)$ is a parameter determined by the partition \cite{Markov_concentration_paulin2015concentration}. Given a fixed set of
other terms, the less concentration of
the sequence $\chi$ (or higher $\|\Gamma\|$), the higher tail ML prediction error
as well as the higher tail cost ratio.}

By setting
 $\mu\in(0,1)$, we highlight the importance of considering the loss function in  Eqn.~\eqref{eqn:unrolling}
 that includes two different losses --- one for the ML-based optimizer to have
a low average prediction error, and the other one for the post-calibration ML calibrations to have
a low average cost.
 Albeit weaker than the worst-case competitive ratio, the result in Theorem~\ref{thm:tail_ratio}
is also crucial and provides
 a probabilistic  performance robustness of \ouralg.

\section{Case Study: Sustainable Datacenter Demand Response}

In this section, we conduct a case study and
perform simulations on the application of sustainable datacenter demand response to validate the design of \ouralg. Importantly,
our results demonstrate that \ouralg can achieve a lower average cost,
as well as an empirically lower competitive ratio, than several baseline algorithms.
This nontrivial result is due in great part to
our
differentiable expert calibrator and
loss function designed based on our theoretical analysis.

\subsection{Application}

Fueled the demand for sustainability, green renewables, such as wind and solar energy, have been increasingly adopted in today's power grids.
Nevertheless, since renewables are not as stable
as traditional energy sources and can fluctuate rapidly over time, the integration of renewables
into the power grid poses substantial challenges to meet the time-varying demand. Consequently, this calls for more demand response resources --- energy loads that can flexibly respond to the fluctuating supplies --- in order to balance the supply and demand at all times \cite{NREL_EnergySavingGoal,DoE_Demand_Response_2006}.

On the other hand,
warehouse-scale datacenters are traditionally
viewed as energy hogs, taking up a huge
load in power grids \cite{AdamWierman_DataCenterDemandResponse_Survey_IGCC_2014}.
Nonetheless, this negative view has been quickly changing in recent years. Specifically,
unlike traditional loads, datacenters
have great flexibility to adjust their energy demands using a wide range of control knobs at little or even no cost, such as turning off unused servers, spatially shifting workloads to elsewhere, deferring non-urgent workloads, and setting a higher cooling temperature \cite{DataCenterDemandResponse_Report_Berkeley}.
Compounded by the megawatt scale, these knobs can shed the datacenter energy consumption by a significant amount and make datacenters valuable demand response resources.
In fact, an increasingly larger
number of datacenters have already been actively participating in demand response programs that help stabilize and green the power grid \cite{LiuZhenhua_PricingDataCenter_DemandResponse_Sigmetrics_2014,Shaolei_Colo_EDR_OnlineMulti_eEnergy_2016}.

We consider the application of sustainable datacenter demand response to facilitate integration of
rapidly fluctuating renewables, including wind power and solar  power, into a micro power grid (a.k.a. microgrid) serving the data center.
More formally,
the wind power $P_{\mathrm{wind},t}$ and solar  power $P_{\mathrm{solar},t}$ at each time step $t$ both rely on the corresponding weather conditions. Here, we use empirical equations to model the wind and solar renewables. For wind power, the amount of energy generated at step $t$ is modeled
by using the equation in \cite{wind_power_sarkar2012wind} as
 $  P_{\mathrm{wind},t}=\frac{1}{2}\kappa_{\mathrm{wind}}\varrho A_{\mathrm{swept}} V_{\mathrm{wind},t}^3$,
where $\kappa_{\mathrm{wind}}$ is the conversion efficiency (\%) of wind power, $\varrho$ is the air density ($kg/m^3$), $A_{\mathrm{swept}}$ is the swept area of the turbine ($m^2$), and $V_{\mathrm{wind},t}$ is the wind speed ($kW/m^2$) at time step $t$. For solar power, the amount of energy generated at step $t$ is modeled based on \cite{solar_power_wan2015photovoltaic} as
 $   P_{\mathrm{solar},t}=\frac{1}{2}\kappa_{\mathrm{solar}} A_{\mathrm{array}} I_{\mathrm{rad},t}(1-0.05*(\mathrm{Temp}_t-25))$,
where $\kappa_{\mathrm{solar}}$ is the conversion efficiency (\%) of the solar panel,  $A_{\mathrm{array}}$ is the array area ($m^2$), and $I_{\mathrm{rad},t}$ is the solar radiation ($kW/m^2$) at time step $t$, and $\mathrm{Temp}_t$ is the temperature ($^{\circ}$C) at step $t$.
At time step $t$, the total energy generated by the renewables $P_{\mathrm{r},t}=P_{\mathrm{wind},t}+P_{\mathrm{solar},t}$.

We assume that at each step, the datacenter offers demand response
to compensate the microgrid's power shortage
$y_t=\max(P_{\mathrm{s},t}-P_{\mathrm{r},t},0)$
(i.e., the net amount of energy shedding needed from the datacenter), where $P_{\mathrm{s},t}$
is the power shortage before renewable integration.
The amount of energy load shedding offered by the
datacenter is the action $x_t$.
We model the hitting cost as the squared norm of the difference between the action $x_t$ and the context $y_t$, i.e. $f(x_t,y_t)=\frac{1}{2}\|x_t-y_t\|^2$,
which captures the quadratic cost incurred by the microgrid due to the actual supply-demand mismatch \cite{LiuZhenhua_PricingDataCenter_DemandResponse_Sigmetrics_2014}.
Additionally, we model
the switching cost by a scaled and squared norm of the difference between two consecutive actions, i.e. $c(x_t,x_{t-1})=\frac{\alpha}{2}\|x_t-x_{t-1}\|^2$. This cost comes from the changes of demand response knobs in the datacenter, e.g., setting different supply air temperature to adjust energy shedding \cite{sharma2005balance}.
The goal of sustainable datacenter demand response
is to make online actions to minimize
the total cost in Eqn.~\eqref{eqn:obj}.

\subsection{Simulation Settings}

We give the details about the settings in our simulation.
The switching cost parameter is set as $\alpha=10$.
To calculate the renewable energy, we set the parameters for wind power as $\kappa_{\mathrm{wind}}=30\%$, $\varrho=1.23 kg/m^3$, $A_{\mathrm{swept}}=500,000 m^2$. The wind speed  data is collected from the National Solar Radiation Database \cite{NSRDB_sengupta2018national}, which contains hourly data for the year of 2015. Additionally, we set the parameters for solar power as $\kappa_{\mathrm{wind}}=10\%$, $A_{\mathrm{array}}=10,000 m^2$. The temperature  data and the Global Horizontal Irradiance (GHI) data are also from the National Solar Radiation Database \cite{NSRDB_sengupta2018national}. For both temperature and GHI data, we use hourly data for the year of 2015.
A snapshot of the context sequence $y_t$ is shown in Fig.~\ref{fig:energy_trace}.

 \begin{wrapfigure}[]{r}{0.4\textwidth}
\vspace{-0.2cm}
\includegraphics[width={0.39\textwidth}]{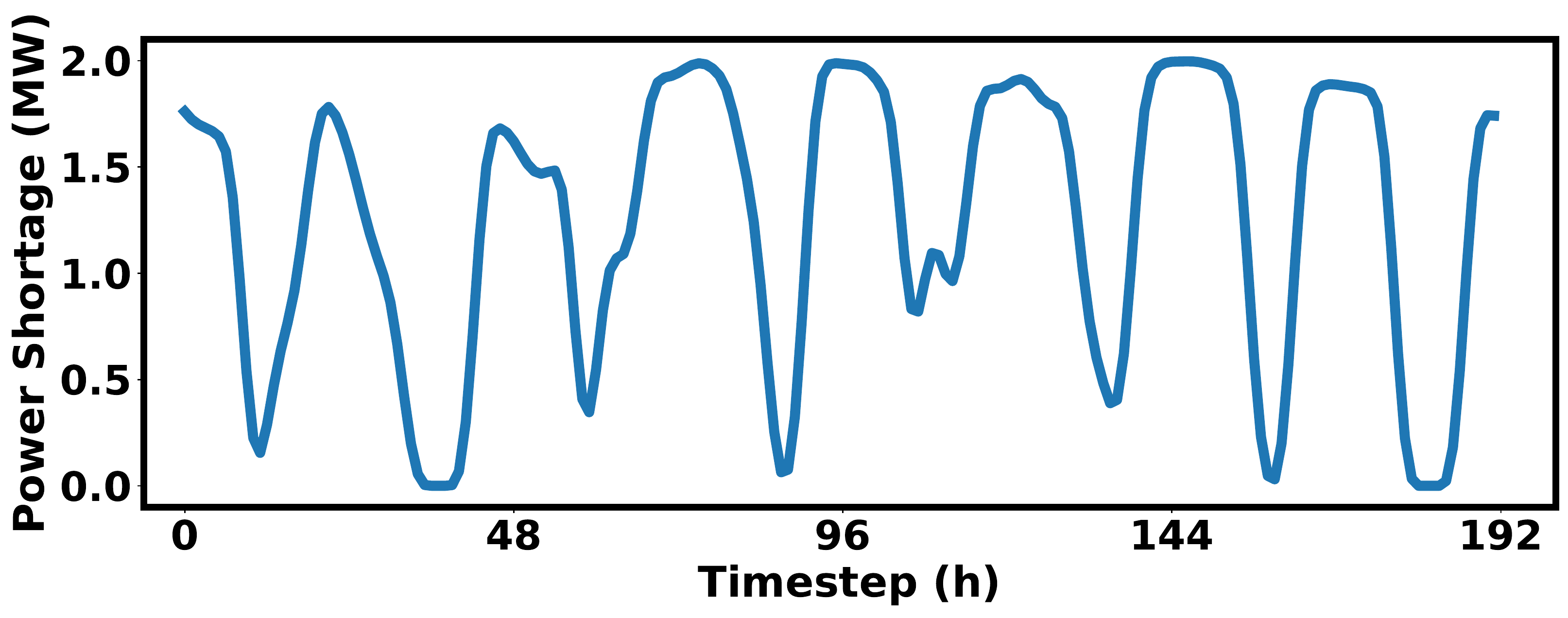}
\vspace{-0.3cm}
	\caption{A snapshot of the context $y_t$.}
\label{fig:energy_trace}
\end{wrapfigure}
 We use the hourly data of the first two months and data augmentation  to generate a training dataset with 1400 sequences, each with 24 hours. The data of the third  month is used for validation and hyperparameter tuning. The data of the remaining nine months  is used as the testing dataset.
 Each problem instance is $T=24$ hours.  We perform continuous testing by using the action of the previous testing instance as the initial action of the current testing instance.
 Fig.~\ref{fig:ood} visualizes
 the t-SNE of discrepancy between training and testing
 distributions \cite{tsne_visualize_2008}.
 We can see that
 the testing distribution shifts from the training distribution,
 which we refer to as out-of-distribution testing
 and is common in practice
 since the distribution of (finite) training samples may not truly reflect the testing distribution.
 \begin{figure*}[!t]	
	\centering
	\subfigure[Testing of Apr. to Jun.]{
		\label{fig:ood_1}
		\includegraphics[width=0.3\textwidth]{./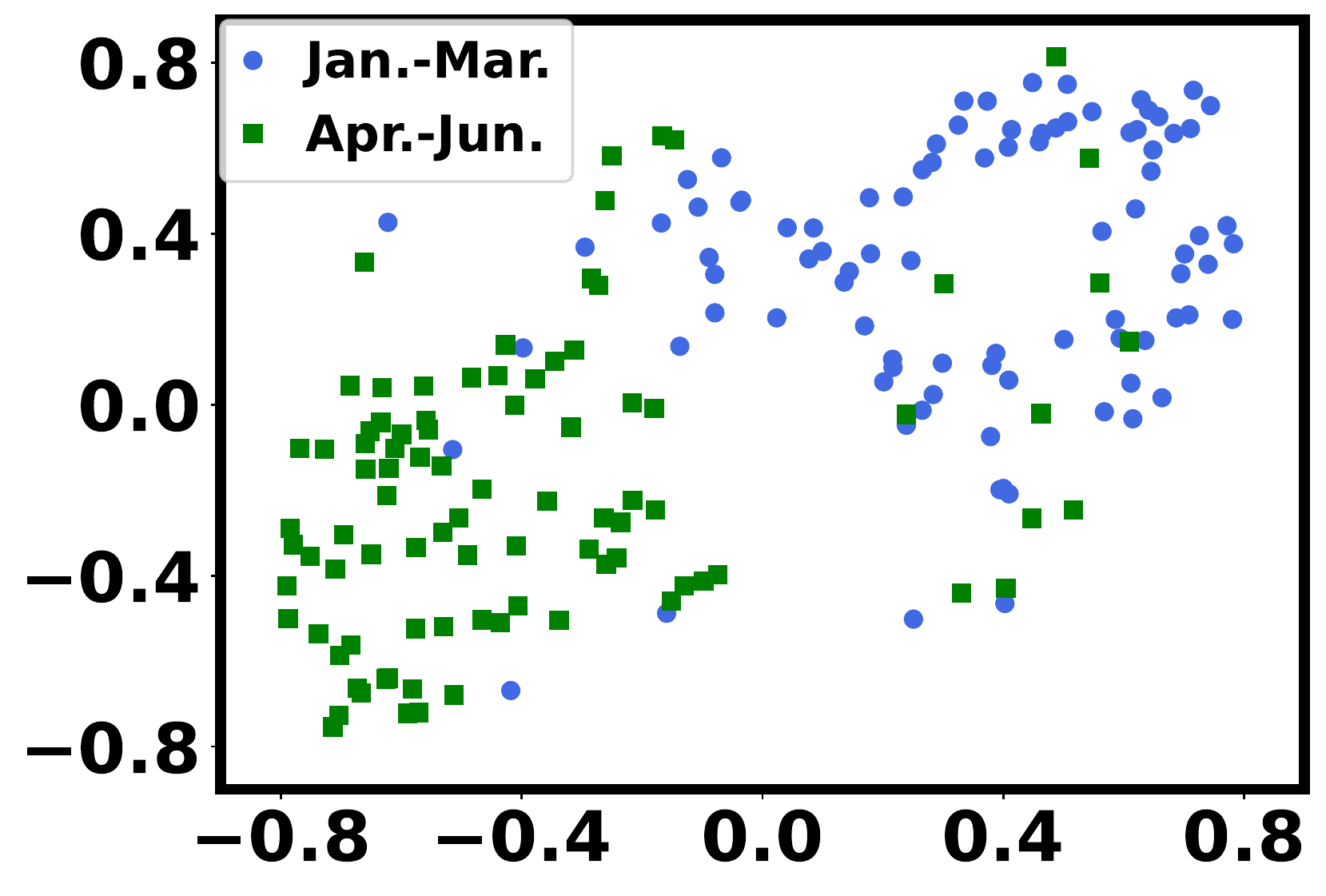}
	}
	\subfigure[Testing of Jul. to Sep.]{
		\label{fig:ood_2}
		\includegraphics[width=0.3\textwidth]{./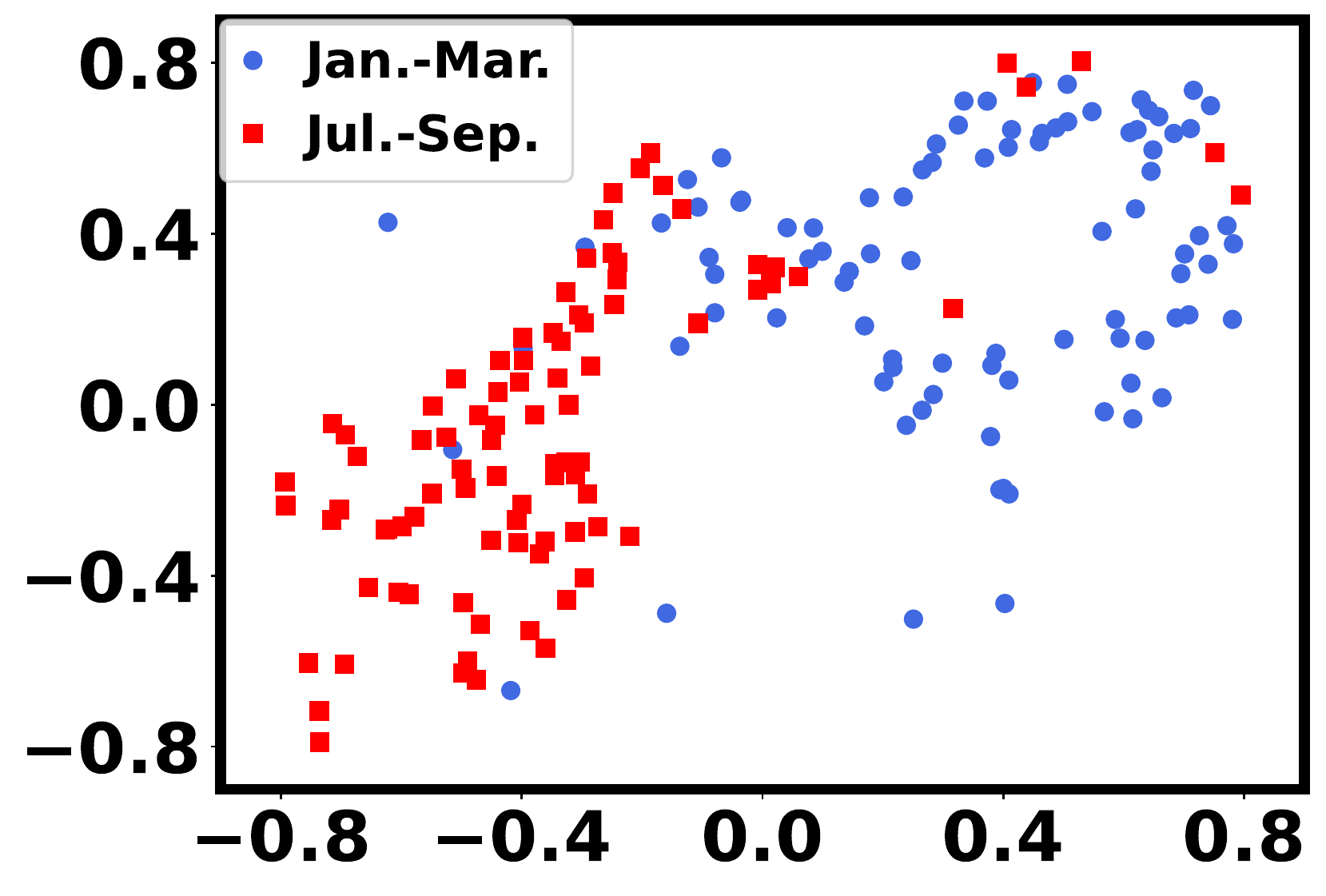}
	}%	
	\subfigure[Testing of Oct. to Dec.]{
		\label{fig:ood_3} \includegraphics[width={0.3\textwidth}]{./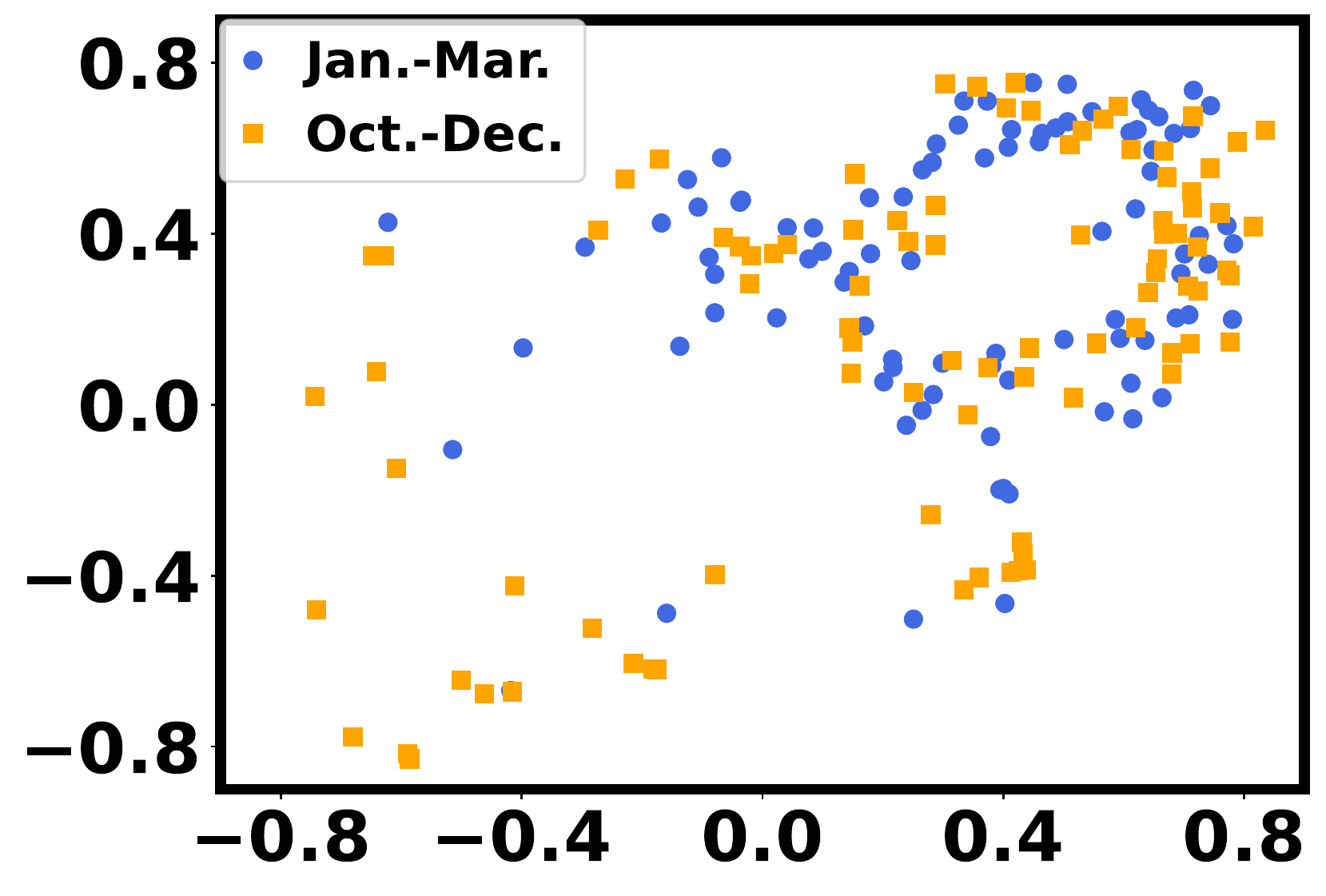}
	}
	\centering
	\vspace{-0.4cm}	
	\caption{t-SNE of distribution discrepancies between training months (January to March) and testing months.}
	\label{fig:ood}
\end{figure*}

In our recurrent structure, the ML model $h_W$ in each base optimizer includes 3 layers, each having 10 neurons.
We use $\mathrm{relu}$ activation and train the model weight using Adam optimization.

\subsection{Baseline Algorithms}

We compare \ouralg with several baselines including experts and ML-based algorithms. The baselines are summarized as below.

$\bullet$ Offline optimal oracle ($\textbf{Oracle}$): The offline optimal oracle
has access to the complete context information
in advance and produces the optimal solution.

$\bullet$ Regularized Online Balance Descent ($\textbf{R-OBD}$): R-OBD  is the
state-of-the-art expert algorithm that matches the lower bound of any online algorithms in terms of the competitive ratio \cite{SOCO_OBD_R-OBD_Goel_Adam_NIPS_2019_NEURIPS2019_9f36407e}. It uses the minimizer of the current hitting cost as a regularizer for online actions.
By setting $\lambda_3=0$, the expert calibrator \ouralgcal reduces to R-OBD.

$\bullet$ Pure ML-based Optimizer ($\textbf{PureML}$): PureML uses the same recurrent neural network as \ouralg, but it is trained
as a standalone optimizer without considering
the downstream expert calibrator. More specifically, we consider the loss function
$\kappa\frac{\mathrm{cost}(h_W,\bm{s})}{\mathrm{cost}(\pi^*,\bm{s})}+(1-\kappa)\mathrm{cost}(h_W,\bm{s})$ for the PureML model $h_W$, where $\kappa\in[0,1]$ balances the cost ratio loss and the average cost loss. Specifically, when $\kappa=0$, we denote the PureML model as PureML-0,
which is trained to minimize the average cost as in the standard L2O technique \cite{L2O_LearnWithoutGraident_ICML_2017_l2o_gradeint_descent_chen_ICML}.

$\bullet$ PureML with Dynamic Switching ($\textbf{Switch}$):
\revise{A switching-based  online algorithm
 dynamically switches between an individually
robust expert  and  ML predictions.
 The two most relevant switching-based online algorithms
 \cite{SOCO_OnlineMetric_UntrustedPrediction_ICML_2020_DBLP:conf/icml/AntoniadisCE0S20,SOCO_OnlineOpt_UntrustedPredictions_Switching_Adam_arXiv_2022}
consider switching costs in a metric space and hence are not directly applicable
for our squared switching costs.
Here, we modify
 Algorithm~1  in \cite{SOCO_OnlineMetric_UntrustedPrediction_ICML_2020_DBLP:conf/icml/AntoniadisCE0S20}
 by switching between R-OBD and pre-trained PureML
 with $\kappa=0$ (PureML-0)
 based on a parameterized threshold $\gamma$ that progressively increases itself for each occurrence of switch.
 This essentially follows the two-stage approach discussed in Section~\ref{sec:calibration_limitation_simple},
 and is simply referred to as Switch.
 Since the switching algorithm in \cite{SOCO_OnlineOpt_UntrustedPredictions_Switching_Adam_arXiv_2022}
  relies on triangle inequality of switching costs in a metric space,
  it is highly non-trivial to adapt the algorithm
  to our setting. Thus, we exclude it from our comparison.
  }

$\bullet$ PureML with \ouralgcal (\textbf{\ouralgcal}):
 To highlight the importance of training the ML-based optimizer $h_W$ by taking into account
the downstream calibrator,
we apply predictions of the pre-trained PureML-0 in \ouralgcal.
We simply refer to the whole algorithm as \ouralgcal.

Note that the all the costs  shown in our results are normalized by the average cost achieved by the offline optimal oracle.
In Figs.~\ref{fig:avg_cost} and~\ref{fig:cr},
the default hyperparameters
used for the algorithms under consideration
are $\kappa=0$ for PureML,
 $\gamma = 1.5$ for {Switch}, $\theta=0.3$ for {MLA-ROBD},
and $\theta=0.5$ and $\mu=0.6$ for \ouralg. The hyperparameters for R-OBD and $\lambda_2$ in \ouralgcal and \ouralg are optimally set according to Theorem~\ref{thm:cr_robd}.

\subsection{Results}

We present our results of the average cost and competitive ratio as follows. \revise{Note that the empirical regret result is omitted, because
it is already implicitly reflected by the average cost normalized with respect to the unconstrained optimal oracle's cost (e.g., a normalized average cost of 1.2 means that
the normalized average regret is 0.2).}

First, for the default hyperparameters, the normalized average costs and empirical competitive ratios of \ouralg and baselines are shown in Fig.~\ref{fig:avg_cost} and Fig.~\ref{fig:cr}, respectively. We can observe that the R-OBD achieves a very low competitive ratio, which is close
to the competitive ratio lower bound in \cite{SOCO_OBD_R-OBD_Goel_Adam_NIPS_2019_NEURIPS2019_9f36407e}. However, R-OBD has a very large average cost since it does not exploit any additional information such as statistical information of testing instances or ML predictions.

\begin{figure*}[!t]	
    \centering
\subfigure[Average cost]{\label{fig:avg_cost}
	\includegraphics[height=0.21\textwidth]{./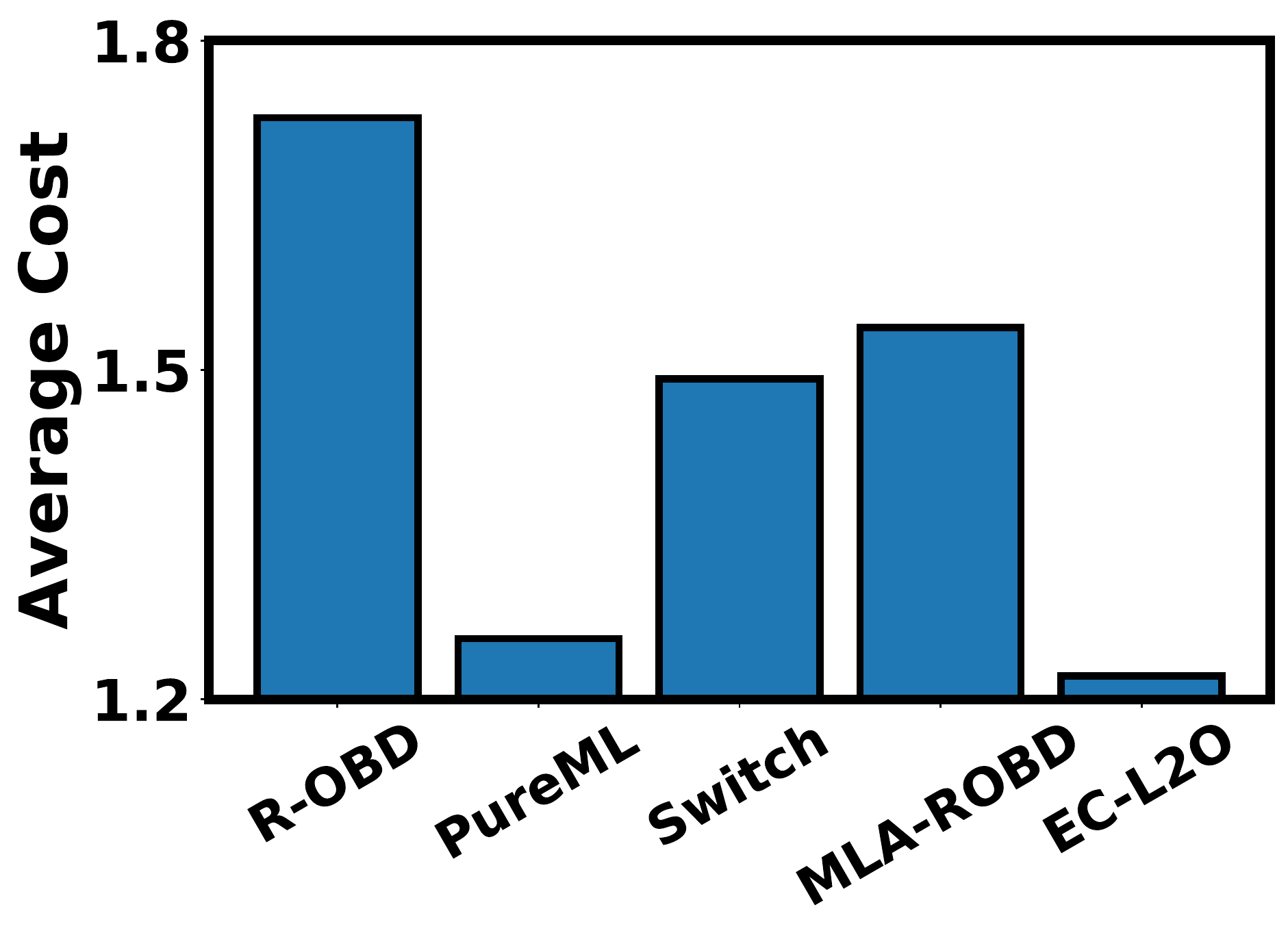}}	
\subfigure[Competitive ratio]{\label{fig:cr}
	\includegraphics[height=0.21\textwidth]{./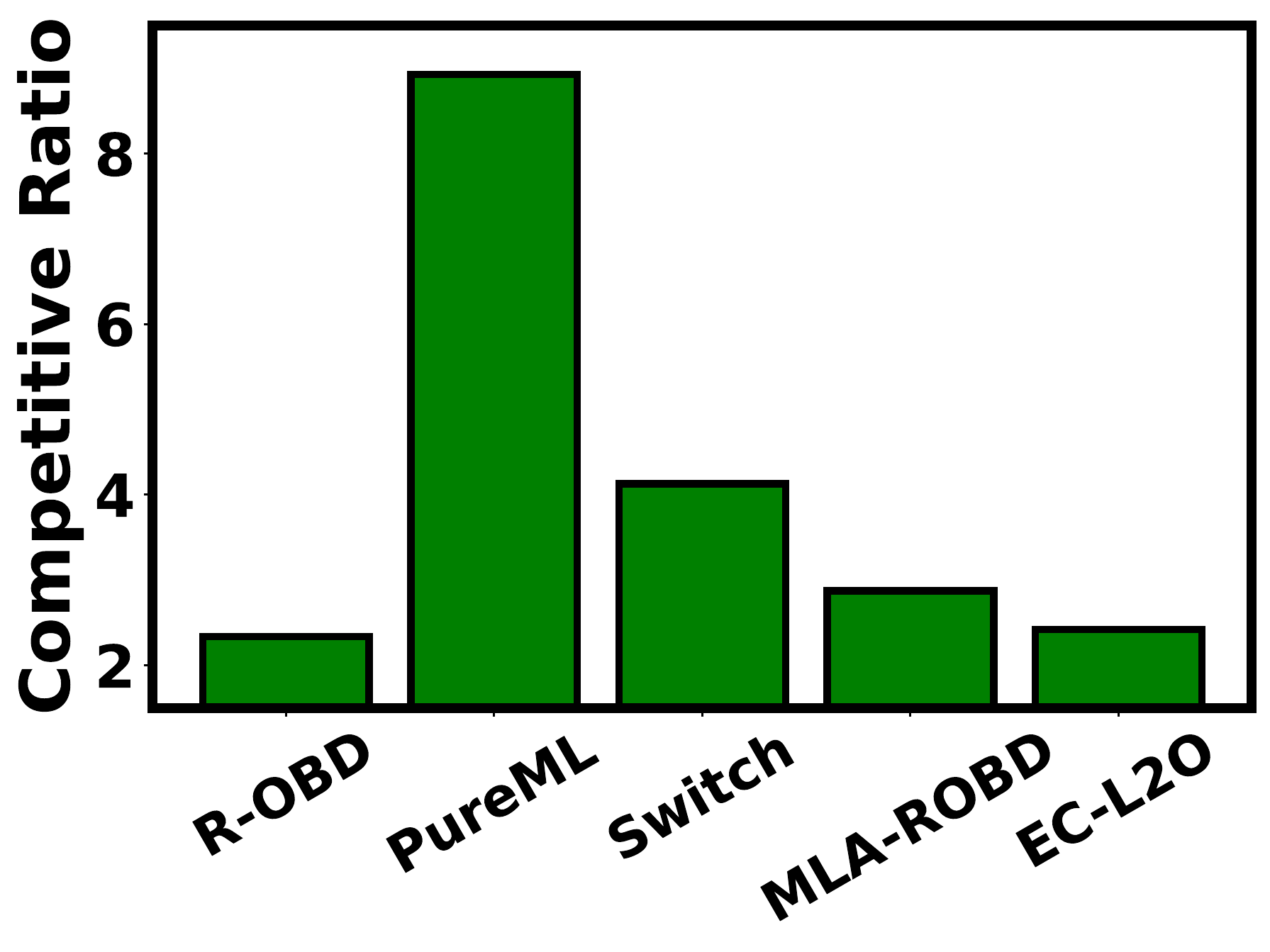}}
\subfigure[AVG vs. CR tradeoff]{
\label{fig:avgcost_cr}
	\includegraphics[height=0.21\textwidth]{./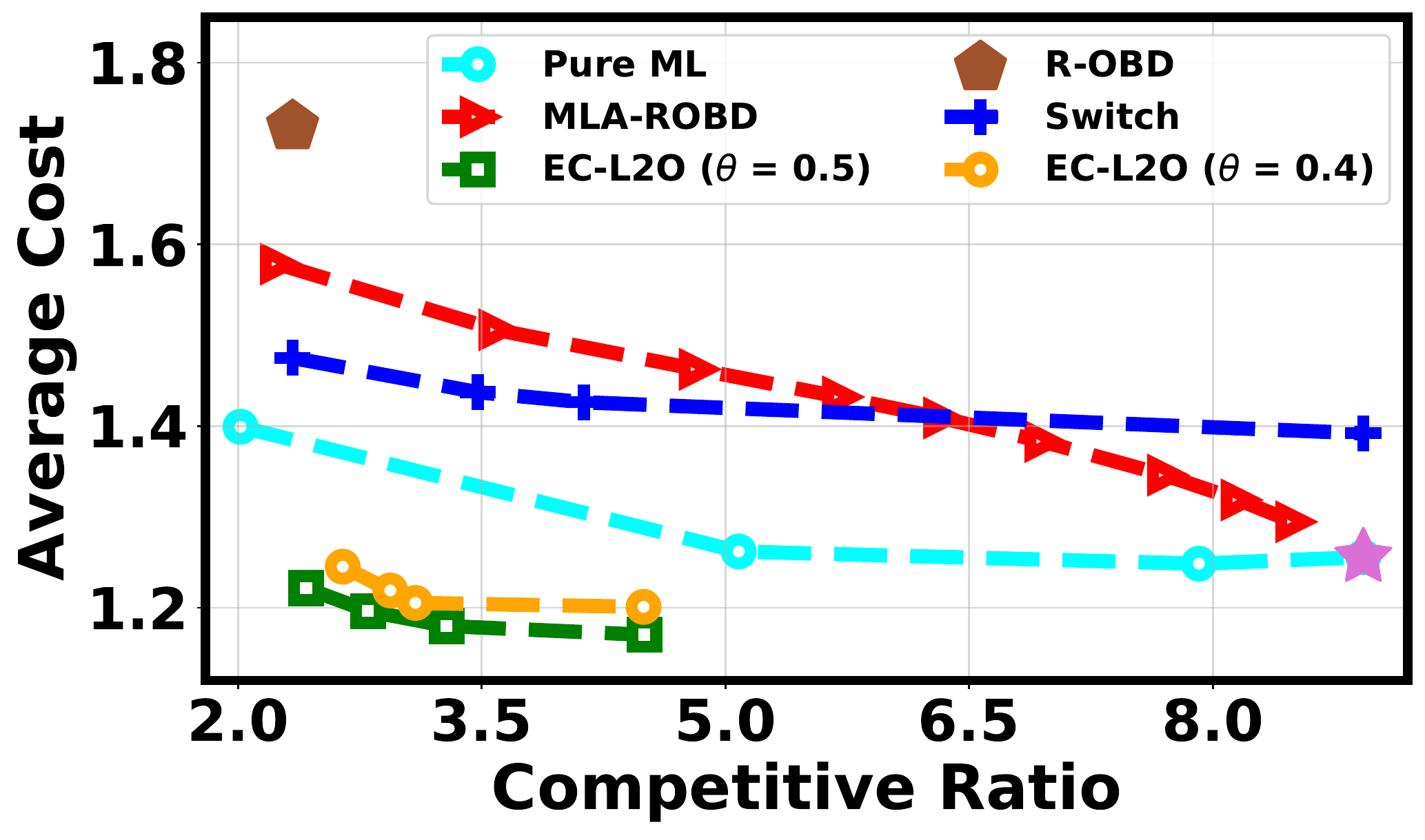}}
	\vspace{-0.4cm}
\caption{Performance comparison.}
\label{fig:compare_all}
\end{figure*}
By exploiting the input distributional information, PureML can reduce the average cost effectively, but due to the limitations of ML, it has unsatisfactory worst-case performance
--- an undesirably high competitive ratio. As ML-augmented online algorithms, Switch and \ouralgcal can effectively lower the competitive ratio, but  their average costs are much worse than PureML. The reasons are two-fold:
(1) the competitive ratios in these algorithms are often loose and hence
may not reflect the true empirical performance;
and (2)  the ML-based optimizer (PureML in this case) used by these algorithms is trained to minimize the pre-calibration cost using the standard practice of  L2O as a standalone optimizer without being aware of the downstream expert algorithm. This highlights that we cannot simply view the ML-based optimizer as an exogenous blackbox as in the existing ML-augmented algorithms \cite{SOCO_OnlineMetric_UntrustedPrediction_ICML_2020_DBLP:conf/icml/AntoniadisCE0S20}.

\revise{When the training and testing distributions are well consistent,
one may expect PureML to have the lowest average cost, if the ML model
has sufficient capacity and is well trained with a large number of training samples. Nonetheless,
for out-of-distribution testing in practice as shown in Fig.~\ref{fig:ood},
the number of \emph{hard} testing samples for PureML
is no longer non-negligible. As a consequence,
 PureML may not perform the best in terms of the average cost.
Interestingly, we see from Fig.~\ref{fig:compare_all}
that \ouralg can achieve an average cost even lower than PureML while keeping a competitive ratio that is not much higher than R-OBD.
This demonstrates the effectiveness of \ouralg in terms of reducing the average cost while restraining the empirical competitive ratio.
On the one hand, while the ML model in \ouralg is still vulnerable
to out-of-distribution samples, the downstream expert calibrator does not
depend on the training data and is
provably more robust. Thus, along
with our holistic training to minimize the post-calibration cost,
 \ouralg
results in a lower average cost than PureML
in the presence of training-testing distribution discrepancies.}
On the other hand,
although the competitive ratio of \ouralg is not theoretically upper bounded,
 the training loss function in Eqn.~\eqref{eqn:unrolling}
includes a loss of the ML prediction error which encourages the ML-based optimizer to have low prediction errors. Thus, by Theorem~\ref{thm:cr_robd},
this empirically reduces the competitive ratio achieved by \ouralg.

Next, in Fig.~\ref{fig:avgcost_cr}, we show the trade-off between
the average cost and competitive ratio by varying the respective hyperparameters that control the trade-off for different algorithms.  For each trade-off curve,
we only keep the Pareto boundary.
By setting the trust parameter to a low value of $\theta=0.2$, the leftmost point of \ouralgcal
can be even better than the pure R-OBD due to the introduction of ML predictions that are good in most cases. This is also reflected in Theorem~\ref{thm:cr_robd} and Fig.~\ref{fig:cr_bound}.
On the other hand, by $\theta=5$, ML predictions play a bigger role and \ouralgcal approaches
PureML with $\kappa=0$ (labeled as \emph{star}).
PureML can balance the average cost and the competitive ratio by adjusting the trade-off parameter $\kappa$. However, PureML only optimizes the trade-off performance based on an offline training dataset without an expert calibration layer, and hence it suffers from various limitations including generalization error, limited network capacity, error from testing distributional shifts.
Although Switch and \ouralgcal can achieve reasonably low competitive ratios, their average costs are higher than that of PureML and \ouralg.
On the other hand,
\ouralg with two different trust parameters $\theta=0.4$ and $\theta=0.5$ can achieve the empirically best Pareto front among all the algorithms: given an average cost, \ouralg has lower competitive ratios.
Also, we can find that if the trust parameter $\theta$ is higher, the  Pareto front becomes slightly better since more trust is given to the already-performant ML model.
Nonetheless, we cannot fully trust the ML predictions by setting $\theta\to\infty$, which would otherwise
significantly increase the competitive ratio.
\revise{From Fig.~\ref{fig:avgcost_cr}, we also see empirically
that the trade-off curve for \ouralg
is concentrated within a small region under different hyperparameters
(e.g., $\mu$
and $\theta=\frac{\lambda_3}{\lambda_1}$), showing that \ouralg
is not as sensitive to the hyperparameters  as other algorithms under comparison.}

Finally, we compare the tail cost ratios of algorithms at higher percentile greater than 99\% in Fig.~\ref{fig:tail_ratio}. We can observe that \ouralg achieves lower cost ratios than PureML, especially at percentile close to 100\%. Also, \ouralgpro has comparable  cost ratios with ML-augmented algorithms, i.e., Switch and \ouralgcal. This empirically confirms
 the conclusion in Theorem~\ref{thm:tail_ratio} that by  training on Eqn.~\eqref{eqn:loss_new} with $\mu>0$, \ouralg can constrain the cost ratio at high percentile, thus achieving a good tail robustness.

\begin{figure*}[!t]	
	\centering
	\subfigure{
		\label{fig:cdf_compare_1}
		\includegraphics[width=0.35\textwidth]{./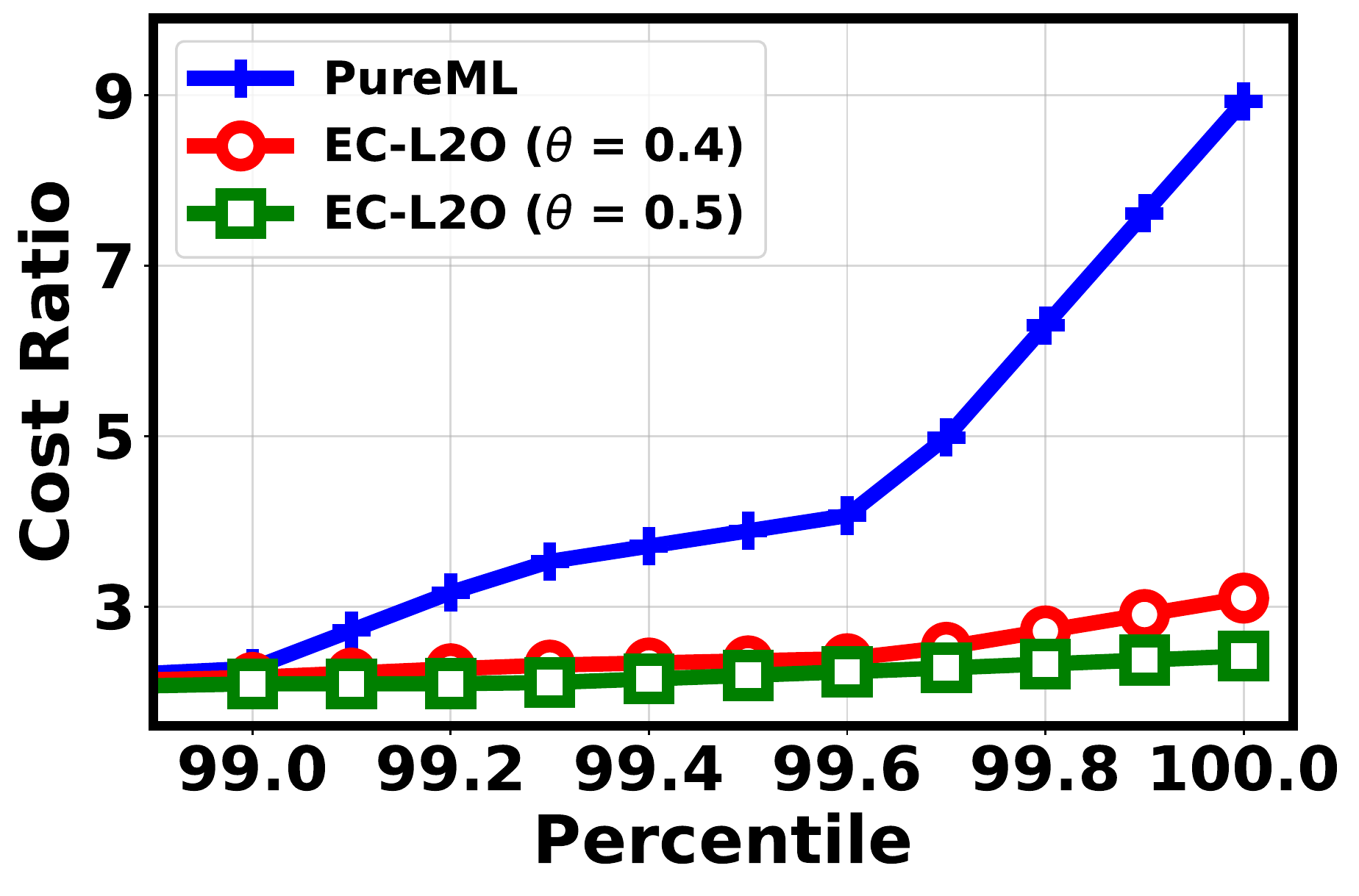}
	}
	\subfigure{
		\label{fig:cdf_compare_2}
		\includegraphics[width=0.35\textwidth]{./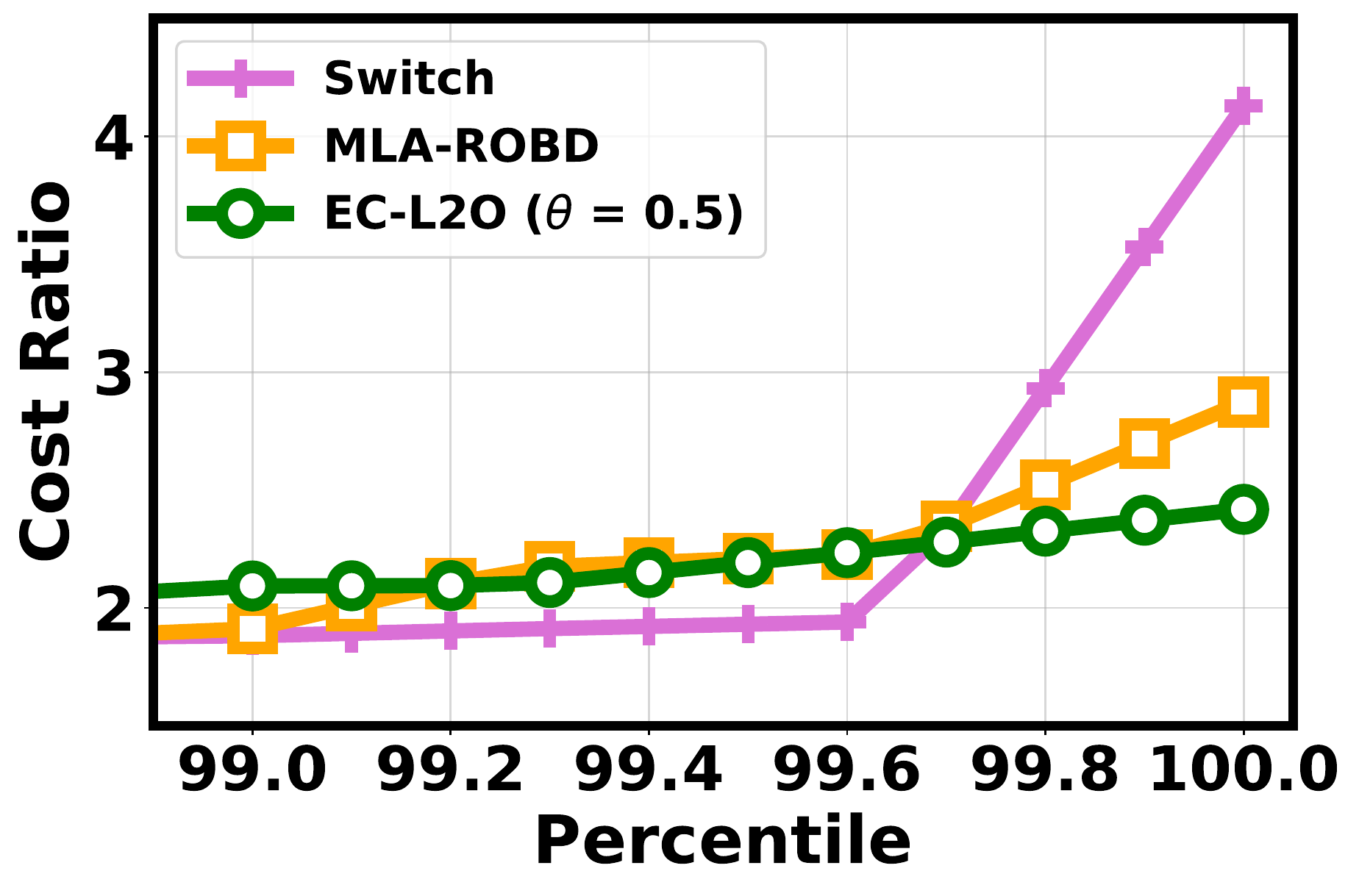}
	}
 \centering
 	\vspace{-0.4cm}	
	\caption{Tail cost ratio comparison between different algorithms}
	\label{fig:tail_ratio}
\end{figure*} 
\section{Related Works}

The set of expert-designed algorithms for online convex optimization with switching costs and other variant problems (e.g., convex boday chasing \cite{convex_body_chasing_friedman1993convex}
and metrical task system \cite{metrical_task_system_blum2000line}) have been
growing all the time \cite{SOCO_DynamicRightSizing_Adam_Infocom_2011_LinWiermanAndrewThereska,SOCO_Memory_Adam_NIPS_2020_NEURIPS2020_ed46558a,SOCO_OBD_Niangjun_Adam_COLT_2018_DBLP:conf/colt/ChenGW18,SOCO_Revisiting_Nanjing_NIPS_2021_zhang2021revisiting}.
For example, some prior studies have
considered
online gradient descent (OGD) \cite{OGD_zinkevich2003online,SOCO_Prediction_Error_Meta_ZhenhuaLiu_SIGMETRICS_2019_10.1145/3322205.3311087}, online balanced descent (OBD) \cite{SOCO_OBD_Niangjun_Adam_COLT_2018_DBLP:conf/colt/ChenGW18}, and regularized OBD (R-OBD) \cite{SOCO_OBD_R-OBD_Goel_Adam_NIPS_2019_NEURIPS2019_9f36407e}.
Typically, these algorithms are developed based on classic optimization frameworks and offer guaranteed worst-case performance robustness in terms of the competitive ratio.
 But, they may not have good cost performance in typical cases, thus potentially resulting
in a high average cost.

Additionally, some other studies have also
incorporated ML prediction
of future cost parameters  into the algorithm design under various settings.
Examples include
 receding horizon control (RHC) \cite{SOCO_Prediction_Error_Meta_ZhenhuaLiu_SIGMETRICS_2019_10.1145/3322205.3311087}
committed horizon control (CHC) \cite{SOCO_Prediction_Error_Niangjun_Sigmetrics_2016_10.1145/2964791.2901464},
receding horizon gradient descent (RHGD) \cite{Receding_Horizon_GD_li2020online,SOCO_Prediction_LinearSystem_NaLi_Harvard_NIPS_2019_10.5555/3454287.3455620},
and
adaptive balanced capacity scaling (ABCS) \cite{SOCO_CapacityScalingAdaptiveBalancedGradient_Gatech_MAMA_2021_Sigmetrics_2021_10.1145/3512798.3512808}. The goal of these algorithms is
still to achieve a low/bounded competitive ratio (or regret) in the presence of possibly large context prediction errors, while the average cost is left under-explored.

More recently, by combining ML-predicted actions with expert knowledge,
ML-augmented algorithm designs have been  emerging in the context of online convex optimization (or relevant problems such
as convex body/function chasing) with switching costs \cite{SOCO_OnlineMetric_UntrustedPrediction_ICML_2020_DBLP:conf/icml/AntoniadisCE0S20,SOCO_OnlineOpt_UntrustedPredictions_Switching_Adam_arXiv_2022,SOCO_ML_ChasingConvexBodiesFunction_Adam_UnderSubmission_2022}. They
focus on switching costs in a metric space while we consider squared switching costs.
 Moreover, the existing
studies focus on designing the expert calibration rule to achieve a low competitive ratio,
and hence take a rather simplified view
of the ML-based actions ---
the  actions  come from a black-box ML model. Thus, how to design
the ML-based optimizer that provides predictions to the downstream expert algorithm still remains open.
Simply following a two-stage approach in Section~\ref{sec:calibration_limitation_simple}
can provide unsatisfactory
 performance in terms of the average cost,  compared to the pure standalone ML-based optimizer that is trained for good average cost performance on its own.
Thus, in this paper, our proposal of \ouralg addresses a crucial yet unstudied challenge in
the emerging context of ML-augmented algorithm design: how to learn for online convex optimization with switching costs in the presence of a downstream expert?

 \ouralg also intersects with
 the quickly expanding area of
learning to optimize (L2O), which
pre-trains an ML-based optimizer to directly solve optimization problems
  \cite{L2O_LearningToOptimize_Berkeley_ICLR_2017,L2O_chen2021learning,L2O_generalize_wichrowska2017learned}.
Most commonly, L2O focuses on speeding up
the computation for
otherwise computationally expensive problems, such as
as DNN training \cite{Learng2learn_andrychowicz2016learning}, nonconvex optimization
in interference channels \cite{LearningOptimize_PowerAllocation_CongShen_TCOM_2020_8922744,Spatial_wireless_scheduling_cui2019spatial} and combination optimization \cite{Learn_combinatorial_opt_dai2017learning}.
 Moreover, ML-based optimizers have also been
integrated into
traditional algorithmic frameworks for
 faster and/or better solutions
\cite{L2O_chen2021learning,DNN_ImplicitLayers_Zico_Website}. But, these studies are dramatically different due to their orthogonal design goals and constraints.

Studies that apply L2O to solve difficult online optimization problems where the key challenge comes from the lack of complete offline information have been relatively
under-explored. In \cite{L2O_NewDog_OldTrick_Google_ICLR_2019},
an ML model is trained as a standalone end-to-end solution for a small set of classic online problems, such as
online knapsack. In \cite{L2O_AdversarialOnlineResource_ChuanWu_HKU_TOMPECS_2021_10.1145/3494526}
proposes adversarial training based on generative adversarial networks to solve online resource allocation problems.
But, such an end-to-end ML-based optimizer can have bad performance without provably good robustness. While feeding
the predictions of a pre-trained ML model directly into an expert calibrator can improve the robustness, the average cost performance can be significantly damaged due to the siloed two-stage approach.

In the recent ``predict-then-optimize'' and decision-focused learning framework
\cite{L2O_PredictOptimize_MDP_Harvard_NIPS_2021_wang2021learning,L2O_PredictOptimize_RiskCalibrationBound_ICLR_2021_liu2021risk,decision_focused_learning_combinatorial_opt_wilder2019melding,L2O_PredictOptimize_arXiv_2017},
auxiliary contextual information is predicted based on additional input features by taking into account the downstream optimizer in order to minimize the overall decision cost.
A key research problem
is to differentiate the optimizer with respect to the predicted
contextual information in order to facilitate the backpropagation process for efficient
training \cite{L2O_DifferentiableConvexOptimization_Brandon_NEURIPS2019_9ce3c52f,L2O_DifferentiableMPC_NIPS_2018_NEURIPS2018_ba6d843e,L2O_DifferentiableOptimization_Brandon_ICML_2017_amos2017optnet}.
Nonetheless, these studies typically focus on the prediction
step while considering an existing optimizer with the goal of optimizing the average performance,
whereas we consider multi-step online optimization
 and propose a new expert calibrator to optimize the average
 performance with provably improved robustness.
Additionally,  although the optimizer produces actions based on the predicted
contextual information, the ``predict-then-optimize'' framework
still uses the true contextual information to evaluate
the cost; by contrast, we directly utilize ML together with
an expert calibrator to produce
online actions and evaluate the total cost, and the notion of ``true''
 actions does not apply in our problem.

Finally, a small set of recent studies
\cite{L2O_CustomizingML_OnlineAlgorithm_GeRong_Duke_ICML_2020,L2O_LearningMLAugmented_Regression_CR_GeRong_NIPS_2021_anand2021a,L2O_LearningMLAugmentedAlgorithm_Harvard_ICML_2021_pmlr-v139-du21d} have begun to study ``how to learn'' in ML-augmented algorithm designs for different goals. For example, \cite{L2O_LearningMLAugmentedAlgorithm_Harvard_ICML_2021_pmlr-v139-du21d}
designs an alternative ML model
for the count-min sketch problem,
while \cite{L2O_LearningMLAugmented_Regression_CR_GeRong_NIPS_2021_anand2021a}
considers general online problems and
uses ML (i.e., regression) to minimize
the \emph{average} competitive ratio without considering the average cost.
Our work is novel in multiple aspects, including the problem setting, expert calibrator, loss function design, backpropagation process
and performance analysis.

\section{Conclusion}

In this paper, we study online convex optimization with switching costs.
We show that
by using the standard
practice of training an ML model
as a standalone optimizer,
 ML-augmented online algorithms
 can significantly hurt the average cost performance.
Thus, we propose
\ouralg, which
trains
an ML-based optimizer by explicitly taking into account the downstream expert calibrator.
 We design a new differentiable
expert calibrator, \ouralgcal, that generalizes R-OBD and
offers a provably better
competitive ratio than pure ML predictions when the prediction error is large.
 We also provide theoretical analysis for \ouralg, highlighting
that the
high-percentile tail cost ratio can be bounded.
Finally, we test \ouralg by running simulations for  sustainable datacenter demand response,
showing that
that \ouralg can empirically achieve a lower average cost as well
as a lower competitive ratio than the existing baseline algorithms.

\bibliographystyle{ACM-Reference-Format}
%\bibliography{ref_jy_knowledge,ref_ren}
%\input{main_reference.bbl}

%%% -*-BibTeX-*-
%%% Do NOT edit. File created by BibTeX with style
%%% ACM-Reference-Format-Journals [18-Jan-2012].

%%% -*-BibTeX-*-
%%% Do NOT edit. File created by BibTeX with style
%%% ACM-Reference-Format-Journals [18-Jan-2012].

\newpage
\appendix

\section*{Appendix}

\section{Additional Details of Differentiating the Calibrator}\label{sec:gradientproof}
We now differentiate the output $x_t$ of the calibrator \ouralgcal with respect to its inputs $\tilde{x}_t$ and $x_{t-1}$. By the optimum condition of the convex optimization to calculate  $x_t$ in Algorithm \ref{alg:RP-OBD}, we have
\begin{equation}\label{eqn:gradientproof}
\nabla_{x_t} f(x_t, y_t) + \lambda_1 \nabla_{x_t} c(x_t, x_{t-1}) + \lambda_2 \nabla_{x_t} c(x_t, v_t) + \lambda_3 \nabla_{x_t} c(x_t, \tilde{x}_t) = 0.
\end{equation}
To derive the gradient of $x_t=R_{\lambda}(y_t,x_{t-1},\tilde{x}_t)$ with respect to $\tilde{x}_t$, we take gradient for both sides of Eqn.~\eqref{eqn:gradientproof}, and get
\begin{equation}
\begin{split}
    &\left(\nabla_{x_t,x_t} f(x_t, y_t) + \lambda_1 \nabla_{x_t,x_t} c(x_t, x_{t-1}) + \lambda_2 \nabla_{x_t,x_t} c(x_t, v_t) + \lambda_3 \nabla_{x_t,x_t} c(x_t, \tilde{x}_t)\right)\nabla_{\tilde{x}_t}R_{\lambda}(y_t,x_{t-1},\tilde{x}_t)\\
    &+ \lambda_3 \nabla_{\tilde{x}_t,x_t} c(x_t, \tilde{x}_t)= 0,
\end{split}
\end{equation}
where $\nabla_{x_b,x_a}$ means first taking gradient with respect to $x_a$, and then taking gradient with respect to $x_b$. Denote $Z_t=\nabla_{x_t,x_t} f(x_t, y_t) + \lambda_1 \nabla_{x_t,x_t} c(x_t, x_{t-1}) + \lambda_2 \nabla_{x_t,x_t} c(x_t, v_t) + \lambda_3 \nabla_{x_t,x_t} c(x_t, \tilde{x}_t)$. Due to the assumption that $f(x_t, y_t)$ and $c$ are strongly with respect to $x_t$,  we have $Z_t$ is positive definite and invertible and so
\begin{equation}
    \nabla_{\tilde{x}_t}R_{\lambda}(y_t,x_{t-1},\tilde{x}_t)=-\lambda_3 Z_t^{-1}\nabla_{\tilde{x}_t,x_t} c(x_t, \tilde{x}_t).
\end{equation}
Similarly to derive the gradient of $x_t=R_{\lambda}(y_t,x_{t-1},\tilde{x}_t)$ with respect to $x_{t-1}$, we take gradient for both sides of Eqn.~\eqref{eqn:gradientproof}, and get
\begin{equation}
\begin{split}
    &\left(\nabla_{x_t,x_t} f(x_t, y_t) + \lambda_1 \nabla_{x_t,x_t} c(x_t, x_{t-1}) + \lambda_2 \nabla_{x_t,x_t} c(x_t, v_t) + \lambda_3 \nabla_{x_t,x_t} c(x_t, \tilde{x}_t)\right)\nabla_{x_{t-1}}R_{\lambda}(y_t,x_{t-1},\tilde{x}_t)\\
    &+ \lambda_1 \nabla_{x_{t-1},x_t} c(x_t, x_{t-1})= 0.
\end{split}
\end{equation}
Thus, we have
\begin{equation}
    \nabla_{x_{t-1}}R_{\lambda}(y_t,x_{t-1},\tilde{x}_t)=-\lambda_1 Z_t^{-1}\nabla_{x_{t-1},x_t} c(x_t, x_{t-1}).
\end{equation}

\section{Proof of Lemma~\ref{thm:ml_eta}}\label{sec:proof_ml_eta}
\begin{proof}
Denote $\{\tilde{x}_1,\cdots,\tilde{x}_T\}$ as the $\rho$-accurate predictions of an ML model $h_{W}$. Let the output of the offline optimal policy  $\pi^*$ be $\{x_1^*, x_2^*, \cdots , x_T^*\}$.
If the competitive ratio of the ML model is $\eta$, then by the definition of competitive ratio, we have $\mathrm{cost}(h_{W}, \bm{s}) \leq \eta \mathrm{cost}(\pi^*, \bm{s})$, for any $\bm{s}\in\mathcal{S}$ such that the ML prediction is $\rho$-accurate. Thus, we prove the lower bound of the competitive ratio of the $\rho$-accurate ML model by finding a sequence $\bm{s}$ with predicted action sequence such that $\sum_{t=1}^T\|\tilde{x}_t- x^*_t\| \leq \rho \mathrm{cost}(\pi^*, \bm{s} )$ and the cost ratio $\frac{\mathrm{cost}(h_W, \bm{s})}{\mathrm{cost}(\pi^*, \bm{s})} \geq 1+\frac{m+2\alpha}{2}$.

 We find the sequence of ML predictions by replacing the first action of the offline-optimal action sequence with $\hat{x}_1\in\mathcal{X}, ||\hat{x}_1 - x_1^*||^2= \rho cost(\pi^*, \bm{s} )$, i.e.  $\{\tilde{x}_1,\cdots,\tilde{x}_T\}=\{\hat{x}_1, x_2^*, \cdots , x_T^*\}$. We then prove that the cost ratio of this ML prediction sequence is at least $1+\frac{m+2\alpha}{2}$.

 Since  $\{x_1^*, x_2^*, \cdots , x_T^*\}$ is the offline optimal sequence that minimizes Eqn.~\eqref{eqn:obj}, then $x_1^*$ must be the minimizer of the function $p(x)= f(x, y_1) +  c(x, x_0) +  c(x, x_2^*)$. Since the hitting cost $f(x,y)$ is $m$-strongly convex in terms of $x$, and the switching cost $c(x_a,x_b)$ is $\alpha$-strongly convex in terms of $x_a$, we have
\begin{equation}\label{eqn:firstordergradient}
\nabla p(x_1^*)  = \nabla_{x_1^*} f(x_1^*, y_1) + \nabla_{x_1^*} c(x_1^*, x_0) +  \nabla_{x_1^*} c(x_1^*, x_2^*) = 0.
\end{equation}
Thus we have
\begin{equation}
		\begin{split}
    &\mathrm{cost}(h_W, \bm{s}) - \mathrm{cost}(\pi^*, \bm{s})\\
    =& p(\hat{x}_1)-p(x^*_1)\\
    =&f(\hat{x}_1, y_1) - f(x_1^*, y_1) + c(x_0, \hat{x}_1) - c(x_0, x_1^*) + c(\hat{x}_1, x_2^*) - c(x_1^*, x_2^*)\\
    \geq &  (\nabla_{x_1^*} f(x_1^*, y_1) + \nabla_{x_1^*} c(x_1^*, x_0) +  \nabla_{x_1^*} c(x_1^*, x_2^*))^T \cdot (\hat{x}_1 - x_1^*) + (\frac{m}{2} + \frac{\alpha}{2} + \frac{\alpha}{2}) ||\hat{x}_1 - x_1^*||^2  \\
    =& \frac{m + 2 \alpha }{2} ||\hat{x}_1 - x_1^*||^2\\
    =&\frac{m + 2 \alpha }{2} \rho \mathrm{cost}(\pi^*, \bm{s} ),
\end{split}
\end{equation}
where the inequality holds by the $(m+2\alpha)-$strong convexity of $p$, and the third equality comes from Eqn.~\eqref{eqn:firstordergradient}. Therefore, the cost ratio for the selected prediction sequence $\frac{cost(h_W, \bm{s})}{cost(\pi^*, \bm{s})}$ is at least $1+\frac{m+2\alpha}{2}\rho$, and this completes the proof.
\end{proof}

\section{Proof of Theorem~\ref{thm:cr_robd}}\label{appendix:proof_cr_robd}
\begin{proof}

The switching cost $c(x_t,x_{t-1})$ is measured
 in terms of the squared Mahalanobis distance with respect to a symmetric and positive-definite  matrix $Q\in\mathcal{R}^{d\times d}$, i.e. $c(x_t,x_{t-1})=\left(x_t-x_{t-1}\right) ^\top Q \left(  x_t-x_{t-1}\right) $ \cite{mahalanobis_distance_de2000mahalanobis}.  Since
($\frac{\alpha}{2}, \frac{\beta}{2}$)  are the smallest and largest eigenvalues of matrix $Q$, then we have
\begin{equation}
    \label{eq:BD_inequal}
    \frac{\alpha}{2} ||x - y||^2 \leq c(x,y) \leq \frac{\beta}{2} ||x-y||^2, \quad \forall x,y \in R^d.
\end{equation}

Since $x_t$ is the solution of the Line~4 of Algorithm~\ref{alg:RP-OBD}, we have
\begin{equation}
    \label{eq:alg_grad}
    \begin{split}
    \nabla f(x_t, y_t) = &-2\lambda_1Q( x_t- x_{t-1}) - 2\lambda_2Q(x_t - v_t) -2\lambda_3Q(x_t - \tilde{x}_t).
    \end{split}
\end{equation}

Since $f(x,y_t)$ is $m$-strongly convex in terms of $x$, we have
\begin{equation}
\label{eq:alg_grad_ineq}
f(x_t^*, y_t) \geq f(x_t, y_t) + \langle \nabla f(x_t, y_t), x_t^* - x_t  \rangle + \frac{m}{2} ||x_t^* - x_t||^2.
\end{equation}

By substituting \eqref{eq:alg_grad} into \eqref{eq:alg_grad_ineq}, we have
\begin{equation}
\label{eq:alg_grad_ineq2}
\begin{split}
f(x_t^*, y_t) \geq& f(x_t, y_t) -2\lambda_1(x_t^*-x_t)Q(x_t-x_{t-1})-2\lambda_2(x_t^*-x_t)Q(x_t-v_t)\\
&-2\lambda_3(x_t^*-x_t)Q(x_t-\tilde{x}_t)+ \frac{m}{2} ||x_t^* - x_t||^2.
\end{split}
\end{equation}

For the Mahalanobis distance,  the following property holds for any $x,y,z\in\mathcal{R}^d$
\begin{equation}
\label{eq:BD_grad}
 c(x,y) - c(x,z) - c(z,y)=2(y-z)^T Q (z-x).
\end{equation}
By Eqn.~\eqref{eq:BD_grad} and moving the terms in \eqref{eq:alg_grad_ineq2}, we have
\begin{equation}
    \label{eq:complex_grad}
    \begin{split}
    &f(x_t^*, y_t) + \lambda_1 c(x_t^*,x_{t-1}) + \lambda_2 c(x_t^*, v_t) + \lambda_3 c(x_t^*, \tilde{x}_t)\\
     \geq& f(x_t, y_t) + \lambda_1 c(x_t,x_{t-1}) + \lambda_2 c(x_t, v_t) + \lambda_3 c(x_t , \tilde{x}_t)
     + (\lambda_1+\lambda_2+\lambda_3)c(x_t,x_t^*) + \frac{m}{2} \|x_t^* - x_t \|^2\\
     \geq& f(x_t, y_t) + \lambda_1 c(x_t,x_{t-1})
     + (\lambda_1+\lambda_2+\lambda_3)c(x_t,x_t^*) + \frac{m}{2} \|x_t^* - x_t \|^2
    \end{split}
\end{equation}
where the last inequality holds since the Mahalanobis distance is non-negative.

We define a function
$ \phi(x_t, x_t^*) = (\lambda_1 + \lambda_2 + \lambda_3) c(x_t, x_t^*) + \frac{m}{2} ||x_t^* - x_t ||^2 $, and let $\Delta \phi_t = \phi(x_t, x_t^*) - \phi(x_{t-1}, x_{t-1}^*)$.
Then subtracting $\phi(x_{t-1}, x_{t-1}^*)$ from both sides of \eqref{eq:complex_grad}, we have
\begin{equation}
    \label{eq:l_CR_part}
    \begin{split}
     &f(x_t,y_t) + \lambda_1 c(x_t,x_{t-1}) + \Delta \phi_t\\
    \leq & f(x_t^*, y_t) + \lambda_2 c(x_t^*, v_t) + \lambda_1 c(x_t^*,x_{t-1}) - (\lambda_1 + \lambda_2 + \lambda_3) c(x_{t-1}, x_{t-1}^*) \\
    &  - \frac{m}{2} ||x_{t-1}^* - x_{t-1} ||^2 + \lambda_3 c(x_t^*, \tilde{x}_t).
    \end{split}
\end{equation}

Next,
we can prove that
\begin{equation}
    \label{eq:BD_part}
    \begin{split}
     &  \lambda_1 c(x_t^*,x_{t-1}) - (\lambda_1 + \lambda_2 + \lambda_3) c(x_{t-1} , x_{t-1}^*) - \frac{m}{2} ||x_{t-1}^* - x_{t-1} ||^2 \\
    = &  \lambda_1 c(x_t^*,x_{t-1}^*) +  2\lambda_1
    (x_t^* - x_{t-1}^*)^T Q (x_{t-1}^* - x_{t-1}) - (\lambda_2 + \lambda_3) c(x_{t-1} , x_{t-1}^*) - \frac{m}{2} ||x_{t-1}^* - x_{t-1} ||^2 \\
    \leq &  \lambda_1 c(x_t^*,x_{t-1}^*) + \lambda_1 \beta ||x_t^*- x_{t-1}^*|| \cdot ||x_{t-1}^* - x_{t-1}|| - \frac{ \beta(\lambda_2 + \lambda_3) +m}{2} ||x_{t-1}^* - x_{t-1} ||^2\\
    \leq &  \lambda_1 c(x_t^*,x_{t-1}^*) + \frac{ \lambda_1^2 \beta^2}{2(\beta(\lambda_2 + \lambda_3) + m)}||x_t^* - x_{t-1}^* ||^2 \\%here
    \leq &  \lambda_1 c(x_t^*,x_{t-1}^*) + \frac{ \lambda_1^2 \beta^2}{\alpha(\beta(\lambda_2 + \lambda_3) + m)} c(x_t^*,x_{t-1}^*) \\
    = &  \lambda_1(1  + \frac{\lambda_1\beta^2}{\alpha ((\lambda_2+
    \lambda_3)\beta+m)}) c(x_t^*,x_{t-1}^*),
    \end{split}
\end{equation}
where the first equality holds by Eqn.~\eqref{eq:BD_grad}, the first inequality comes from the assumption that matrix $Q$ has the largest eigenvalue $\frac{\beta}{2}$, the second inequality holds by the inequality of arithmetic and geometric means (AM-GM inequality) such that $\lambda_1\beta ||x_t^*- x_{t-1}^*|| \cdot ||x_{t-1}^* - x_{t-1}||=\sqrt{\frac{\lambda_1^2\beta^2}{\beta(\lambda_2 + \lambda_3) +m} ||x_t^*- x_{t-1}^*||^2 \cdot (\beta(\lambda_2 + \lambda_3) +m)||x_{t-1}^* - x_{t-1}||^2}\leq \frac{ \lambda_1^2 \beta^2}{2(\beta(\lambda_2 + \lambda_3) + m)}||x_t^* - x_{t-1}^* ||^2+\\\frac{ \beta(\lambda_2 + \lambda_3) +m}{2} ||x_{t-1}^* - x_{t-1} ||^2$, and the third inequality holds because the switching cost function is $\alpha$-strongly convex.

Following \eqref{eq:l_CR_part}, we have
\begin{equation} \label{eqn:proofcr1}
\begin{split}
&f(x_t,y_t) + \lambda_1 c(x_t,x_{t-1}) + \Delta \phi_t \\
\leq &f(x_t^*, y_t) + \lambda_2 c(x_t^*, v_t) +  \lambda_1(1  + \frac{\lambda_1\beta^2}{\alpha ((\lambda_2+
    \lambda_3)\beta+m)})c(x_t^*,x_{t-1}^*) + \lambda_3 c(x_t^*,\tilde{x}_t) \\
    \leq & f(x_t^*, y_t) +  \lambda_1(1  + \frac{\lambda_1\beta^2}{\alpha ((\lambda_2+
    \lambda_3)\beta+m)})c(x_t^*,x_{t-1}^*)+ \frac{\lambda_2 \beta}{2} ||x_t^* - v_t ||^2  + \frac{\lambda_3 \beta}{2} ||x_t^* - \tilde{x}_t ||^2,
    \end{split}
    \end{equation}
where the last inequality comes from inequality \eqref{eq:BD_inequal}.

Then, summing up the inequalities of \eqref{eqn:proofcr1} for $t=1,\cdots, T$ and by the fact that $x_0=x_0^*$, we have
\begin{equation} \label{eqn:proofcr2}
\begin{split}
&\sum_{t=1}^Tf(x_t,y_t) + \lambda_1 \sum_{t=1}^Tc(x_t,x_{t-1}) + \phi(x_T,x_T^*) \\
    \leq & \sum_{t=1}^Tf(x_t^*, y_t) +  \lambda_1(1  + \frac{\lambda_1\beta^2}{\alpha ((\lambda_2+
    \lambda_3)\beta+m)})\sum_{t=1}^Tc(x_t^*,x_{t-1}^*)+ \frac{\lambda_2 \beta}{2} \sum_{t=1}^T||x_t^* - v_t ||^2  + \frac{\lambda_3 \beta}{2} \sum_{t=1}^T||x_t^* - \tilde{x}_t ||^2\\
    \leq &\frac{m+\lambda_2\beta}{m}\sum_{t=1}^Tf(x_t^*, y_t) +  \lambda_1(1  + \frac{\lambda_1\beta^2}{\alpha ((\lambda_2+
    \lambda_3)\beta+m)})\sum_{t=1}^Tc(x_t^*,x_{t-1}^*)  + \frac{\lambda_3 \beta}{2} \sum_{t=1}^T||x_t^* - \tilde{x}_t ||^2,
    \end{split}
    \end{equation}
    where the second inequality holds by $m$-strong convexity of $f$  and the fact that $v_t$ is the minimizer of $f(x,y_t)$ such that $f(x_t^*,y_t)\geq f(v_t,y_t)+\frac{m}{2}\|x_t^*-v_t\|$,  and so $ \frac{1}{2}\|x_t^*-v_t\|\leq \frac{1}{m}f(x_t^*,y_t)$ since $f(v_t,y_t)\geq 0$.
Since $0<\lambda_1\leq 1$, we have
\begin{equation} \label{eqn:proofcr3}
\begin{split}
&\sum_{t=1}^Tf(x_t,y_t) +  \sum_{t=1}^Tc(x_t,x_{t-1})\leq \sum_{t=1}^T\frac{1}{\lambda_1}f(x_t,y_t) +  \sum_{t=1}^Tc(x_t,x_{t-1}) + \frac{1}{\lambda_1}\phi(x_T,x_T^*) \\
    \leq & \frac{m+\lambda_2\beta}{m\lambda_1}\sum_{t=1}^Tf(x_t^*, y_t) +  (1  + \frac{\lambda_1\beta^2}{\alpha ((\lambda_2+
    \lambda_3)\beta+m)})\sum_{t=1}^Tc(x_t^*,x_{t-1}^*)  + \frac{\lambda_3 \beta}{2\lambda_1} \sum_{t=1}^T||x_t^* - \tilde{x}_t ||^2\\
    \leq & \frac{m+\lambda_2\beta}{m\lambda_1}\sum_{t=1}^Tf(x_t^*, y_t) +  (1  + \frac{\beta^2}{\alpha}\frac{\lambda_1}{ ((\lambda_2+
    \lambda_3)\beta+m)})\sum_{t=1}^Tc(x_t^*,x_{t-1}^*)  + \frac{\lambda_3 \beta}{2\lambda_1}\rho \sum_{t=1}^T\mathrm{cost}(\pi^*,\bm{s})\\
    \leq & \max\left(\frac{m+\lambda_2\beta}{m\lambda_1}, 1  + \frac{\beta^2}{\alpha}\frac{\lambda_1}{ ((\lambda_2+
    \lambda_3)\beta+m)}\right)\mathrm{cost}(\pi^*,\bm{s})  + \frac{\lambda_3 \beta}{2\lambda_1}\rho \sum_{t=1}^T\mathrm{cost}(\pi^*,\bm{s})
    \end{split}
    \end{equation}
    where the second inequality holds because the ML prediction sequence $\{\tilde{x}_1,\cdots,\tilde{x}_T\}$ is $\rho$-accurate.
   Therefore, the competitive ratio of \ouralg is bounded by
$\max (\frac{m+ \lambda_2 \beta }{m\lambda_1}, 1+\frac{\beta^2}{\alpha} \cdot \frac{\lambda_1}{(\lambda_2+\lambda_3)\beta + m}) + \frac{\lambda_3 \beta}{2\lambda_1}\rho$.

Next we prove the optimal setting for the selection of parameters.
Given any $\lambda_1$ and $\lambda_3$, in order to minimize the competitive ratio in Theorem \ref{thm:cr_robd}, $\lambda_2$ must satisfy
$$\frac{m+ \lambda_2 \beta }{m\lambda_1} =  1+\frac{\beta^2}{\alpha} \cdot \frac{\lambda_1}{(\lambda_2+\lambda_3)\beta + m}.$$
This function can be restated as
$$\frac{m+ (\lambda_2+\lambda_3) \beta }{m\lambda_1} - 1 - \frac{\lambda_3 \beta}{m \lambda_1}=  \frac{\beta^2}{\alpha} \cdot \frac{\lambda_1}{(\lambda_2+\lambda_3)\beta + m}.$$

If we substitute $\frac{m+ (\lambda_2+\lambda_3) \beta }{\lambda_1}$ as $k_1$, we have
$$\frac{k_1}{m} -(1 + \frac{\lambda_3 \beta}{m \lambda_1})  = \frac{\beta^2}{\alpha} \frac{1}{k_1}.$$

Since $k_1>0$, the root for this equation is
$$k_1 = \frac{m}{2}(1+\frac{\lambda_3 \beta}{m \lambda_1} + \sqrt{(1+\frac{\lambda_3 \beta }{m \lambda_1})^2 + \frac{4 \beta^2}{m \alpha}} ).$$

If we substitute $k_1$ back to $\lambda_2$, since $\lambda_2 = \frac{\lambda_1}{\beta} (k_1 - \frac{m + \lambda_3 \beta }{\lambda_1})$, then
\begin{equation}\label{eqn:proofcondition}
\lambda_2=\frac{m\lambda_1}{2\beta}\left(\sqrt{\left(1+\frac{\beta\lambda_3}{m\lambda_1}\right)^2+\frac{4\beta^2}{\alpha m}}+1-\frac{2}{\lambda_1}-\frac{\beta\lambda_3}{m \lambda_1}\right).
\end{equation}

If $\lambda_1 = 1$, it is obvious that the solution for $\lambda_2$ is positive, which satisfy the condition of $\lambda_2 \geq 0$.
If $\lambda_1 < 1$, then $\lambda_2$ decreases monotonically as $\lambda_1$.
Suppose when $\lambda_2=0$, $\lambda_1^{min}$ should be selected as $\lambda_1$ to meet Eqn.~\eqref{eqn:proofcondition}.
If $1 \geq \lambda_1 \geq \lambda_1^{min}$, the solution to $\lambda_2$ is non-negative, then the competitive ratio becomes
$$1 + \frac{2\beta^2}{m\alpha (\sqrt{(1+\frac{\lambda_3 \beta }{m \lambda_1})^2 + \frac{4 \beta^2}{m \alpha}} + 1 + \frac{\lambda_3 \beta}{m \lambda_1})} + \frac{\lambda_3 \beta}{2 \lambda_1}\rho.$$
From this equation we can find that, the optimal competitive ratio is only affected by the trust parameter $\theta=\frac{\lambda_3}{\lambda_1}$ if the parameter setting in Eqn.\eqref{eqn:proofcondition} is satisfied.
\end{proof}

\section{Proof of Theorem~\ref{thm:regret}}\label{appendix:proof_regret_mla}
For the convenience of presentation, we use $x_{1:T}$ to denote a sequence of actions $x_1,\cdots,x_T$,
and $cost(x_{1:T})$ to denote the corresponding total
cost $\sum_{t=1}^T f(x_t)  + c(x_{t-1}, x_t)$.
Next, we show the following lemma on the strong convexity of the total cost.
\begin{lemma}
    \label{lm:ac_convex}
    Suppose that $f$ is $m$-strongly convex and that the minimum and maximum eigenvalues of Q are $\frac{\alpha}{2}>0$ and $\frac{\beta}{2}$.  The total cost function $cost(x_{1:T})$ is $\alpha_1$-strongly convex, where $m < \alpha_1 \leq m+\frac{\beta}{T^2}$.
\end{lemma}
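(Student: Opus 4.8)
The plan is to characterize $\alpha_1$ through the Hessian of $cost(x_{1:T})$ regarded as a function of the stacked vector $x=(x_1^\top,\dots,x_T^\top)^\top\in\mathcal{R}^{dT}$, with $x_0$ held fixed. The hitting term is separable and contributes a block-diagonal matrix $D=\mathrm{blkdiag}\big(\nabla^2 f(x_1,y_1),\dots,\nabla^2 f(x_T,y_T)\big)\succeq mI$ by $m$-strong convexity, while the switching term contributes a block-tridiagonal matrix $S$ that factors as a Kronecker product $S=2\,(M\otimes Q)$, where $M$ is the $T\times T$ second-difference matrix with diagonal $(2,\dots,2,1)$ and off-diagonal $-1$. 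Equivalently, its quadratic form telescopes as
\[
u^\top S\,u = 2\sum_{t=1}^{T}(u_t-u_{t-1})^\top Q\,(u_t-u_{t-1}),\qquad u_0:=0.
\]
Since $\nabla^2 cost=D+S$ and the strong-convexity modulus is $\alpha_1=\inf_x\lambda_{\min}(D+S)$, the whole statement reduces to estimating the extremal eigenvalues of the scalar matrix $M$, because the eigenvalues of $M\otimes Q$ are exactly the products $\mu_i q_j$ of the eigenvalues of $M$ and $Q$.

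For the strict lower bound $\alpha_1>m$, I would exploit that the boundary convention $u_0=0$ makes $M$, and hence $S$, strictly positive definite: if $u^\top S u=0$ then $Q\succ0$ forces $u_1=u_0=0$, then $u_2=u_1=0$, and inductively $u=0$. Therefore $u^\top(D+S)u\ge m\|u\|^2+\lambda_{\min}(S)\|u\|^2$ for all $u$, with $\lambda_{\min}(S)=2\lambda_{\min}(M)\cdot\tfrac{\alpha}{2}=\alpha\,\lambda_{\min}(M)>0$, so that $\alpha_1\ge m+\alpha\,\lambda_{\min}(M)>m$. This is the qualitatively important half: the fixed initial action $x_0$ injects genuine extra curvature that the bare $m$-strong convexity of the hitting cost does not capture.

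For the upper bound $\alpha_1\le m+\beta/T^2$, the plan is to exhibit one test direction with a small Rayleigh quotient and to evaluate it where $\nabla^2 f$ sits at its minimal curvature $m$. A linear ramp $u_t=t\,e$ along a unit eigenvector $e$ of $Q$ makes every increment equal to $e$, so $u^\top S u=2\sum_{t=1}^T e^\top Q e$ is $O(\beta T)$ while $\|u\|^2=\sum_{t=1}^T t^2=\Theta(T^3)$, forcing $\lambda_{\min}(S)=O(\beta/T^2)$; this matches the explicit spectrum $\lambda_{\min}(M)=4\sin^2\big(\tfrac{\pi}{2(2T+1)}\big)=\Theta(1/T^2)$. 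Combined with $D\succeq mI$ at the minimal-curvature configuration, this yields $\alpha_1\le m+O(\beta/T^2)$, the vanishing-in-$T$ estimate the lemma records.

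The main obstacle is the quantitative $1/T^2$ spectral estimate for $M$: it is a discrete Poincar\'e/Wirtinger constant for the second-difference operator with one Dirichlet endpoint (from fixing $x_0$) and one free endpoint, and obtaining both the correct order and a clean constant requires either the explicit eigenvalue formula above or a careful two-sided estimate via the telescoping identity and AM-GM, precisely the manipulations already used in the proof of Theorem~\ref{thm:cr_robd}. A second subtlety worth flagging is that the upper bound is meaningful only when the hitting-cost curvature is tight at $m$ in the ramp direction; if $f$ were, say, $2m$-strongly convex everywhere, $\alpha_1$ would exceed $m+\beta/T^2$, so the infimum over $x$ must be taken where $\nabla^2 f$ is closest to $mI$. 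Transferring the scalar bound through the positive-definite $Q$ via $S=2(M\otimes Q)$ is then routine once the spectrum of $M$ is pinned down.
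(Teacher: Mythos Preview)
Your approach is essentially the paper's: split the Hessian into the hitting part $D$ (with minimum eigenvalue at least $m$) and the switching part $S$ (the paper's $H$), establish that $S$ is positive definite via the telescoping form $\tfrac12 X_1^\top H X_1 = x_1'^\top Q x_1' + \sum_{t\ge 2}(x_t'-x_{t-1}')^\top Q(x_t'-x_{t-1}')$, and then bound $\lambda_{\min}(S)$ from above with a test vector. The strict lower bound $\alpha_1>m$ is argued identically in both. Where you differ is the test vector for the upper bound: the paper uses the \emph{constant} direction $x_1'=\cdots=x_T'=v$, for which $\tfrac12 X_1^\top H X_1=v^\top Qv$ against $\|X_1\|^2=T\|v\|^2$, giving only $\lambda_{\min}(H)\le\beta/T$; the paper's claim of $\beta/T^2$ from this test is an arithmetic slip. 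Your ramp $u_t=te$ yields $u^\top Su=O(\beta T)$ against $\|u\|^2=\Theta(T^3)$ and hence the correct $\Theta(1/T^2)$ rate, matching the exact spectrum $4\sin^2\!\big(\tfrac{\pi}{2(2T+1)}\big)$ you cite. Your caveat that the upper bound on $\alpha_1$ additionally needs $\nabla^2 f$ to attain curvature exactly $m$ in the test direction is also correct and is elided in the paper. Since only the strict inequality $\alpha_1>m$ is invoked downstream (to obtain $\sum_t\|x_t-x_t^L\|^2\le\tfrac{2}{\alpha_1}\,Regret$ in the proof of Theorem~\ref{thm:regret}), these upper-bound subtleties are harmless there, but your version of the argument is the more careful one.
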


\begin{proof}
If $x_{1:T}^\pi$ is a series of actions given by a policy $\pi$ for the context sequence $y_{1:T}$, then the overall cost must satisfy
\begin{equation}
\begin{split}
    \label{eq:lemma_D1_main}
	cost(x_{1:T}) -  cost(x_{1:T}^\pi) &= \sum_{t=1}^T f(x_t) -f(x_t^\pi) + c(x_{t-1}, x_t)  -  c(x_{t-1}^\pi, x_t^\pi)\\
	&\geq \langle x_{1:T}-x_{1:T}^\pi, \nabla_{x_{1:T}^\pi} cost(x_{1:T}^\pi) \rangle + \frac{m}{2} \sum_{t=1}^T \| x_t - x_t^\pi \|^2 + \frac{1}{2} \Delta_X^T H \Delta_X
\end{split}
\end{equation}
where $H$ is the hessian matrix of the switching cost and $\Delta_X = [x_1 - x_1^\pi, x_2 - x_2^\pi, \cdots, x_T - x_T^\pi]^T$. By taking the second-order partial derivatives of $ \sum_{t=1}^T c(x_{t-1}, x_t)  -  c(x_{t-1}^\pi, x_t^\pi)$
with respect to $x_{1:T}^\pi$, we can find
\begin{equation}
H = 2\mx{2Q ,& -Q ,& 0 ,& \cdots , 0\\-Q ,&2Q ,&-Q ,& \cdots , 0\\ 0 ,& -Q ,& 2Q  ,& \cdots , 0\\ &\cdots&\\0 ,&0 ,&0 ,& \cdots, Q}.
\end{equation}
Next, we will prove that the Hessian matrix  $H$ is positive definite, and the minimum eigenvalue of $H$ is $\alpha_2$, where $0 < \alpha_2 \leq \frac{\alpha}{T^2}$. For a vector $X_1 = [x_1', x_2', \cdots, x_T']$, we have
\begin{equation}
\begin{split}
\frac{1}{2} X_1^T H X_1 = x_1'^T Q x_1' + \sum_{t=2}^T   (x_t' - x_{t-1}')^T Q (x_t' - x_{t-1}')
\end{split}
\end{equation}

Since the minimum eigenvalue of $Q$ is $\frac{\alpha}{2}>0$, it is clear that for any non-zero $X_1$, we have $\frac{1}{2} X_1^T H X_1 >0$, the Hessian matrix is positive definite. Thus, there exists
$\alpha_2 > 0$ such that for any non-zero $X_1$,  we have:
\begin{equation}
\label{eq:convex_nec}
\frac{1}{2} X_1^T H X_1 \geq \frac{\alpha_2}{2} \| X_1\|^2 > 0.
\end{equation}
If we construct $X_1$ as $x_1' = x_2' = x_3' = \cdots = x_T'$, then $\frac{1}{2} X_1^T H X_1 =  x_1'^T Q x_1' \leq  \frac{\beta}{2 T^2} \| X_1\|^2$. To ensure that the inequality \eqref{eq:convex_nec} always holds, we have to let $\alpha_2 \leq \frac{\beta}{T^2}$. To sum up, we have proved that $0 < \alpha_2 \leq \frac{\beta}{T^2}$. By substituting \eqref{eq:convex_nec} back to \eqref{eq:lemma_D1_main}, we have
\begin{equation}
\begin{split}
	cost(x_{1:T}) -  cost(x_{1:T}^\pi)
	&\geq \langle x_{1:T}-x_{1:T}^\pi, \nabla cost(x_{1:T}^\pi) \rangle + \frac{m + \alpha_2}{2} \sum_{t=1}^T \| x_t - x_t^\pi \|^2,
\end{split}
\end{equation}
Thus, $cost(x_{1:T})$ is $(m + \alpha_2)$-strongly convex,
where  $0 <\alpha_2 \leq \frac{\beta}{T^2}$. This completes the
proof by setting $\alpha_1=m+\alpha_2$.
\end{proof}

With Lemma~\ref{lm:ac_convex}, we now turn to the main proof of Theorem~\ref{thm:regret}.

\begin{proof}
Let $ x_{1:T}^L$ be the sequence of actions made the $L$-constrained offline oracle. Since $x_t$ is the solution to Line 5~of Algorithm~\ref{alg:RP-OBD}, we have
$$\nabla f(x_t, y_t) = 2\lambda_1 Q(x_{t-1} - x_t) + 2 \lambda_2 Q (v_t - x_t) + 2 \lambda_3 Q (\tilde{x}_t - x_t) $$
Since the hitting cost function $f(x_t, y_t)$ is $m$-strongly convex with respect to $x_t$, we have
\begin{equation}
    \label{eq:L_regret_gd}
    \begin{split}
        f(x_t, y_t) - f(x_t^L, y_t) \leq &\langle \nabla f(x_t, y_t), x_t - x_t^L \rangle - \frac{m}{2} \|x_t - x_t^L\|^2\\
        = &2\lambda_1 (x_t - x_t^L)^T Q(x_{t-1} - x_t) - \frac{m}{2} \|x_t - x_t^L\|^2  \\
        & +  2 \lambda_2 (x_t - x_t^L)^T Q (v_t - x_t)+ 2 \lambda_3 (x_t - x_t^L)^T Q (\tilde{x}_t - x_t)
    \end{split}
\end{equation}
In the last step of Eqn.~\eqref{eq:L_regret_gd}, we  have
\begin{equation}
    \label{eq:regret_convex}
    \begin{split}
        &(x_t - x_t^L)^T Q (\tilde{x}_t - x_t)\\
        =& (x_t - x_t^L)^T Q (\tilde{x}_t - x_t^L) + (x_t - x_t^L)^T Q (x^L_t - x_t)\\
        \leq& (x_t - x_t^L)^T Q (\tilde{x}_t - x_t^L)
        \leq  \frac{\beta}{2} \| x_t - x_t^L\| \| \tilde{x}_t - x_t^L\|\\
        \leq& \frac{\beta}{4} \big(\| x_t - x_t^L\|^2 + \| \tilde{x}_t - x_t^L\|^2 \big)
    \end{split}
\end{equation}
Here, given the online actions $x_{1:T}$, the regret is defined
as $Regret(x_{1:T}, x_{1:T}^L) = cost(x_{1:T}) - cost(x_{1:T}^L)$. From Lemma~\ref{lm:ac_convex}, we have
\begin{equation}
    Regret(x_{1:T}, x_{1:T}^L) \geq \langle \nabla cost(x_{1:T}^L), x_{1:T} - x_{1:T}^L \rangle + \frac{\alpha_1}{2} \| x_{1:T} - x_{1:T}^L \|^2.
\end{equation}
Since the action $x_{1:T}^L$ is
made by the  $L$-constrained optimal oracle, from the KKT condition,
we have $\nabla_{x_{1:T}^L} cost(x_{1:T}^L) + \mu \nabla_{x_{1:T}^L} \big( \sum_{t=1}^T c(x_t^L, x_{t-1}^L) \big) = 0$ and $\mu \big( \sum_{t=1}^T c(x_t^L, x_{t-1}^L) - L \big) = 0 $, where $\mu\geq0$ is the dual variable. %and $\mu \geq 0$.
If the total switching cost $\sum_{t=1}^T c(x_t^L, x_{t-1}^L) < L$, then $\mu = 0$ and $\nabla_{x_{1:T}^L} cost(x_{1:T}^L) = 0$. If the total switching cost $\sum_{t=1}^T c(x_t^L, x_{t-1}^L) = L$, then $\langle \nabla cost(x_{1:T}^L), x_{1:T} - x_{1:T}^L \rangle = - \mu \langle \nabla_{x_{1:T}^L} \big( \sum_{t=1}^T c(x_t^L, x_{t-1}^L) \big), x_{1:T} - x_{1:T}^L \rangle$. Since $x_{1:T}$ also satisfies the $L$-constraint and the switching cost is strongly convex, we have $\sum_{t=1}^T c(x_t, x_{t-1}) \geq \sum_{t=1}^T c(x_t^L, x_{t-1}^L) + \langle \nabla_{x_{1:T}^L} \left( \sum_{t=1}^T c(x_t^L, x_{t-1}^L) \right), x_{1:T} - x_{1:T}^L \rangle$. Given $\sum_{t=1}^T c(x_t^L, x_{t-1}^L) = L$ and
 $\sum_{t=1}^T c(x_t, x_{t-1}) \leq L$, we have $\langle \nabla_{x_{1:T}^L} \big( \sum_{t=1}^T c(x_t^L, x_{t-1}^L) \big), x_{1:T} - x_{1:T}^L \rangle \leq 0$. To sum up, we have $\langle \nabla cost(x_{1:T}^L), x_{1:T} - x_{1:T}^L \rangle \geq 0$.
Therefore, we have
\begin{equation}
    \label{eq:regret_dis}
  \sum_{t=1}^T \| x_{t} - x_{t}^L \|^2 =  \| x_{1:T} - x_{1:T}^L \|^2 \leq \frac{2}{\alpha_1} Regret(x_{1:T}, x_{1:T}^L).
\end{equation}
Substituting \eqref{eq:regret_dis} back to \eqref{eq:regret_convex}, if $\sum_{t=1}^T c(x_t, x_{t-1}) \leq L$,  we have
\begin{equation}
    \label{eq:bound_small_l3}
    \sum_{t=1}^T (x_t - x_t^L)^T Q (\tilde{x}_t - x_t) \leq \frac{ \beta}{2 \alpha} Regret(x_{1:T}, x_{1:T}^L)  +  \frac{\beta}{4} L_\rho.
\end{equation}
%For the Mahalanobis distance,  the following property holds for any $x,y,z\in\mathcal{R}^d$
%\begin{equation}
%\label{eq:BD_grad_new}
% c(x,y) - c(x,z) - c(z,y)=2(y-z)^T Q (z-x).
%\end{equation}
Based on Eqn.~\eqref{eq:BD_grad}, we have
\begin{equation}
\label{eq:bound_1}
 2(x_t - x_t^L)^T Q(x_{t-1} - x_t) \leq c(x_t^L, x_{t-1}) \leq\frac{\beta}{2}  \|x_t^L- x_{t-1} \|^2 \leq \frac{\beta \omega^2}{2},
\end{equation}
\begin{equation}
\label{eq:bound_2}
 2(x_t - x_t^L)^T Q(\tilde{x}_t - x_t) \leq c(x_t^L, \tilde{x}_t) \leq\frac{\beta}{2}  \|x_t^L- \tilde{x}_t \|^2 \leq \frac{\beta \omega^2}{2}.
\end{equation}
Substituting \eqref{eq:BD_grad} and \eqref{eq:bound_2} back to \eqref{eq:L_regret_gd}, we introduce a new auxiliary parameter $q$ as follows:
%which will be selected later to minimize the upper bound of $f(x_t, y_t) - f(x_t^L, y_t)$:
\begin{equation}
    \label{eq:L_regret_part}
    \begin{split}
        &f(x_t, y_t) - f(x_t^L, y_t) \\
        \leq &\lambda_1 \left( c(x_t^L, x_{t-1}) - c(x_t^L, x_t) - c(x_t, x_{t-1}) \right) - \frac{m}{2} \|x_t - x_t^L \|^2 + \lambda_2 \frac{\beta \omega^2}{2} + 2\lambda_3 (x_t - x_t^L)^T Q (\tilde{x}_t - x_t)  \\
        = & (\lambda_1 + q) \left( c(x_t^L, x_{t-1}) - c(x_t^L, x_t) \right)  - \lambda_1  c(x_t, x_{t-1})   + \lambda_2 \frac{\beta \omega^2}{2} + 2 \lambda_3 (x_t - x_t^L)^T Q (\tilde{x}_t - x_t) - \\
        &\left(q \cdot c(x_t^L, x_{t-1})  - q \cdot c(x_t^L, x_{t})  + \frac{m}{2} \|x_t - x_t^L \|^2 \right).
    \end{split}
\end{equation}
In the last step of \eqref{eq:L_regret_part}, we have
\begin{equation}
    \begin{split}
        &q \cdot c(x_t^L, x_{t-1})  - q \cdot c(x_t^L, x_{t})  + \frac{m}{2} \|x_t - x_t^L \|^2  \\
        =& q \cdot \big( c(x_t, x_{t-1}) - 2(x_{t-1} - x_{t})^T Q (x_t - x_t^L) \big) + \frac{m}{2} \|x_t - x_t^L \|^2 \\
         \geq& q\cdot c(x_t, x_{t-1}) - 2 q \| Q (x_{t-1} - x_{t}) \| \|x_t - x_t^L \| + \frac{m}{2} \|x_t - x_t^L \|^2 \\
         \geq & q\cdot c(x_t, x_{t-1}) - \big( \frac{2q^2}{m} \| Q (x_{t-1} - x_{t}) \|^2 + \frac{m}{2} \|x_t - x_t^L \|^2 \big) + \frac{m}{2} \|x_t - x_t^L \|^2  \\
         =& q\cdot c(x_t, x_{t-1}) - \frac{2q^2}{m}  (x_{t-1} - x_{t})^T Q^2 (x_{t-1} - x_{t})\\
         \geq&  (q - \frac{\beta q^2}{m}) c(x_t, x_{t-1}).
    \end{split}
\end{equation}
 We set $q = \frac{m}{2\beta}$, substitute it back to  \eqref{eq:L_regret_part},
 and get
\begin{equation}
    \begin{split}
        &f(x_t, y_t) - f(x_t^L, y_t) \\
        \leq& (\lambda_1 + \frac{m}{2\beta})\big(c(x_t^L, x_{t-1}) - c(x_t^L, x_t) \big) - (\lambda_1 + \frac{m}{4\beta})c(x_t, x_{t-1}) +\lambda_2 \frac{\beta \omega^2}{2} + 2\lambda_3 (x_t - x_t^L)^T Q (\tilde{x}_t - x_t) \\
        \leq &(\lambda_1 + \frac{m}{2\beta})\big(c(x_t^L, x_{t-1}) - c(x_t^L, x_t) \big) - c(x_t, x_{t-1}) + \lambda_2 \frac{\beta \omega^2}{2} + 2\lambda_3 (x_t - x_t^L)^T Q (\tilde{x}_t - x_t),
    \end{split}
\end{equation}
where the second inequality comes from the assumption $\lambda_1 + \frac{m}{4\beta} \geq 1$.

Summing up  $c(x_t, x_{t-1})$ from $1$ to $T$, we can get
\begin{equation}
    \begin{split}
        &\sum_{i = 1}^T c(x_t^L, x_{t-1}) - c(x_t^L, x_t)\\
        =& \sum_{i = 1}^T x_{t-1}^T Q x_{t-1} - x_t^T Q x_t  +  (x_t - x_{t-1})^T Q x_t^L
        = x_{0}^T Q x_{0} - x_{T}^T Q x_{T} +  \sum_{t = 1}^T  x_t^T Q x_t^L - \sum_{t = 0}^{T-1}  x_{t}^T Q x_{t+1}^L \\
        =& x_{0}^T Q x_{0} - x_{0}^T Q x_{1}^L + \sum_{t = 1}^{T-1}  x_t^T Q x_t^L - \sum_{t = 1}^{T-1}  x_{t}^T Q x_{t+1}^L
        \leq \sum_{t = 1}^{T-1}  x_t^T Q (x_t^L - x_{t+1}^L) \leq \sum_{t = 1}^{T-1}  \| x_t^T Q \| \cdot \|x_t^L - x_{t+1}^L\| \\
        \leq& G \sum_{t = 1}^{T} \|x_t^L - x_{t+1}^L\|.
    \end{split}
\end{equation}
By the generalized mean inequality, we can have $$\sum_{t = 1}^{T} \|x_t^L - x_{t+1}^L\| \leq \sqrt{T(\sum_{t = 1}^{T} \|x_t^L - x_{t+1}^L\|^2)} \leq \sqrt{\frac{2T}{\alpha}\left(\sum_{t = 1}^{T} (x_t^L - x_{t+1}^L)^T Q (x_t^L - x_{t+1}^L) \right)} \leq \sqrt{\frac{2TL}{\alpha}}.$$
Substituting this result into the regret, we have
\begin{equation}
    \begin{split}
    \label{eq:final_regret}
        &Regret(x_{1:T}, x_{1:T}^L) = cost(x_{1:T}) - cost(x_{1:T}^L)\\
        =&\sum_{t=1}^T \big( f(x_t, y_t) + c(x_t, x_{t-1}) \big) - \sum_{t=1}^T \big( f(x_t^L, y_t) + c(x_t^L, x_{t-1}^L) \big)\\
        \leq  &\sum_{t=1}^T \big( f(x_t, y_t) - f(x_t^L, y_t) + c(x_t, x_{t-1}) \big) \\
        \leq& (\lambda_1 + \frac{m}{2\beta} )G \sqrt{\frac{2TL}{\alpha}} + \lambda_2 \frac{\beta T \omega^2}{2} +  2\lambda_3 \sum_{t=1}^T (x_t - x_t^L)^T Q (\tilde{x}_t - x_t).
    \end{split}
\end{equation}
Therefore, if  $\lambda_3< \frac{\alpha}{\beta}$ and
$\sum_{t=1}^T c(x_t, x_{t-1}) \leq L$ are satisfied, we can substitute \eqref{eq:bound_small_l3} back to \eqref{eq:final_regret}, move $Regret(x_{1:T}, x_{1:T}^L)$ to left side, and obtain
$$Regret(x_{1:T}, x_{1:T}^L) \leq \frac{\alpha}{\alpha - \lambda_3 \beta} \big( (\lambda_1 + \frac{m}{2\beta} )G \sqrt{\frac{2TL}{\alpha}} + \lambda_2 \frac{\beta T \omega^2}{2} + \frac{\lambda_3 \beta}{2} L_\rho \big).$$
Otherwise,
we  substitute \eqref{eq:bound_2} back to \eqref{eq:final_regret} and obtain
$$Regret(x_{1:T}, x_{1:T}^L) \leq  (\lambda_1 + \frac{m}{2\beta} )G \sqrt{\frac{2TL}{\alpha}} + (\lambda_2 +\lambda_3)  \frac{\beta T \omega^2}{2}.$$
\end{proof} 

\section{Proof of Theorem~\ref{thm:avgcostclb}}\label{sec:proofavgcost}
\begin{proof}
We denote the empirical distribution on the dataset $\mathcal{D}$ as $\mathbb{P}_{\mathcal{D}}$. Thus, $D(\mathbb{P}_{\mathcal{D}},\mathbb{P})$ is the Wasserstein distance between the training empirical distribution and testing distribution. To be more general, we assume that  $\mathcal{D}$ is sampled from a distribution $\mathbb{P}'$ which may be different from $\mathbb{P}$. By the properties of Wasserstein distance \cite{Wasserstein_kuhn2019wasserstein}, we have with probability at least $1-\delta, \delta\in(0,1)$,
\begin{equation}\label{eqn:empiricaldisdisc}
\begin{split}
	D\left(\mathbb{P}, \mathbb{P}_{\mathcal{D}}\right) \leq D\left(\mathbb{P'}, \mathbb{P}_{\mathcal{D}}\right)+D\left(\mathbb{P}, \mathbb{P}'\right) \leq\sqrt{\frac{\log(b_1/\delta)}{b_2|\mathcal{D}|}}+D\left(\mathbb{P}, \mathbb{P}'\right),
	\end{split}
	\end{equation}
where $b_1$ and $b_2$ are two positive constants. We can find that the distributional discrepancy $D\left(\mathbb{P}', \mathbb{P}_{\mathcal{D}}\right) $ becomes smaller if the empirical distribution has more samples ($|\mathcal{D}|$ is larger).

Denote $L_{R_{\lambda},\mu}(h_{W},\bm{s})=\mu l(h_W,\bm{s}) +(1-\mu)\mathrm{cost}(R_{\lambda}\circ h_W,\bm{s})$ as the loss function of the training objective of \ouralgpro.
  By the generalization bound based on Wasserstein measure \cite{Wasserstein_kuhn2019wasserstein}, we have with probability at least $1-\delta, \delta\in(0,1)$,
  	\begin{equation}\label{eqn:proofgeneralization}
		\begin{split}
		\mathbb{E}\left[ L_{R_{\lambda},\mu}(h_{\hat{W}},\bm{s})\right]&\leq L_{\mathcal{D},R_{\lambda},\mu}(h_{\hat{W}})+O\left( D(\mathbb{P},\mathbb{P}_{\mathcal{D}})\right)\\
&=L_{\mathcal{D},R_{\lambda},\mu}(h_{\hat{W}^*})+\mathcal{E}_{\mathcal{D}} +O\left( D(\mathbb{P},\mathbb{P}_{\mathcal{D}})\right) \\
&\leq L_{\mathcal{D},R_{\lambda},\mu}(h_{W^*})+\mathcal{E}_{\mathcal{D}} +O\left( D(\mathbb{P},\mathbb{P}_{\mathcal{D}})\right) \\
&\leq \mathbb{E}\left[L_{R_{\lambda},\mu}(h_{W^*},\bm{s})\right]+\mathcal{E}_{\mathcal{D}}+O\left( D(\mathbb{P},\mathbb{P}_{\mathcal{D}})\right),
\end{split}
\end{equation}
where the equality comes from the definition of $\mathcal{E}_{\mathcal{D}}$ in Eqn.~\eqref{eqn:training_error}, the second inequality holds because $h_{\hat{W}^*}$ minimizes the empirical loss $L_{\mathcal{D},R_{\lambda},\mu}(h_{\hat{W}})$, and the last inequality holds by using the generalization bound again.

Since $\mathbb{E}\left[ L_{R_{\lambda},\mu}(h_{W},\bm{s})\right]=\mu\mathbb{E}\left[l(h_{W},\bm{s})\right]+(1-\mu)\mathbb{E}\left[\mathrm{cost}(R_{\lambda}\circ h_{W},\bm{s})\right]$, by Eqn.~\eqref{eqn:proofgeneralization}, we have with probability $1-\delta,\delta\in(0,1)$,
\begin{equation}\label{eqn:proofgeneralization2}
  \begin{split}
      \mathrm{AVG}&(R_{\lambda}\circ h_{\hat{W}})=\mathbb{E}\left[\mathrm{cost}(R_{\lambda}\circ h_{\hat{W}},\bm{s})\right]\leq \frac{1}{1-\mu}\mathbb{E}\left[ L_{R_{\lambda},\mu}(h_{\hat{W}},\bm{s})\right]\\
      &\leq \frac{1}{1-\mu}\mathbb{E}\left[L_{R_{\lambda},\mu}(h_{W^*},\bm{s})\right]+\frac{1}{1-\mu}\mathcal{E}_{\mathcal{D}} +O\left( D(\mathbb{P},\mathbb{P}_{\mathcal{D}})\right)\\
      &=\mathrm{AVG}(R_{\lambda}\circ h_{W^*})+\frac{\mu}{1-\mu}\mathbb{E}\left[l(h_{W^*},\bm{s})\right]+\frac{1}{1-\mu}\mathcal{E}_{\mathcal{D}} +O\left( D(\mathbb{P},\mathbb{P}_{\mathcal{D}})\right).
  \end{split}
\end{equation}

The proof is completed by substituting \eqref{eqn:empiricaldisdisc} into \eqref{eqn:proofgeneralization2}.
	\end{proof}

\section{Proof of Theorem \ref{thm:tail_ratio}}\label{sec:proof_tailratio}
\begin{proof}
Since each testing sequence $\bm{s}$ is sampled from a distribution $\mathbb{P}$, we use McDiarmid's inequality for dependent random variables \cite{Markov_concentration_paulin2015concentration} to bound $\mathrm{loss}(h_{W},\bm{s})$.

Denote the prediction error for sequence $\bm{s}$ at step $t,t\in[T]$ as $\mathrm{error}_{\bm{s},t}=\frac{\|\tilde{x}_t-x_t^*\|^2}{\mathrm{cost}(\pi^*,\bm{s})}$ and $\mathrm{error}_{\bm{s}}$ as the sequence of prediction error. The loss function $l(h_{W},\bm{s})$ for $\bm{s}$ can also be denoted as $l(\mathrm{error}_{\bm{s}})=\mathrm{relu}(\sum_{t=1}^T\mathrm{error}_{\bm{s},t}-\bar{\rho})$.

By the assumptions that $\inf_{\bm{s}\in\mathcal{S}}\mathrm{cost}(\pi^*,\bm{s})=\nu$ and $\sup_{x,y\in\mathcal{X}}\|x-y\|=\omega$, we have $\mathrm{error}_{\bm{s},t}\leq \frac{\omega^2}{\nu}$. Thus, if one entry of $\mathrm{error}_{\bm{s}}$ is changed, $l(\mathrm{error}_{\bm{s}})$ increases or decreases at most $\frac{2\omega^2}{\nu}$, and so for two sequences of prediction errors $\mathrm{error}_{\bm{s}}$ and $\mathrm{error}_{\bm{s}'}$, we have
\begin{equation}\label{eqn:prooftailratio1}
l(\mathrm{error}_{\bm{s}})-l(\mathrm{error}_{\bm{s}'})\leq \sum_{t=1}^T\frac{2\omega^2}{\nu}\mathds{1}(\mathrm{error}_{\bm{s},t}\neq \mathrm{error}_{\bm{s}',t}),
\end{equation} where $\mathds{1}$ is the indicator function.

With the inequality \eqref{eqn:prooftailratio1}, we can use Theorem 2.1 in \cite{Markov_concentration_paulin2015concentration} for the random sequence $\mathrm{error}_{\bm{s}}$. We have for any $\epsilon>0$
\begin{equation}
\mathbb{P}\left( \left|l(\mathrm{error}_{\bm{s}})-\mathbb{E}\left[l(\mathrm{error}_{\bm{s}})\right]\right|\geq \epsilon\right)\leq 2\exp(\frac{-2\epsilon^2}{b_3\|\Gamma\|^2T(\frac{2\omega^2}{\nu})^2}),
\end{equation}
where $\Gamma$ is the mixing matrix of the Marton coupling for a partition of the random sequence $\mathrm{error}_{\bm{s}}$, as is defined in Definition 2.1 in \cite{Markov_concentration_paulin2015concentration} and $b_3$ is a constant related to the size of the partition (Definition 2.3 in \cite{Markov_concentration_paulin2015concentration}).

Let $2\exp(\frac{-2\epsilon^2}{b_3\|\Gamma\|^2T(\frac{2\omega^2}{\nu})^2})=\delta'\in(0,1)$ and absorb the constants by $O$ notation. Then, we have with probability at least $1-\delta',\delta'\in(0,1)$,
\begin{equation}\label{eqn:prooftailcost1}
    l(h_{W},\bm{s})\leq \mathbb{E}\left[ l(h_{W},\bm{s})\right]+O\left(\frac{\omega^2 \sqrt{T}}{\nu}\|\Gamma\|\sqrt{\frac{1}{2}\log(\frac{2}{\delta'})}\right).
\end{equation}

Since $\mathbb{E}\left[ L_{R_{\lambda},\mu}(h_{W},\bm{s})\right]=\mu\mathbb{E}\left[l(h_{W},\bm{s})\right]+(1-\mu)\mathbb{E}\left[\mathrm{cost}(R_{\lambda}\circ h_{W},\bm{s})\right]$, by Eqn.~\eqref{eqn:proofgeneralization} and Eqn.~\eqref{eqn:empiricaldisdisc}, we have with probability $1-\delta,\delta\in(0,1)$,
\begin{equation}\label{eqn:prooftailcost}
\begin{split}
    \mathbb{E}&\left[l(h_{\hat{W}},\bm{s})\right]\leq \frac{1}{\mu}\mathbb{E}\left[ L_{R_{\lambda},\mu}(h_{\hat{W}},\bm{s})\right]\\
    &\leq \frac{1}{\mu}\mathbb{E}\left[L_{R_{\lambda},\mu}(h_{W^*},\bm{s})\right]+\frac{1}{\mu}\mathcal{E}_{\mathcal{D}} +O\left( D(\mathbb{P},\mathbb{P}_{\mathcal{D}})\right)\\
    &=\mathbb{E}\left[l(h_{W^*},\bm{s})\right]+\frac{1-\mu}{\mu}\mathbb{E}\left[\mathrm{cost}(R_{\lambda}\circ h_{W^*},\bm{s})\right]+\frac{1}{\mu}\mathcal{E}_{\mathcal{D}} +O\left( \sqrt{\frac{\log(1/\delta)}{|\mathcal{D}|}}+D\left(\mathbb{P}, \mathbb{P}'\right)\right).
    \end{split}
\end{equation}
Substituting \eqref{eqn:prooftailcost} into \eqref{eqn:prooftailcost1}, we have with probabiltiy at least $1-\delta-\delta', \delta>0,\delta'>0,(\delta+\delta')\in(0,1)$,
\begin{equation}
\begin{split}
    l(h_{\hat{W}},\bm{s})\leq& \mathbb{E}\left[l(h_{W^*},\bm{s})\right]+\frac{1-\mu}{\mu}\mathbb{E}\left[\mathrm{cost}(R_{\lambda}\circ h_{W^*},\bm{s})\right]+\frac{1}{\mu}\mathcal{E}_{\mathcal{D}} \\
    &+O\left( \sqrt{\frac{\log(1/\delta)}{|\mathcal{D}|}}+D\left(\mathbb{P}, \mathbb{P}'\right)\right)+O\left(\frac{\omega^2 \sqrt{T}}{\nu}\|\Gamma\|\sqrt{\frac{1}{2}\log(\frac{2}{\delta'})}\right).
    \end{split}
\end{equation}
By the definition of $l(h_{\hat{W}},\bm{s})=\mathrm{relu}\left(\frac{\sum_{t=1}^T\|\tilde{x}_t-x_t^*\|^2}{\mathrm{cost}(\pi^*,\bm{s})}-\bar{\rho}\right)$ and the choices of $\delta$ and $\delta'$, we have with probabiltiy at least $1-\delta, \delta\in(0,1)$,
\begin{equation}
\begin{split}
    \frac{\sum_{t=1}^T\|\tilde{x}_t-x_t^*\|^2}{\mathrm{cost}(\pi^*,\bm{s})}\leq \bar{\rho}&+\mathbb{E}\left[l(h_{W^*},\bm{s})\right]+\frac{1-\mu}{\mu}\mathbb{E}\left[\mathrm{cost}(R_{\lambda}\circ h_{W^*},\bm{s})\right]+\frac{1}{\mu}\mathcal{E}_{\mathcal{D}} \\
    &+O\left( \sqrt{\frac{\log(2/\delta)}{|\mathcal{D}|}}+D\left(\mathbb{P}, \mathbb{P}'\right)\right)+O\left(\frac{\omega^2 \sqrt{T}}{\nu}\|\Gamma\|\sqrt{\frac{1}{2}\log(\frac{4}{\delta})}\right)=\rho_{\mathrm{tail}}.
    \end{split}
\end{equation}
Thus, by Theorem~\ref{thm:cr_robd}, we have with probability at least $1-\delta, \delta\in(0,1)$,
$$\frac{\mathrm{cost}(R_{\lambda}\circ h_{\hat{W}},\bm{s})}{\mathrm{cost}(\pi^*,\bm{s})}\leq 1 + \frac{1}{2}
\left[\sqrt{(1+\frac{ \beta }{m}\theta)^2 + \frac{4 \beta^2}{m \alpha}} - \left(1 + \frac{\beta}{m}\theta)\right)\right] + \frac{\beta}{2} \theta\cdot\rho_{\mathrm{tail}}.$$
\end{proof} 
\end{document}